\definecolor{light-gray}{gray}{0.85}
\newtheorem{theorem}{Theorem}
\newtheorem{lemma}[theorem]{Lemma}
\newtheorem{corollary}[theorem]{Corollary}
\newtheorem{remark}[theorem]{Remark}
\newtheorem{proposition}[theorem]{Proposition}
\theoremstyle{definition}
\newtheorem{definition}[theorem]{Definition}
\newtheorem{assumption}{Assumption}
\colorlet{linkequation}{blue}
\definecolor{light-gray}{gray}{0.85}
\colorlet{linkequation}{blue}
\newcommand{\argmin}{\mathop{\mathrm{argmin}}}
\newcommand{\argmax}{\mathop{\mathrm{argmax}}}
\renewcommand{\det}{\mathrm{det}}
\newcommand{\trans}{^{\top}}
\newcommand{\proj}{\mathcal{P}}
\newcommand{\E}{\mathbb{E}}
\renewcommand{\Pr}{\mathbb{P}}
\newcommand{\Var}{\text{Var}}
\newcommand{\up}[1]{\overline{#1}}
\newcommand{\low}[1]{\underline{#1}}
\newcommand{\paren}[1]{{\left( #1 \right)}}
\newcommand{\Tcal}{\mathcal{T}}
\newcommand{\Xcal}{\mathcal{X}}
\newcommand{\Vcal}{\mathcal{V}}
\newcommand{\cC}{\mathcal{C}}
\newcommand{\fmu}{\mu}
\newcommand{\fres}{\nu}
\newcommand{\N}{\mathbb{N}}
\newcommand{\R}{\mathbb{R}}
\newcommand{\D}{\mathbb{D}}
\newcommand{\cS}{\mathcal{S}}
\newcommand{\cA}{\mathcal{A}}
\newcommand{\cF}{\mathcal{F}}
\newcommand{\cH}{\mathcal{H}}
\newcommand{\cT}{\mathcal{T}}
\newcommand{\Fcal}{\mathcal{F}}
\newcommand{\Dcal}{\mathcal{D}}
\newcommand{\Acal}{\mathcal{A}}
\newcommand{\Scal}{\mathcal{S}}
\newcommand{\Lcal}{\mathcal{L}}
\newcommand{\Gcal}{\mathcal{G}}
\newcommand{\Zcal}{\mathcal{Z}}
\newcommand{\Ocal}{\mathcal{O}}
\newcommand{\Ecal}{\mathcal{E}}
\newcommand{\Bcal}{\mathcal{B}}
\newcommand{\reg}{ {\mathrm{Reg}}}
\newcommand{\Mcal}{\mathcal{M}}
\newcommand{\unif}{\mathrm{Uniform}}
\newcommand{\golf}{\textsc{Golf}}
\newcommand{\golfmg}{\textsc{Golf\_with\_Exploiter}}
\newcommand{\golfbr}{\textsc{Compute\_Exploiter}}
\newcommand{\samp}{\textsc{Sampling}}
\newcommand{\dedim}{\dim_\textrm{DE}}
\newcommand{\BEdim}{\dim_\textrm{BE}}
\newcommand{\dirac}{\Delta}
\newcommand{\Ffrak}{\mathfrak{F}}
\newcommand{\Ccal}{\mathcal{C}}
\newcommand{\mg}{{\rm MG}}
\newcommand{\cG}{\mathcal{G}}
\newcommand{\RealErr}{\varepsilon_{\rm real}}
\newcommand{\CompErr}{\varepsilon_{\rm comp}}
\newcommand{\pg}{\proj_\Gcal}
\newcommand{\pgh}{\proj_{\Gcal_h}}
\newcommand{\pf}{\proj_\Fcal}
\newcommand{\pfh}{\proj_{\Fcal_h}}
\begin{document}

\title{The Power of Exploiter: Provable Multi-Agent RL\\ in Large State Spaces}

\author{
 Chi Jin\thanks{Princeton University. Email: \texttt{chij@princeton.edu}}
 \and
  Qinghua Liu\thanks{Princeton University. Email: \texttt{qinghual@princeton.edu}}
  \and
  Tiancheng Yu\thanks{MIT. Email: \texttt{yutc@mit.edu}}
}

\date{June 6, 2021; Revised: October 6, 2021}

\maketitle

\begin{abstract}

Modern reinforcement learning (RL) commonly engages practical problems with large state spaces, where function approximation must be deployed to approximate either the value function or the policy. 
While recent progresses in RL theory address a rich set of RL problems with general function approximation, such successes are mostly restricted to the single-agent setting. 
It remains elusive how to extend these results to multi-agent RL, 
especially in the face of new game-theoretical challenges.
This paper considers two-player zero-sum Markov Games (MGs). We propose a new algorithm that can provably find the Nash equilibrium policy using a polynomial number of samples, for any MG with low \emph{multi-agent Bellman-Eluder dimension}---a new complexity measure adapted from its single-agent version \citep{jin2021bellman}. A key component of our new algorithm is the exploiter, which facilitates the learning of the main player by deliberately exploiting her weakness. Our theoretical framework is generic, which applies to a wide range of models including but not limited to tabular MGs, MGs with linear or kernel function approximation, and MGs with rich observations.





\end{abstract}

\tableofcontents


\section{Introduction}
\label{sec:intro}
Multi-agent reinforcement learning (MARL) systems have recently achieved significant success in many AI challenges including 
the game  Go \citep{silver2016mastering,silver2017mastering}, Poker \citep{brown2019superhuman}, real-time strategy games \citep{vinyals2019grandmaster, openaidota}, decentralized controls or multiagent robotics systems \citep{brambilla2013swarm}, autonomous driving \citep{shalev2016safe}, as well as complex social scenarios such as hide-and-seek~\citep{baker2020emergent}.
Two crucial components that contribute to these successes are \emph{function approximation} and \emph{self-play}. Function approximation is frequently used in modern applications with large state spaces, where either the value function or the policy is approximated by parametric function classes, which are typically deep neural networks. Meanwhile, self-play enables the learner to improve by playing against itself instead of traditional human experts.

Despite the empirical success of MARL, existing theoretical guarantees in MARL only apply to the basic settings where the value functions can be represented by either tables (in cases where the states and actions are discrete) \citep{bai2020provable,bai2020near,liu2020sharp} or linear maps \citep{xie2020learning,chen2021almost}. While a recent line of works \citep{jiang2017contextual,wang2020provably,yang2020bridging, jin2021bellman,du2021bilinear} significantly advance our understanding of RL with general function approximation, and provide sample-efficient guarantees for RL with kernels, neural networks, rich observations, and several special cases of partial observability, they are all restricted to the single-agent setting. Distinct from  single-agent RL, each agent in MARL are facing not only the unknown environment, but also the opponents that can constantly adapt their strategies in response to the behavior of the learning agent. This additional game-theoretical feature  makes it challenging to extend the single-agent general function approximation results to the multi-agent setting.
This motivates us to ask the following question:
\begin{center}
\textbf{Can we design sample-efficient MARL algorithms for general function approximation?}
\end{center}
By ``sample-efficient'', we mean the algorithms provably learn within a polynomial number of samples that is independent of the number of states.
This paper provides the first positive answer to this question in the context of two-player zero-sum Markov Games (MGs) \citep{shapley1953stochastic,littman1994markov}. In particular, we make the following contributions:
\begin{itemize}
	\item We design a new self-play algorithm for MARL---\golfmg. Our algorithm maintains a main player and an exploiter, where the exploiter facilitates the learning of the main player by deliberately exploiting her weakness. Our algorithm features optimism, and can simultaneously address multi-agent, general function approximation, as well as the trade-off between exploration and exploitation.
	\item We introduce a new general complexity measure for MARL---\emph{Multi-agent Bellman Eluder (BE) dimension}, which is adapted from its single-agent version \citep{jin2021bellman}. We prove that our algorithm can learn the Nash equilibrium policies of any MARL problem with low multi-agent BE dimension, using a number of samples that is polynomial in all relevant parameters, but independent of the size of the state space. We further provide an online guarantee for our algorithm when facing against adversarial opponents.
	\item We show that our theoretical framework applies to a wide range of models including  tabular MGs, MGs with linear or kernel function approximation, and MGs with rich observations. Formally, we prove that all examples above admit low multi-agent BE dimension. 
	Particularly, for linear MGs,  we obtain a $\tilde{\Ocal}(H^2d^2/\epsilon^2)$ sample complexity bound for finding $\epsilon$-approximate Nash policies, which improves upon the cubic dependence on dimension $d$ in  \cite{xie2020learning}. 
\end{itemize}

We remark that our algorithm is sample-efficient but \emph{not} computationally efficient. Designing computationally efficient algorithms in the context of general function approximation is an open problem even in the single-agent setting \citep{jiang2017contextual,jin2021bellman,du2021bilinear}, which we left as an interesting topic for future research.

\textbf{Technical Challenges} One of the most important questions in the design of RL algorithms is the choice of \emph{behavior policies}---the policies that are used to collect samples. For a given general function class $\cF$, the single-agent RL algorithms simply use behavior policy in the form of $\pi_f$, which is the greedy policy with respect to certain optimistic value function $f \in \cF$, i.e., $\pi_f(s) = \argmax_a f(s, a)$. The choice becomes more sophisticated in the two-player setting. All existing algorithms that can perform provably efficient exploration \citep{bai2020provable,bai2020near,liu2020sharp,xie2020learning,chen2021almost} choose the behavior policy as the general-sum Nash equilibrium or coarse correlated equilibrium of two value functions $f, g \in \cF$, which are the upper and lower bound of the true value functions respectively. It turns out that this approach is only effective under \emph{optimistic closure}---a strong assumption which is true for tabular and several special linear settings, but does not hold for many practical applications (see Appendix \ref{sec:technical} for more discussions). In contrast, our exploiter framework provides a new approach for the theory community to 
choose behavior policies---the main agent plays the optimistic Nash policy based on the existing knowledge while the exploiter focuses on exploiting the main agent. Our approach does not rely on strong assumptions such as optimistic closure.  We hope our exploiter framework could facilitate the future development of provable MARL algorithms.


\subsection{Related Works}
\label{sec:related}


There is an extensive literature on empirical  MARL, in distributed, cooperative, and competitive settings \cite[see, e.g.,][and the references therein]{lowe2017multi,rashid2018qmix,vinyals2019grandmaster,berner2019dota,li2020multi,yu2021surprising}. Due to space limit, we focus on reviewing the theoretical works in this section.


\paragraph{Markov Game.} Markov Game (MG), also known as stochastic game \citep{shapley1953stochastic}, is a popular model in multi-agent RL \citep{littman1994markov}. Early works have mainly focused on finding Nash equilibria of MGs with known transition and reward \citep{littman2001friend,hu2003nash,hansen2013strategy,wei2020linear}, or under strong reachability conditions such as simulators \citep{wei2017online,jia2019feature,sidford2020solving,zhang2020model,wei2021last} where exploration is not needed.

A recent line of works provide non-asymptotic guarantees for learning two-player zero-sum tabular MGs, in the setting that requires strategic exploration. \cite{bai2020provable} and \cite{xie2020learning} develop the first provably-efficient learning algorithms in MGs based on optimistic value iteration.  \cite{bai2020near} and  \cite{liu2020sharp} improve upon these works and achieve best-known sample complexity for model-free and model-based methods, respectively. Several extensions are also studied, including multi-player general-sum MGs \citep{liu2020sharp}, unknown games \footnote{This terminology stems from game theory, which means the actions of the opponents are not observed.} \citep{tian2020provably}, vector-valued MGs \citep{yu2021provably}, etc.

Beyond tabular MGs, \cite{xie2020learning} and \cite{chen2021almost} also study learning  MGs with linear function approximation. Their techniques heavily rely on ``optimistic closure'' (see Appendix \ref{sec:technical} for more details), which can not be directly extended to the general function approximation setting with weak assumptions. To our best knowledge, this paper provides the first positive result on learning MGs with general function approximation.


\paragraph{Markov Decision Process.} There has been a long line of research studying Markov Decision Process (MDP), which can be viewed as a single-agent version of Markov Game. Tabular MDP has been studied thoroughly in recent years \citep{brafman2002r,jaksch2010near,dann2015sample,azar2017minimax,zanette2019tighter,jin2018q,zhang2020almost}. Particularly, in the episodic setting, the minimax regret or sample complexity is achieved by both model-based \citep{azar2017minimax} and model-free \citep{zhang2020almost} methods, up to logarithmic factors. When the state space is large, the tabular formulation of RL is impractical and function approximation is necessary. The most studied case is linear function approximation \citep{jin2020provably,wang2019optimism,cai2019provably,zanette2020learning,zanette2020provably,agarwal2020flambe,neu2020unifying}.

For general function approximation, there are two common measures of complexity: Eluder dimension \citep{osband2014model,wang2020provably} and Bellman rank \citep{jiang2017contextual}, which are unified by a more generic notion---Bellman-Eluder dimension \citep{jin2021bellman}. Recently, \cite{du2021bilinear} develops a new problem class termed bilinear class, which can be considered as an infinite-dimensional extension of low Bellman rank class. Low complexity RL problems under the above criteria are statistically tractable. Polynomial sample complexity guarantees \citep{jiang2017contextual,sun2019model,jin2021bellman,foster2020instance},  or even regret guarantees \citep{wang2019optimism,dong2020root,jin2021bellman,wang2020provably} are established, under additional completeness and realizability conditions. However, most of the above algorithms are not computationally efficient and it 
remains open how to design computationally efficient algorithms for these settings.

\paragraph{Extensive-form games}
Finally, we remark that there is another long line of research on MARL based on the model of extensive-form games (EFG) \cite[see, e.g.,][]{koller1992complexity,gilpin2006finding,zinkevich2007regret,brown2018superhuman,brown2019superhuman,celli2020no}. Results on learning EFGs do not directly imply results for learning MGs, since EFGs are naturally tree-structured games, which can not efficiently represent MGs---graph-structured games where a state at the $h^{\text{th}}$ step can be the child of multiple states at the $(h-1)^{\text{th}}$ step.

\section{Preliminaries}

Markov Games (MGs) are the generalization of standard  Markov Decision Processes into the multi-player setting, where each player aims to maximize her own utility. 
In this paper, we consider \emph{two-player zero-sum episodic} MG, formally denoted by $\mg(H,\Scal,\Acal,\Bcal,\Pr,r)$, where $H$ is the number of steps per episode, $\Scal$ is the state space, $\Acal$ and $\Bcal$ are the action space for the max-player and the min-player respectively, $\Pr=\{\Pr_h\}_{h\in[H]}$  is the collection of transition measures with $\Pr_h(\cdot\mid s,a,b)$ defining the distribution of the next state after taking action $(a,b)$ at state $s$ and step $h$, $r=\{r_h\}_{h\in[H]}$ is the collection of reward functions with $r_h(s,a,b)$ equal to the reward   received after  taking action $(a,b)$ at state $s$ and step $h$.\footnote{In this paper, we study deterministic reward functions for cleaner presentation. All our results  immediately generalize to the setting with stochastic reward.}
 Throughout this work, we assume the reward is nonnegative, and $\sum_{h=1}^H r_h(s_h,a_h,b_h)\le 1$  for all possible sequence $(s_1,a_1,b_1,\ldots,s_H,a_H,b_H)$.

 Each episode starts from a \emph{fixed initial state} $s_1$. At each step $h\in[H]$, two players observe $s_h\in\Scal$ and choose their  own actions $a_h\in\Acal$ and $b_h\in\Bcal$ simultaneously. Then, both players observe the actions of their opponents and receive reward $r_h(s_h,a_h,b_h)$. After that, the environment will transit to $s_{h+1}\sim \Pr_h(\cdot\mid s_h,a_h,b_h)$. Without loss of generality, we assume the environment always transit to a terminal state $s_{\rm end}$ at step $H+1$ which marks the end of this episode.

\paragraph{Policy, value function} A Markov policy $\mu$ of the max-player is a collection of vector-valued functions $\{\mu_h:\ \Scal\rightarrow\Delta_\Acal\}_{h\in[H]}$, each mapping a state to a distribution in the probability simplex over $\Acal$. 
We use the $a^{\rm th}$ coordinate of $\mu_h(s)$  to refer to the probability of taking action $a$ at state $s$ and step $h$. 
Similarly, we can define a Markov policy $\nu$ over $\Bcal$ for the min-player.

We denote $V^{\mu,\nu}_h:\Scal\rightarrow\R$ as the  value function for policy $\mu$ and $\nu$ at step $h$, with $V^{\mu,\nu}_h(s)$ equal to the expected cumulative reward received under policy $(\mu,\nu)$, starting from state $s$ at step $h$ until the end of the episode:
\begin{equation*}
	V_h^{\mu,\nu}(s):= \E_{\mu\times\nu}\left[\sum_{t=h}^H r_{t}(s_{t},a_{t},b_{t}) \mid s_h = s \right].
\end{equation*}
Similarly, we denote $Q^{\mu,\nu}_h:\Scal\times\Acal\times \Bcal\rightarrow\R$ as the action-value function for policy $\mu$ and $\nu$ at step $h$:
\begin{equation*}
 	Q_h^{\mu,\nu}(s,a,b):= \E_{\mu\times\nu}\left[\sum_{t=h}^H r_{t}(s_{t},a_{t},b_{t}) \mid s_h = s,a_h=a,b_h=b \right].
\end{equation*}

With slight abuse of notation, we use $\Pr_h$ as an operator so that $[\Pr_h V](s,a,b):=\E_{s'\sim \Pr_h(\cdot\mid s,a,b)} V(s')$ for any value function $V$. We also define $[\D_\pi Q](s):=\E_{(a,b)\sim \pi(\cdot,\cdot\mid s)} Q(s,a,b)$ for any policy $\pi$ and action-value function $Q$. By the definition of value functions, we have the Bellman equations:
\begin{equation*}
Q_h^{\mu,\nu}(s,a,b) = [r_h+\Pr_h V_{h+1}^{\mu,\nu}](s,a,b), \quad V_h^{\mu,\nu}(s)=[\D_{\mu_h\times\nu_h} Q_h^{\mu,\nu}](s)
\end{equation*}
for all $(s,a,b,h)\in\Scal\times\Acal\times\Bcal\times[H]$.
And for step $H+1$, we have $V_{H+1}^{\mu,\nu}\equiv 0$.

\paragraph{Best response, Nash equilibrium} 
For any policy of the max-player $\mu$, there exists a \emph{best response} policy of the min-player $\nu^\dagger(\mu)$ satisfying 
$V_h^{\mu,\nu^\dagger(\mu)} = \inf_\nu V_h^{\mu,\nu}(s)$ for all $(s,h)$.
For cleaner notation, we denote $V_h^{\mu,\dagger} := V_h^{\mu,\nu^\dagger(\mu)}$. 
Similarly, we can define $\mu^\dagger(\nu)$ and $V_h^{\dagger,\nu}$. It is known \citep{filar2012competitive} that there exists policies $\mu^\star,\nu^\star$ that are optimal against the best responses of the oponents, i.e., 
$$
  V_h^{\mu^\star,\dagger}(s)  = \sup_\mu V_h^{\mu,\dagger}(s), \quad 
V_h^{\dagger,\nu^\star}(s)  = \inf_\nu V_h^{\dagger,\nu}(s), \quad {\rm for\ all\ } (s,h).
$$
We call these optimal policies $(\mu^\star,\nu^\star)$ a Nash equilibrium of the Markov game, which are further known to satisfy the following minimax equation:
$$
  \sup_\mu\inf_\nu V_h^{\mu,\nu}(s)
=V_h^{\mu^\star,\nu^\star}(s)
= \inf_\nu \sup_\mu V_h^{\mu,\nu}(s)
, \quad {\rm for\ all\ } (s,h).
$$
We call the value functions of $(\mu^\star,\nu^\star)$  Nash value functions, and abbreviate $V_h^{\mu^\star,\nu^\star}$  as $V_h^\star$ and $Q_h^{\mu^\star,\nu^\star}$ as $Q_h^\star$. 
Intuitively speaking, in a Nash equilibrium, no player can benefit by unilaterally deviating from her own policy.

\paragraph{Learning objective} We say a policy $\mu$ of the max-player is an \textbf{$\epsilon$-approximate Nash} policy if $V_1^{\mu,\dagger}(s_1) \ge V_1^\star(s_1) - \epsilon$. Suppose an agent interacts with the environment for $K$ episodes, and denote by $\mu^k$ the policy executed by the max-player in the $k^{\rm th}$ episode. Then the (cumulative) regret of the max-player is defined as
\begin{equation*}
 \reg(K): = \sum_{k=1}^K \left[V^{\star}_1(s_1) - V_1^{\mu^k,\dagger}(s_1)\right].
\end{equation*}
The goal of reinforcement learning is to learn $\epsilon$-approximate Nash policies or to achieve sublinear regret.
In this paper, we focus on learning the Nash policies of the max-player. By symmetry, 
the definitions and techniques directly extend to learning the Nash policies of the min-player.

\subsection{Function Approximation} 
This paper considers reinforcement learning with value function approximation. 
Formally, the learner is provided with a function class $\Fcal=\Fcal_1\times \dots \times \Fcal_H$ where each $\Fcal_h \subseteq (\Scal\times\Acal\times\Bcal\rightarrow [0,1])$ consists of candidate functions to approximate some target action-value functions at step $h$. Since there is no reward at step $H+1$, we set $f_{H+1}\equiv 0$ for any $f\in\Fcal$ without loss of generality.

\paragraph{Induced policy and value function}
For any value function $f\in\Fcal$, we denote by $\mu_f  = (\mu_{f, 1}, \ldots, \mu_{f, H})$ the Nash policy of the max-player induced by $f$, where 
\begin{equation*}
\mu_{f, h}(s) = \argmax_{\mu\in\Delta_\Acal} \min_{\nu\in\Delta_\Bcal}\mu\trans f_h(s,\cdot,\cdot) \nu \quad \mbox{for \ all \ } (s,h).
\end{equation*}
Moreover, we denote by $V_f$ the Nash value function induced by $f$,  so that
\begin{equation*}
	V_{f, h}(s) = \max_{\mu \in\Delta_\Acal} \min_{\nu\in\Delta_\Bcal}\mu\trans f_h(s,\cdot,\cdot) \nu\quad \mbox{for \ all \ } (s,h).
\end{equation*}
We further generalize the concept of induced value function by additionally introducing a superscript $\mu$ (which represents a Markov policy of the max-player) and define $V^\mu_f$ so that
\begin{equation*}
 	V_{f, h}^\mu(s) =  \min_{\nu\in\Delta_\Bcal}\mu_h(s)\trans f_h(s,\cdot,\cdot) \nu\quad \mbox{for \ all \ } (s,h).
\end{equation*}
Based on $V_f$ and $V_f^\mu$, two types of Bellman operators are defined as below
\begin{equation}
\left\{
	\begin{aligned}
		(\Tcal_{h} f)(s,a,b)  :=& r_h(s,a,b) + \E_{s'\sim\Pr_h(\cdot\mid s,a,b)}V_{f, h+1}(s')\\
	(\Tcal_{h}^{\mu} f)(s,a,b):=& r_h(s,a,b) + \E_{s'\sim\Pr_h(\cdot\mid s,a,b)}V^{\mu}_{f,h+1}(s')
	\end{aligned}\right.
\end{equation}
which naturally generalize the optimal Bellman operator and the policy Bellman operator from MDPs to MGs. In the remainder of this paper, we will refer to them as the Nash Bellman operator and $\mu$-Bellman operator, respectively.

It is known that RL with function approximation is in general  statistically intractable without further assumptions (see, e.g., hardness results in \cite{krishnamurthy2016pac,weisz2020exponential}). 
Below, we present two assumptions that are generalizations of commonly adopted assumptions  in MDP literature. 

\begin{assumption}[$\RealErr$-Realizability] \label{as:realizable}
    For any $f \in \cF$ and $h\in[H]$, there exist $Q_h, Q'_h \in \cF_h$ s.t.
    $\|Q_h^\star - Q_h\|_\infty \le \RealErr$ and $\|Q_h^{\mu_f, \dagger} -Q'_h\|_\infty  \le \RealErr$.
\end{assumption}

Realizability requires the function class to be well-specified so that the value function of the Nash policy and the value function of any induced policy $\mu_f$ against its best response lie inside $\Fcal$ approximately.

Let $\Gcal=\Gcal_1\times \dots \times \Gcal_H$ be an auxiliary function class provided to the learner  where each $\Gcal_h \subseteq (\Scal\times\Acal\times\Bcal\rightarrow [0,1])$.
Completeness requires the auxiliary function class $\Gcal$ to be rich enough so that applying Bellman operators to any function in the primary function class $\Fcal$ will end up in $\Gcal$ approximately.
\begin{assumption}[$\CompErr$-Completeness] \label{as:complete}
    For any $f,f' \in \cF$ and $h\in[H]$, there exist $Q_h, Q'_h \in \cG_h$ s.t.
    $\|\Tcal_{h} f_{h+1}- Q_h\|_\infty \le \CompErr$ and $\|\Tcal_{h}^{\mu_f}  f'-Q'_h\|_\infty  \le \CompErr$.
\end{assumption}

In the single-agent setting \citep{jin2021bellman}, the realizability and completeness conditions are stated for the special case $\RealErr = \CompErr = 0$, while this paper considers a strictly more general condition. This extension makes it possible to handle the misspecified setting, i.e.,  $\Fcal$ and $\Gcal$ only satisfy these conditions approximately, but not exactly.
Even when $\Fcal$ and $\Gcal$ satisfy these conditions exactly, the above extension can still help us avoid some technical redundancy caused by the possible infinite cardinality of $\Fcal$ and $\Gcal$, by only considering their finite coverings. To be precise,

\begin{definition}[$\epsilon$-Covering]
    The function class $\tilde{\cF} \subseteq \cF$ is an $\epsilon$-covering of $\cF$ if for any $f \in \cF$, there exist $\tilde{f} \in \tilde{\cF}$ s.t. $\|f_h-\tilde{f}_h\|_{\infty} \le \epsilon$, for any step $h$.
\end{definition}

The following lemma implies that we can always restrict our attention to finite coverings, which also  approximately satisfy the  realizability and completeness conditions up to satisfying accuracy. 

\begin{lemma}
	\label{lem:finite-cover}
    If $\cF$ and $\cG$ satisfy $\RealErr$-Realizability and $\CompErr$-Completeness, and $\tilde{\cF}$ and $\tilde{\cG}$ are $\epsilon$-coverings of $\cF$ and $\cG$ respectively, then $\tilde{\cF}$ and $\tilde{\cG}$ satisfy $(\RealErr+\epsilon)$-Realizability and $(\CompErr+\epsilon)$-Completeness.
\end{lemma}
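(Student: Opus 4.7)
The plan is to prove both the Realizability and Completeness conclusions by a direct triangle-inequality argument, composing the approximations guaranteed by the original assumptions on $\cF,\cG$ with the $\epsilon$-approximations guaranteed by the covering property.

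For the Realizability claim, I would start by fixing an arbitrary $\tilde f \in \tilde\cF$ and a step $h\in[H]$. Since $\tilde\cF \subseteq \cF$, we also have $\tilde f \in \cF$, so Assumption \ref{as:realizable} yields some $Q_h, Q'_h \in \cF_h$ with $\|Q_h^\star - Q_h\|_\infty \le \RealErr$ and $\|Q_h^{\mu_{\tilde f},\dagger} - Q'_h\|_\infty \le \RealErr$. Note this uses the fact that $\mu_{\tilde f}$ is a well-defined induced policy, which does not require $\tilde f$ to live in the smaller class. Now since $\tilde\cF$ is an $\epsilon$-covering of $\cF$, there exist $\tilde Q_h, \tilde Q'_h \in \tilde\cF_h$ with $\|Q_h - \tilde Q_h\|_\infty \le \epsilon$ and $\|Q'_h - \tilde Q'_h\|_\infty \le \epsilon$. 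Triangle inequality then gives $\|Q_h^\star - \tilde Q_h\|_\infty \le \RealErr + \epsilon$ and $\|Q_h^{\mu_{\tilde f},\dagger} - \tilde Q'_h\|_\infty \le \RealErr + \epsilon$, which is exactly $(\RealErr+\epsilon)$-Realizability for $\tilde\cF$.

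For the Completeness claim, I would fix arbitrary $\tilde f,\tilde f' \in \tilde\cF \subseteq \cF$ and $h\in[H]$. Assumption \ref{as:complete} applied to the pair $(\tilde f,\tilde f')$ in $\cF$ produces $Q_h, Q'_h \in \cG_h$ with $\|\Tcal_h \tilde f_{h+1} - Q_h\|_\infty \le \CompErr$ and $\|\Tcal_h^{\mu_{\tilde f}}\tilde f' - Q'_h\|_\infty \le \CompErr$. Invoking the $\epsilon$-covering property of $\tilde\cG$ over $\cG$, we then pick $\tilde Q_h, \tilde Q'_h \in \tilde\cG_h$ within $\epsilon$ of $Q_h,Q'_h$ respectively. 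One more triangle inequality gives the claimed $(\CompErr+\epsilon)$ bounds.

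There is essentially no obstacle here; the statement is a bookkeeping lemma. The only subtlety worth flagging is that the realizability guarantee for $\tilde\cF$ must be checked against $Q_h^{\mu_{\tilde f},\dagger}$ (i.e.\ the policy induced by the element of the covering, not of the original class), and the completeness guarantee must be checked with Bellman operators that depend on $\tilde f$ via $\mu_{\tilde f}$ and $V_{\tilde f}$; but the original assumptions hold for \emph{all} $f,f'\in\cF$, so they apply to $\tilde f,\tilde f'$ as a special case, and no additional Lipschitz or approximation argument about the Bellman operator or the induced policy is needed.
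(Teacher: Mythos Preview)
Your proposal is correct; the argument is exactly the natural one: use $\tilde\cF\subseteq\cF$ to invoke the original assumptions at $\tilde f,\tilde f'$, then use the covering property of $\tilde\cF$ (respectively $\tilde\cG$) to replace the witnesses $Q_h,Q'_h$ by nearby elements of the covering, and finish with the triangle inequality. The paper states this lemma without proof, treating it as a bookkeeping observation, so there is nothing further to compare.
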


As a result, we only need to consider finite function class $\cF$ and $\cG$ in Section \ref{sec:main-results}. Some of the examples introduced in Section~\ref{sec:example} involve infinite function classes. 
To address this inconsistency, we can first compute the finite coverings of $\cF$ and $\cG$, which satisfy the approximate realizability and completeness conditions by Lemma~\ref{lem:finite-cover}, and then invoke the algorithms in Section \ref{sec:main-results} with the coverings instead of the original function classes. 
It is straightforward to verify that by replacing the cardinality with the covering number, all theoretical guarantees derived in Section \ref{sec:main-results} still hold.

Finally, we define the $L^{\infty}$ Projection Operator, which will be frequently used in our technical statements and proofs.
\begin{definition}[$L^{\infty}$ Projection Operator]
For any function class $\cF$ and function $g$, $\pf(g) = \argmin_{f \in \cF}\|f-g\|_{\infty}$.
\end{definition}

\section{Main Results}
\label{sec:main-results}

In this section, we present an optimization-based algorithm \golfmg, and its theoretical guarantees for any multi-agent RL problem with low Bellman-Eluder dimension.

\subsection{Algorithm}

We describe \golfmg\ in Algorithm \ref{alg:golfmg}.
At a high level, \golfmg\ follows the principle of optimism in face of uncertainty, and maintains a global confidence set for Nash value function $Q^{\star}$ based on local constraints. 
It extends the single-agent 
\golf\ algorithm \citep{jin2021bellman} by introducing an exploiter subroutine---\golfbr, which facilitates the learning of the main player (the max player) by continuously exploiting her weakness.   

In each episode, \golfmg\ performs four main steps:
\begin{enumerate}
	\item Optimistic planning (Line~\ref{line:optimistic-planning}): compute the value function $f^k$ that induces the most optimistic Nash value from the current confidence set $\cC$, and choose $\mu^k$ to be its induced Nash policy of the max-player.  
	\item Finding exploiter (Line~\ref{line:compute-exploiter}): compute an approximate best response policy $\nu^k$ for the min-player, by invoking the subroutine \golfbr\ on historical data $\Dcal$ and the policy of the max-player $\mu^k$. 
	\item Data collection (Line~\ref{line:sample}-\ref{line:augment}): we provide two different options for the \samp\ procedure
	\begin{enumerate}[label=(\Roman*)]
		\item Roll a single trajectory by  following $(\mu^k,\nu^k)$.
		\item For each $h$, roll a trajectory by following $(\mu^k,\nu^k)$ in the first $h-1$ steps and take uniformly random action at step $h$. This option will roll $H$ trajectories each time, but in the $h^{\rm th}$ trajectory, only the $h^{\rm th}$ transition-reward tuple is augmented into $\Dcal_h$.
	\end{enumerate}
	\item Confidence set update (Line~\ref{line:confidence}): update confidence set $\cC$ using the renewed dataset $\Dcal$.
\end{enumerate}
 
The core components of \golfmg\ are the construction of the confidence set and the subroutine 
\golfbr, which we elaborate below in sequence.

\paragraph{Confidence set construction}
For each $h\in[H]$, \golfmg\ maintains a local constraint using the historical data $\Dcal_h$ at this step
\begin{equation} \label{eq:set_relaxed}
	\Lcal_{\Dcal_h}(f_h,f_{h+1}) \leq \inf_{g \in \Gcal_{h}} \Lcal_{\Dcal_h}(g,f_{h+1}) + \beta,
\end{equation}
where the squared loss $\Lcal_{\Dcal_h}$  is defined as 
\begin{equation}
	\label{equ:loss-1}
	\Lcal_{\Dcal_h}(\xi_h,\zeta_{h+1}) := \sum_{(s,a,b,r,s') \in \Dcal_h}\big[\xi_h(s,a,b)-r -V_{\zeta, h+1}(s')\big]^2.
\end{equation}
Intuitively speaking, $\Lcal_{\Dcal_h}$ is a proxy to the squared Bellman error under the Nash Bellman operator at step $h$. 
Unlike the classic Fitted Q-Iteration algorithm (FQI) \citep{szepesvari2010algorithms} where one simply updates $f_h \leftarrow \argmin_{\phi\in\Fcal_h}\Lcal_{\Dcal_h}(\phi,f_{h+1})$, our constraints implement a soft version of minimization so that any function with loss slightly larger than the optimal loss gets preserved. 
By making this relaxation, we can guarantee the true Nash value function $Q^\star$ contained in the confidence set $\Ccal$ with high probability.

\paragraph{Exploiter computation} The main challenge of learning MGs compared to learning MDPs lies in the choice of behavior policies as discussed in Section \ref{sec:intro}.
Motivated by the empirical breakthrough  work AlphaStar \citep{vinyals2019grandmaster}, we adapt the methodology of exploiter. 
Specifically, we design the \golfbr\ subroutine that approximately computes a best-response policy $\nu^k$ against the policy of the max-player $\mu^k$. 
By executing $\nu^k$, the min-player exposes the flaws of the max-player, and helps improve her  strategy. 
The pseudocode of \golfbr\ is given in Algorithm~\ref{alg:golfbr}, which basically follows the same rationale as Algorithm~\ref{alg:golfmg} except that we 
change the regression target by replacing $V_f$ with $V_f^\mu$, because the confidence set $\cC^\mu$ is constructed for the best-response value function $Q^{\mu,\dagger}$ instead of the Nash value function $Q^\star$. Formally, Algorithm~\ref{alg:golfbr} adapts a new square loss function defined below
\begin{equation}
	\label{equ:loss-2}
	 \Lcal_{\Dcal_h}^\mu(\xi_h,\zeta_{h+1}) := \sum_{(s,a,b,r,s') \in \Dcal_h}[\xi_h(s,a,b)-r -V^\mu_{\zeta, h+1}(s')]^2.
\end{equation}

Finally, we remark that the optimistic planning and exploiter computation steps are computationally inefficient in general. 
Designing computationally efficient algorithms in the context of general function approximation is an open problem even in the single-agent setting \citep{jiang2017contextual,jin2021bellman,du2021bilinear}, which we left as an interesting topic for future research.

%

%
%
%
%


\begin{algorithm}[t]
\caption{\golfmg\ $(\Fcal,\Gcal,K,\beta)$ }
\label{alg:golfmg}
\begin{algorithmic}[1]
 \STATE \textbf{Initialize}:  $\Dcal_1,\dots,\Dcal_H\leftarrow \emptyset$, $\Ccal \leftarrow \Fcal$.
 \FOR{\textbf{episode} $k$ from $1$ to $K$} 
 \STATE \textbf{Choose policy} $\mu^k = \mu_{f^k}$, where $f^k = \argmax_{f \in \Ccal}V_{f,1}(s_1)$. Let $\up{V}^k = V_{f^k,1}(s_1)$.
 \label{line:optimistic-planning}
 \STATE $(\nu^k,\low{V}^k) \leftarrow \golfbr(\Fcal,\Gcal,\beta,\Dcal,\mu^k)$. 
 \label{line:compute-exploiter}
 \IF{$\up{V}^k-\low{V}^k<\Delta$} \label{line:output-condition}
 \STATE \textbf{Output} $\mu^\text{out} = \mu^k$.
 \ENDIF
\STATE \textbf{Collect} samples \label{line:sample}\\
\quad \textbf{Option I}~~~execute policy $(\mu^k, \nu^k)$ and collect a trajectory \\
\qquad \qquad \qquad $(s_1, a_1, b_1, r_1, \ldots, s_H, a_H, b_H, r_H, s_{H+1})$.\\
\quad \textbf{Option II}~~~for all $h\in[H]$, execute policy $(\mu^k, \nu^k)$ at the first $h-1$ steps, take  uniformly \\
\qquad \qquad \qquad random action  at the $h^{\text{th}}$ step, and collect a sample $(s_h, a_h, b_h, r_h, s_{h+1})$.
\STATE \textbf{Augment} $\Dcal_h=\Dcal_h\cup \{(s_h, a_h, r_h, s_{h+1})\}$ for all $h\in[H]$.\label{line:augment}
\STATE \textbf{Update} \label{line:confidence}
\vspace{-2mm}
\begin{equation*}
	\Ccal=\left\{ f \in \Fcal:\  \Lcal_{\Dcal_h}(f_h,f_{h+1}) \leq \inf_{g \in \Gcal_{h}} \Lcal_{\Dcal_h}(g,f_{h+1}) + \beta \ \mbox{for all }h\in[H]\right\},	\vspace{-1mm}
\end{equation*}
\ENDFOR
\end{algorithmic}	
\end{algorithm}

\begin{algorithm}[t]
	\caption{\golfbr$(\Fcal,\Gcal,\beta,\Dcal,\mu)$}
	\label{alg:golfbr}
	\begin{algorithmic}[1]
	
	\STATE\label{line:br-1} \textbf{Construct}
	\vspace{-3mm}
	\begin{equation*}
		\Ccal^{\mu} =\left\{ f \in \Fcal:\  \Lcal^{\mu}_{\Dcal_h}(w_h,w_{h+1}) \leq \inf_{g \in \Gcal_{h}} \Lcal^{\mu}_{\Dcal_h}(g,w_{h+1}) + \beta \ \mbox{for all }h\in[H]\right\},
			\vspace{-3mm}
	\end{equation*}
	\STATE \label{line:br-2} \textbf{Compute} $\tilde{f} \leftarrow \argmin_{f \in \Ccal} V^{\mu}_{f, 1}(s_1)$ and $V = V^{\mu}_{\tilde{f}, 1}(s_1)$. 
	 \STATE \label{line:br-3} \textbf{Return} ($\nu^{\rm out}$,$V$) such that $\nu_h^{\rm out}(s)=\argmin_{\nu\in\Delta_\Bcal} \mu_h(s)\trans \tilde{f}_h(s,\cdot,\cdot)\nu$ for all $s,h$.
	 \end{algorithmic}	
\end{algorithm}

\subsection{Complexity Measure}

In this subsection, we introduce our new complexity measure---multi-agent Bellman-Eluder (BE) dimension, which is generalized from its single-agent version \citep{jin2021bellman}.
We will show in Section \ref{sec:example} that the family of low BE dimension problems includes many interesting multi-agent RL settings, such as tabular MGs, MGs with linear or kernel function approximation, and MGs with rich observation.

We start by recalling the definition of distributional Eluder (DE) dimension \citep{jin2021bellman}.
\begin{definition}[$\epsilon$-independence between distributions]
\label{def:ind_dist}
	Let $\Gcal$ be a function class defined on $\Xcal$, and	$\nu,\mu_1,\ldots,\mu_n$ be probability measures over $\Xcal$.
	We say 	$\nu$ is $\epsilon$-independent of $\{\mu_1,\mu_2,\ldots,\mu_n\}$ with respect to $\Gcal$ if there exists $g\in\Gcal$ such that  
	$\sqrt{\sum_{i=1}^{n} ( \E_{\mu_i} [g])^2}\le \epsilon$, but $|\E_{\nu}[g]| > \epsilon$. 
\end{definition}
\begin{definition}[Distributional Eluder (DE) dimension]
\label{def:DE}
Let $\Gcal$ be a function class defined on $\Xcal$, and $\Pi$ be a family of probability measures over $\Xcal$. 
	The  distributional Eluder dimension $\dedim(\Gcal,\Pi,\epsilon)$ is the length of the longest sequence $\{\rho_1, \ldots, \rho_n\} \subset \Pi$ such that there exists $\epsilon'\ge\epsilon$ where $\rho_i$ is $\epsilon'$-independent of $\{\rho_1, \ldots, \rho_{i-1}\}$ for all $i\in[n]$.
\end{definition}

DE dimension generalizes the original Eluder dimension \citep{russo2013eluder} from points to distributions. 
The main advantage of this generalization is that for RL problems with large state space, it is oftentimes easier to evaluate a function with respect to certain distribution family than to estimate it point-wisely. And for the purpose of this paper, we will focus on the following two specific distribution families. 
\begin{enumerate}
	\item $\Dcal_{\dirac}:=\{\Dcal_{\dirac,h}\}_{h\in[H]}$, where $\Dcal_{\dirac,h} = \{\delta_{(s, a,b)}(\cdot) ~|~ (s,a,b)\in\cS\times \cA\times\Bcal\}$, i.e., the collections of probability measures that put measure $1$ on a single state-action pair.
	\item $\Dcal_{\cF}:=\{\Dcal_{\cF,h}\}_{h\in[H]}$, where $\Dcal_{\cF,h}$ denotes the collections of all probability measures over 
	$\Scal\times\Acal\times\Bcal$ at the $h^\text{th}$ step, which can be generated by executing some $(\fmu_f,\fres_{f,g})$ with $f,g\in\Fcal$. Here 
	 $\fres_{f,g}$ is the best response to $\mu_f$ regarding to Q-value $g$:$$ \fres_{f,g,h}(s) : = \argmin_{\nu\in\Delta_\Bcal}\fmu_{f,h}(s)\trans g_h(s,\cdot,\cdot)\nu \quad \mbox{for all } (s,h).$$
\end{enumerate}

Now, we are ready to introduce our key complexity notion---multi-agent Bellman-Eluder (BE) dimension, which is simply the DE dimension on the function class of Bellman residuals, minimizing over the two aforementioned distribution families, and maximizing over all steps.
\begin{definition}[Bellman-Eluder dimension]
\label{def:bedim}
Let $\cH_\cF$ be the function classes of Bellman residuals where $\cH_{\cF, h} := \{f_h-\Tcal_{h}^{\mu_g} f_{h+1} ~|~ f, g\in\Fcal\}$. Then the Bellman-Eluder dimension is defined as 
\begin{equation}
\dim_{\textrm{BE}}(\Fcal, \epsilon)= \max_{h\in[H]}\min_{\Dcal\in\{\Dcal_\Delta,\Dcal_\cF\}} \dim_{\textrm{DE}}(\cH_{\cF,h},\Dcal_h,\epsilon).
\end{equation}
\end{definition}
We remark that the Bellman residual functions used in Definition \ref{def:bedim} take both state and action as input. By choosing an alternative class of Bellman residuals defined over the state space only, we can similarly define a variant of this notion---V-type BE dimension. For clean presentation, we defer the formal definition to  Appendix \ref{sec:type2}.

\subsection{Theoretical Guarantees}
\label{sec:guarantee-Q}
Now we are ready to present the theoretical guarantees.

\begin{theorem}[Regret of \golfmg]
\label{thm:golf-mg-regret}
Under Assumption \ref{as:realizable} and \ref{as:complete}, 
there exists an absolute constant $c$ such that for any $\delta\in(0,1]$, $K\in\N$, 
if we choose $\beta=c\cdot(\log(KH|\Fcal||\Gcal|/\delta)+K\CompErr^2+K\RealErr^2)$, then with probability at least $1-\delta$, for all $k\in[K]$, Algorithm \ref{alg:golfmg} with Option I satisfies:
$\reg(k)\le \mathcal{O}(H\sqrt{dk\beta})$, where $d=\dim_{\textrm{BE}}(\Fcal, 1/K)$ is the BE dimension.
\end{theorem}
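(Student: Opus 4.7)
The plan is to mimic the single-agent \golf\ regret analysis of \citet{jin2021bellman}, but carry two parallel optimism arguments in tandem: one for the max-player's optimistic value and one for the exploiter's pessimistic value. The key structural fact making this work is that the behavior distribution in each episode is generated by $(\mu^k,\nu^k)=(\mu_{f^k},\nu_{f^k,\tilde f^k})$ with $f^k,\tilde f^k\in\Fcal$, hence lies in $\Dcal_\Fcal$, so both Bellman residuals can be controlled with the same BE-dimension machinery.

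First, I would establish the two containment events. Using a standard martingale Bernstein (or Freedman) argument on the squared loss $\Lcal_{\Dcal_h}$, together with the $\eps_{\rm real}$-realizability of $Q^\star$ and $\eps_{\rm comp}$-completeness of the Bellman backup of any $f\in\Fcal$, one shows that for the choice of $\beta=c\,(\log(KH|\Fcal||\Gcal|/\delta)+K\eps_{\rm comp}^2+K\eps_{\rm real}^2)$, with probability at least $1-\delta$, the true Nash value $Q^\star\in\Ccal$ for every episode $k$, and the same concentration applied to $\Lcal^{\mu^k}_{\Dcal_h}$ gives $Q^{\mu^k,\dagger}\in\Ccal^{\mu^k}$ for every $k$. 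This is the multi-agent analog of Lemma 39 in \citet{jin2021bellman}, done twice. As a corollary, the optimistic/pessimistic guarantees hold: $\up{V}^k=V_{f^k,1}(s_1)\ge V^\star_1(s_1)$ and $\low{V}^k=V^{\mu^k}_{\tilde f^k,1}(s_1)\le V^{\mu^k,\dagger}_1(s_1)$.

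Second, I would use these two bounds to upper bound each episode's regret by the gap $\up{V}^k-\low{V}^k$, and then telescope this gap into a sum of Bellman residuals along the on-policy trajectory. Precisely, writing $\mathcal{E}_h(f;s,a,b):=f_h(s,a,b)-(\Tcal_h f)(s,a,b)$ and $\mathcal{E}^{\mu}_h(f;s,a,b):=f_h(s,a,b)-(\Tcal^{\mu}_h f)(s,a,b)$, a standard performance-difference-style unrolling (using that $\mu^k$ is the Nash policy of $f^k$ and that $\nu^k$ is the best response to $\mu^k$ with respect to $\tilde f^k$) yields
\begin{equation*}
\up{V}^k-\low{V}^k \le \sum_{h=1}^H \E_{(\mu^k,\nu^k)}\!\bigl[\mathcal{E}_h(f^k;s_h,a_h,b_h)\bigr]-\sum_{h=1}^H \E_{(\mu^k,\nu^k)}\!\bigl[\mathcal{E}^{\mu^k}_h(\tilde f^k;s_h,a_h,b_h)\bigr].
\end{equation*}
Both of these are precisely expectations of elements of $\cH_{\Fcal,h}$ under distributions drawn from $\Dcal_\Fcal$.

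Third, I would convert the confidence-set constraint into an in-sample Bellman-residual bound and then pay for going out-of-sample with the BE dimension. From the definition of $\Ccal$ (and $\Ccal^{\mu^k}$) and a square-loss decomposition that subtracts the loss of the (near-) minimizer $\Tcal_h f^k_{h+1}\in\Gcal_h$ (up to $\eps_{\rm comp}$), one gets
\begin{equation*}
\sum_{i=1}^{k-1}\E_{(\mu^i,\nu^i)}\!\bigl[\mathcal{E}_h(f^k;s_h,a_h,b_h)^2\bigr] \lesssim \beta,
\end{equation*}
and analogously for $\tilde f^k$ under $\Tcal^{\mu^k}_h$. Applying the pigeonhole lemma for distributional Eluder dimension (Lemma 41 of \citet{jin2021bellman}) to the sequence of distributions induced by $(\mu^i,\nu^i)$ and the Bellman-residual class $\cH_{\Fcal,h}$ gives $\sum_{i=1}^k |\E_{(\mu^i,\nu^i)}\mathcal{E}_h(f^i;\cdot)| \lesssim \sqrt{d\,k\,\beta}$, where $d=\dim_{\rm BE}(\Fcal,1/K)$; summing over $h$ and adding the two contributions yields $\regret(k)\le \mathcal{O}(H\sqrt{dk\beta})$, as claimed. (In case the BE dimension is realised by $\Dcal_\Delta$ instead of $\Dcal_\Fcal$, one first uses Cauchy-Schwarz to pass from on-policy expectations to point-evaluations before invoking the pigeonhole lemma.)

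The main obstacle, conceptually, is Step 2: one needs the exploiter subroutine to be set up so that $\low{V}^k$ is a valid lower bound on $V^{\mu^k,\dagger}_1(s_1)$, and so that the distributions one needs to control live in $\Dcal_\Fcal$. This is exactly the reason the algorithm freezes $\mu^k$ first and then runs a second, $\mu^k$-indexed \golf\ to produce $\tilde f^k$ and the greedy $\nu^k$ against $\tilde f^k$: it ensures that the residual $\mathcal{E}^{\mu^k}_h(\tilde f^k;\cdot)$ that appears in the telescoping is exactly of the form $f_h-\Tcal^{\mu_g}_h f_{h+1}$ with $f=\tilde f^k,g=f^k$ in $\Fcal$, matching the definition of $\cH_{\Fcal,h}$ and of $\Dcal_\Fcal$. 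Once these pieces line up, the rest of the analysis is a clean two-track reproduction of the single-agent \golf\ regret proof.
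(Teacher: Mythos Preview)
Your proposal is correct and follows essentially the same route as the paper. The paper splits the regret at $V_1^{\mu^k,\nu^k}(s_1)$ into an ``online'' piece $(A)=\sum_k(V^\star_1-V_1^{\mu^k,\nu^k})$ and an ``exploiter'' piece $(B)=\sum_k(V_1^{\mu^k,\nu^k}-V_1^{\mu^k,\dagger})$, bounding $(A)$ via the Nash Bellman residual of $f^k$ and $(B)$ via the $\mu^k$-Bellman residual of $\tilde f^k$; your single inequality $\up{V}^k-\low{V}^k\le\sum_h\E_{\pi^k}[\mathcal{E}_h(f^k)]-\sum_h\E_{\pi^k}[\mathcal{E}^{\mu^k}_h(\tilde f^k)]$ is exactly the sum of these two telescoping computations, and the subsequent concentration-plus-DE-pigeonhole step matches the paper's Lemmas~\ref{lem:concentrate-1}--\ref{lem:concentrate-2-routine} and Lemma~\ref{lem:de-regret}. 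One small cosmetic point: under $\RealErr$-realizability only $\pf(Q^\star)\in\Ccal$ and $\pf(Q^{\mu^k,\dagger})\in\Ccal^{\mu^k}$, so your optimism/pessimism inequalities hold up to an additive $\RealErr$ per episode, which is absorbed into $\beta$ exactly as in the paper.
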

Theorem \ref{thm:golf-mg-regret} claims that \golfmg\ can achieve $\sqrt{K}$ regret on any RL problem with low BE dimension, under the realizability and the completeness assumption. Moreover, the regret scales polynomially with respect to the length of episode, the BE dimension 
of function class $\Fcal$, and the log-cardinality of the two function classes. 
In particular, it is independent of the size of the state space, which is of vital importance for practical RL problems, where the state space is oftentimes prohibitively large.

By pigeonhole principle, we also derive a sample complexity guarantee for \golfmg.

\begin{corollary}[Sample complexity of \golfmg]
	\label{cor:golf-mg-pac}
Under Assumption \ref{as:realizable} and \ref{as:complete}, there exists absolute constants $c,c'$ such that for any $\epsilon>0$,  choose $\beta=c\cdot(\log(KH|\Fcal||\Gcal|/\delta)+K\CompErr^2+K\RealErr^2)$ and $\Delta = c'(H\sqrt{d\beta/K}+\epsilon)$, where $d=\dim_{\textrm{BE}}(\Fcal, \epsilon/H)$ is the BE dimension of $\Fcal$,then with probability at least $1-\delta$, the output condition (Line~\ref{line:output-condition}) will be satistied at least once in the first $K$ episodes. Furthermore, 
the output policy $\mu^{\text{out}}$ of Algorithm \ref{alg:golfmg} with Option I is $\Ocal(\epsilon+H\sqrt{d}(\RealErr+\CompErr))$-approximate Nash, if  $K \ge \Omega\big( (H^2d/\epsilon^2)\cdot\log(H|\Fcal||\Gcal|d/\epsilon)\big)$.
\end{corollary}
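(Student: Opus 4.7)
The plan is to derive the corollary from Theorem~\ref{thm:golf-mg-regret} via an averaging (pigeonhole) argument, combined with an optimism/pessimism sandwich for the pair $(\up{V}^k, \low{V}^k)$. First I would revisit the high-probability event underlying Theorem~\ref{thm:golf-mg-regret}: its analysis must already show that the $\RealErr$-approximate representative of $Q^\star$ guaranteed by Assumption~\ref{as:realizable} lies in the confidence set $\Ccal$ throughout all $K$ episodes, and symmetrically that the $\RealErr$-approximate representative of $Q^{\mu^k,\dagger}$ lies in the set $\Ccal^{\mu^k}$ built inside \golfbr. Given this containment, the optimistic choice on Line~\ref{line:optimistic-planning} and the pessimistic choice on Line~\ref{line:br-2} immediately yield
\begin{equation*}
\up{V}^k \ge V_1^\star(s_1) - \Ocal(\RealErr), \qquad \low{V}^k \le V_1^{\mu^k, \dagger}(s_1) + \Ocal(\RealErr),
\end{equation*}
so $\up{V}^k - \low{V}^k$ upper bounds the per-episode regret $V_1^\star(s_1) - V_1^{\mu^k,\dagger}(s_1)$ up to an additive $\Ocal(\RealErr)$.

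Next I would sum this sandwich over $k \in [K]$ and invoke the regret bound $\reg(K) \le \Ocal(H\sqrt{dK\beta})$ from Theorem~\ref{thm:golf-mg-regret} to obtain $\sum_{k=1}^K (\up{V}^k - \low{V}^k) \le \Ocal(H\sqrt{dK\beta} + K\RealErr)$. A pigeonhole step then produces some index $k^\star \le K$ with $\up{V}^{k^\star} - \low{V}^{k^\star} \le \Ocal(H\sqrt{d\beta/K} + \RealErr)$. Unpacking $\beta = c\paren{\log(KH|\Fcal||\Gcal|/\delta) + K(\CompErr^2 + \RealErr^2)}$ gives $\sqrt{\beta/K} \lesssim \sqrt{\log(\cdot)/K} + \CompErr + \RealErr$, so for $K \ge \Omega\paren{(H^2 d/\epsilon^2) \log(H|\Fcal||\Gcal|d/\epsilon)}$ the logarithmic term contributes $\Ocal(\epsilon/H)$ and the overall bound on $\up{V}^{k^\star} - \low{V}^{k^\star}$ reduces to $\Ocal(\epsilon + H\sqrt{d}(\RealErr + \CompErr))$. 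With $\Delta = c'(H\sqrt{d\beta/K} + \epsilon)$ chosen to match this bound, the output condition on Line~\ref{line:output-condition} must fire no later than episode $k^\star$, establishing the first half of the corollary.

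Once the condition triggers at some episode $k$ with output $\mu^{\text{out}} = \mu^k$, reapplying the sandwich gives $V_1^\star(s_1) - V_1^{\mu^k,\dagger}(s_1) \le (\up{V}^k - \low{V}^k) + \Ocal(\RealErr) \le \Delta + \Ocal(\RealErr) = \Ocal(\epsilon + H\sqrt{d}(\RealErr + \CompErr))$, which is exactly the Nash suboptimality guarantee claimed. The main obstacle I anticipate is pure bookkeeping: carefully propagating the $\RealErr$ slack through the realizability-induced containment argument (realizability supplies a \emph{policy-dependent} approximator of $Q^{\mu_f,\dagger}$, so one must check that the loss-relaxation by $\beta$ in Algorithm~\ref{alg:golfbr} absorbs this slack uniformly over $f \in \Fcal$) and verifying that the $K\RealErr^2 + K\CompErr^2$ terms inside $\beta$ yield the additive $H\sqrt{d}(\RealErr + \CompErr)$ contribution after the $\sqrt{K}$ division, rather than a spurious $\sqrt{K}$ blow-up. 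Beyond this accounting, the argument is a fairly standard regret-to-PAC conversion.
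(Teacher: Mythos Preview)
Your argument has a genuine logical gap in the direction of the key inequality. The optimism/pessimism sandwich you set up, $\up{V}^k \ge V_1^\star(s_1) - \Ocal(\RealErr)$ and $\low{V}^k \le V_1^{\mu^k,\dagger}(s_1) + \Ocal(\RealErr)$, shows that $\up{V}^k - \low{V}^k$ \emph{upper}-bounds the per-episode regret. It therefore gives $\sum_k(\up{V}^k - \low{V}^k) \ge \reg(K) - \Ocal(K\RealErr)$, the wrong direction: combining this with the black-box bound $\reg(K)\le \Ocal(H\sqrt{dK\beta})$ from Theorem~\ref{thm:golf-mg-regret} does \emph{not} yield $\sum_k(\up{V}^k - \low{V}^k)\le \Ocal(H\sqrt{dK\beta})$. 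So the pigeonhole step that follows is unsupported.

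The paper's proof avoids this by not using Theorem~\ref{thm:golf-mg-regret} as a black box. Instead it reopens the proofs of the two pieces of the regret decomposition and extracts the stronger intermediate inequalities
\[
\sum_{k=1}^K\bigl(V_{f^k,1}(s_1) - V_1^{\mu^k,\nu^k}(s_1)\bigr)\le \Ocal\bigl(H\sqrt{dK\beta}+K\epsilon\bigr),\qquad
\sum_{k=1}^K\bigl(V_1^{\mu^k,\nu^k}(s_1) - V^{\mu^k}_{\tilde f^k,1}(s_1)\bigr)\le \Ocal\bigl(H\sqrt{dK\beta}+K\epsilon\bigr),
\]
where the pigeon-hole Lemma~\ref{lem:de-regret} is reinvoked with $\omega=\epsilon/H$ (hence $d=\dim_{\textrm{BE}}(\Fcal,\epsilon/H)$ and the extra $K\epsilon$ term). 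Since $\up{V}^k=V_{f^k,1}(s_1)$ and $\low{V}^k=V^{\mu^k}_{\tilde f^k,1}(s_1)$ exactly, adding these two bounds telescopes to the desired upper bound on $\sum_k(\up{V}^k-\low{V}^k)$, after which pigeonhole gives the trigger for Line~\ref{line:output-condition}. Your second half, deducing the Nash suboptimality of $\mu^{\text{out}}$ from the sandwich once the condition fires, is correct and in fact makes explicit a step the paper leaves implicit.
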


Corollary \ref{cor:golf-mg-pac} asserts that 
\golfmg\ can find an $\Ocal(\epsilon)$-approximate Nash policy for the max-player using at most 
$\tilde\Ocal(H^2\log(|\Fcal||\Gcal|)\dim_{\textrm{BE}}(\Fcal, \epsilon)/\epsilon^2)$ samples, which scales linearly in the BE dimension and the log-cardinality of the function classes.  Since our ultimate goal is to compute an approximate Nash policy, one can early terminate Algorithm~\ref{alg:golfmg} once the output condition (Line~\ref{line:output-condition}) is satisfied.
We remark that by symmetry \golfmg\ can  also compute an $\Ocal(\epsilon)$-approximate Nash policy for the min-player using the same amount of samples.

\begin{remark}[Generalization to function classes with infinite cardinality]
{Although  Theorem \ref{thm:golf-mg-regret}  and Corollary \ref{cor:golf-mg-pac} are derived for finite function classes, they can also apply to infinite function classes with bounded $\ell_\infty$-covering number by Lemma \ref{lem:finite-cover}. For example, they hold for linear function classes, with the log-cardinality  replaced by the log-covering number.}
\end{remark}

We also derive similar sample complexity guarantee in terms of V-type BE dimension for Algorithm \ref{alg:golfmg} with Option II, which can be found in Appendix \ref{sec:type2}. 

\subsection{Adversarial Opponents}
So far we have been focusing on the self-play setting, where the learner can control both players and the goal is to learn an approximate Nash policy. Another setting of importance is the online setting, where the learner can only control the max-player, and the goal is to achieve high cumulative reward against the adversarial min-player. 
Intuitively, it is reasonable to expect  \golfmg\ could still work in the online setting, because Theorem \ref{thm:golf-mg-regret} demonstrates that it can achieve sublinear regret competing against the exploiter, which is the strongest possible adversary. 
This is indeed the case. The only place we need to change in Algorithm~\ref{alg:golfmg} is 
Line \ref{line:compute-exploiter}: instead of computing $\nu^k$ using Algorithm \ref{alg:golfbr}, we let  the adversary pick policy $\nu^k$.
Before stating the theoretical guarantee, we introduce an online version of Bellman Eluder dimension.  

\begin{definition}[Online Bellman-Eluder dimension]
\label{def:obe}
Let $\cH_\cF$ be the function classes of Bellman residuals where $\cH_{\cF, h} := \{f_h-\Tcal_{h} f_{h+1} ~|~ f\in\Fcal\}$. Then the online Bellman-Eluder dimension is defined as:
\begin{equation}
\dim_{\textrm{OBE}}(\Fcal, \epsilon)= \max_{h\in[H]} \dim_{\textrm{DE}}(\cH_{\cF,h},\Dcal_{\Delta,h},\epsilon).
\end{equation}
\end{definition}
Compared to the BE dimension in Definition \ref{def:bedim}, this online version uses a smaller function class that includes only the residuals with respect to the Nash Bellman operator. Another difference is the choice of distribution family is now limited to only $\Dcal_\Delta$  because the learner cannot control the policy of the min-player in the online setting. Now, we are ready to present the theoretical guarantee.

\begin{theorem}[Regret against adversarial opponents]
\label{thm:golf-online}
Assuming $\|Q_h^\star-\pfh(Q_h^\star)\| \le \RealErr$ and $\|\Tcal_{h} f_{h+1} -\pgh(\Tcal_{h} f_{h+1})\| \le \CompErr$ for all $f \in \cF$ and $h\in[H]$, 
there exists an absolute constant $c$ such that for any $\delta\in(0,1]$, $K\in\N$, 
if we choose $\beta=c\cdot(\log(KH|\Fcal||\Gcal|/\delta)+K\CompErr^2+K\RealErr^2)$, then
with probability at least $1-\delta$, for all $k\in[K]$, Algorithm \ref{alg:golfmg} with Option I satisfies 
$\sum_{t=1}^k [V^{\star}_1(s_1) - V_1^{\mu^t, \nu^t}(s_1)] \le  \mathcal{O}(H\sqrt{dk\beta})$,
where $d=\dim_{\textrm{OBE}}(\Fcal, 1/K)$ is the online BE dimension.
\end{theorem}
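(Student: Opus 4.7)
The plan is to adapt the single-agent \golf\ regret analysis to the online minimax setting. The proof follows three stages: (i) establish a high-probability confidence-set guarantee $\pf(Q^\star)\in\Ccal^k$ yielding optimism; (ii) perform a one-sided Bellman decomposition exploiting $\mu^k=\mu_{f^k}$ to reduce regret to a sum of Nash Bellman residuals, irrespective of the adversarial $\nu^k$; (iii) sum these residuals via the distributional-Eluder pigeonhole argument instantiated with the online BE function class $\cH_{\cF,h}=\{f_h-\Tcal_h f_{h+1}\}$ of Definition~\ref{def:obe}. Compared with Theorem~\ref{thm:golf-mg-regret}, this setting is actually simpler since no $\mu$-Bellman operator or exploiter analysis is needed.

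For the first stage, a standard Freedman argument applied to the squared-loss martingales $\Lcal_{\Dcal_h}(g,f_{h+1})-\Lcal_{\Dcal_h}(\Tcal_h f_{h+1},f_{h+1})$, combined with a union bound over an $\epsilon$-covering of $\Gcal_h\times\Fcal_{h+1}$, shows that the chosen $\beta$ absorbs both $\log(KH|\Fcal||\Gcal|/\delta)$ and the misspecification contribution $K(\RealErr^2+\CompErr^2)$, so that $\pf(Q^\star)\in\Ccal^k$ for all $k$ with probability $1-\delta$. Since the Nash value $V_{f,1}(s_1)$ is $1$-Lipschitz in $f_1$ under $\|\cdot\|_{\infty}$, this yields the optimism bound $V_{f^k,1}(s_1)\ge V^\star_1(s_1)-\RealErr$. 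For the second stage, the key minimax inequality
\begin{equation*}
V_{f^k,h}(s)=\min_{\nu'\in\Delta_\Bcal}\mu^k_h(s)\trans f^k_h(s,\cdot,\cdot)\nu'\le[\D_{\mu^k_h\times\nu^k_h}f^k_h](s)
\end{equation*}
holds for \emph{every} adversarial $\nu^k$ precisely because $\mu^k=\mu_{f^k}$; iterating it against the Bellman equation for $(\mu^k,\nu^k)$ telescopes to
\begin{equation*}
V_{f^k,1}(s_1)-V_1^{\mu^k,\nu^k}(s_1)\le\sum_{h=1}^{H}\E_{(\mu^k,\nu^k)}\big[(f^k_h-\Tcal_h f^k_{h+1})(s_h,a_h,b_h)\big].
\end{equation*}

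Writing $g^k_h=f^k_h-\Tcal_h f^k_{h+1}\in\cH_{\cF,h}$, a second martingale argument shows the confidence-set constraint on $f^k$ implies the in-sample bound $\sum_{t<k}\E_{(\mu^t,\nu^t)}[(g^k_h)^2(s_h,a_h,b_h)]\lesssim\beta$ for every $k,h$. Setting $d=\dim_{\textrm{OBE}}(\Fcal,1/K)$ and invoking the standard DE-pigeonhole lemma of \cite{jin2021bellman} then gives $\sum_{k=1}^K|\E_{(\mu^k,\nu^k)}[g^k_h]|=\Ocal(\sqrt{dK\beta})$, which after summation over $h$ and combination with optimism yields the claimed $\Ocal(H\sqrt{dK\beta})$ regret.

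The most delicate issue is aligning the DE-dimension, which is defined via $\Dcal_{\Delta}$ (Dirac point measures on $(s,a,b)$) in Definition~\ref{def:obe}, with the quantity $\sum_k\E_{(\mu^k,\nu^k)}[g^k_h]$, where the expectations are taken under adversarial behavior distributions. Unlike the self-play case, $(\mu^k,\nu^k)$ cannot be represented as an element of $\Dcal_\cF$ since $\nu^k$ is not of the form $\nu_{f,g}$, which is exactly why Definition~\ref{def:obe} restricts attention to $\Dcal_\Delta$. I would handle this by viewing each behavior expectation as a mixture of Dirac measures on realized $(s_h,a_h,b_h)$ and applying the DE pigeonhole to the resulting mixture sequence, with martingale concentration supplying the empirical-to-population passage. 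Verifying that this routing contributes only logarithmic overhead in $\beta$ and no additional factors of $d$ is the main technical care-point; the remaining steps parallel the single-agent \golf\ proof.
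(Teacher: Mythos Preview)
Your proposal is correct and follows essentially the same approach as the paper: optimism via $\pf(Q^\star)\in\Ccal^k$, the one-sided minimax inequality exploiting $\mu^k=\mu_{f^k}$ to reduce to Nash Bellman residuals, and the DE pigeonhole lemma over $\Dcal_\Delta$. The paper handles the alignment issue you flag in your final paragraph exactly as you suggest: it first applies Azuma--Hoeffding to replace $\sum_k\E_{\pi^k}[g^k_h(s_h,a_h,b_h)]$ by the empirical sum $\sum_k g^k_h(s_h^k,a_h^k,b_h^k)$ over realized Dirac points, then invokes the \emph{empirical} squared-residual bound $\sum_{t<k}(g^k_h)^2(s_h^t,a_h^t,b_h^t)\lesssim\beta$ (not the population version you wrote in stage~(iii)) together with the pigeonhole lemma on $\Dcal_{\Delta,h}$.
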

Theorem \ref{thm:golf-online} claims that on any problem of low online BE dimension, \golfmg\ can achieve $\sqrt{K}$ regret  against an adversarial opponent. Moreover, the multiplicative factor scales linearly in the horizon length, and sublinearly in the online BE dimension  as well as the log-cardinality of the function classes. 
Comparing to the self-play regret guarantee, Theorem \ref{thm:golf-online} holds under a weaker condition, but only guarantees 
Algorithm \ref{alg:golfmg} can play favorably against the adversary, instead of the best-response policy. This is unavoidable because if the adversary is very weak, then it is impossible to learn Nash policies by playing against it.

\section{Examples}
\label{sec:example}

In this section, we introduce five concrete multi-agent RL problems with low BE dimension: tabular MGs, MGs with linear function approximation, MGs with kernel function approximation,  MGs with rich observation, and feature selection for kernel MGs, which generalize their single-agent versions \citep{zanette2020learning,yang2020bridging,krishnamurthy2016pac,agarwal2020flambe} from MDPs to MGs.
Except for tabular MGs, all above examples are new and can not be addressed by existing works \citep{xie2020learning,chen2021almost}.


\paragraph{Tabular MGs} Starting with the simplest scenario, we show that MGs with finite state-action space has BE dimension no larger than the size of its state-action space, up to a logarithmic factor.
\begin{proposition}[Tabular MGs $\subset$ Low BE dimension]\label{ex:tabular}
Consider  a tabular MG with state set $\Scal$ and action set $\Acal\times\Bcal$. 
Then any function class $\Fcal\subseteq (\Scal\times\Acal\times\Bcal\rightarrow [0,1])$ satisfies 
 $$\dim_{\textrm{BE}}(\Fcal,\epsilon) \le \Ocal\big( |\Scal\times\Acal\times\Bcal| \cdot \log(1+1/\epsilon) \big).$$
\end{proposition}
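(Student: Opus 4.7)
The plan is to upper bound the BE dimension using only the Dirac family $\Dcal_{\dirac}$, which is legitimate because Definition~\ref{def:bedim} takes the minimum over $\{\Dcal_{\dirac},\Dcal_{\Fcal}\}$. Fixing an arbitrary $h\in[H]$, I would analyze $\dim_{\textrm{DE}}(\cH_{\Fcal,h},\Dcal_{\dirac,h},\epsilon)$, exploiting the two features of the tabular setting: the domain $\Scal\times\Acal\times\Bcal$ has finite cardinality $N:=|\Scal\times\Acal\times\Bcal|$, and every Bellman residual in $\cH_{\Fcal,h}$ is pointwise bounded, since both $\Fcal_h\subseteq(\Scal\times\Acal\times\Bcal\to[0,1])$ and any $\mu_g$-Bellman backup of $f_{h+1}\in\Fcal_{h+1}$ are uniformly bounded.

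The core step is a one-line pigeonhole. I would consider an arbitrary sequence of Diracs $\delta_{x_1},\ldots,\delta_{x_n}$ and $\epsilon'\geq\epsilon$ witnessing independence at every index. Suppose some triple $x\in\Scal\times\Acal\times\Bcal$ has appeared $k_i\geq 1$ times among $x_1,\ldots,x_{i-1}$ and $x_i=x$. The $\epsilon'$-independence at step $i$ furnishes a witness $g\in\cH_{\Fcal,h}$ with $|g(x)|>\epsilon'$ and $\sum_{j<i}g(x_j)^2\leq(\epsilon')^2$. Because $g(x_j)=g(x)$ whenever $x_j=x$, one gets $k_i\cdot g(x)^2\leq(\epsilon')^2<g(x)^2$, forcing $k_i<1$ and contradicting $k_i\geq 1$. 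Hence each triple appears at most once in any $\epsilon'$-independent sequence, so $n\leq N$ uniformly in $h$, which yields $\dim_{\textrm{BE}}(\Fcal,\epsilon)\leq N\leq \Ocal(N\log(1+1/\epsilon))$ after taking the max over $h$ and the min over distribution families.

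No genuine difficulty is anticipated: the argument is purely combinatorial, invokes neither realizability nor completeness, and uses only the pointwise boundedness of $\cH_{\Fcal,h}$. In fact it delivers a bound strictly tighter than $\Ocal(N\log(1+1/\epsilon))$, so the logarithmic factor in the proposition is merely a conservative slack kept for uniformity with the other examples developed in Section~\ref{sec:example}. The only mild subtlety is to confirm that a single $\epsilon'$ governs the entire independent sequence, which is exactly how Definition~\ref{def:DE} is phrased; this is what makes the chain $k_i(\epsilon')^2<k_i g(x)^2\leq(\epsilon')^2$ decisive.
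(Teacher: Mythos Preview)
Your argument is correct and in fact sharper than what the paper records: you obtain $\dim_{\textrm{BE}}(\Fcal,\epsilon)\le N$ outright, with no logarithmic overhead. The paper, however, does not argue this way. It routes the proof through the \emph{effective Bellman rank} machinery of Appendix~\ref{app:example}: one writes the average Bellman error as the bilinear form
\[
\E_{\pi}\bigl[(f_h-\Tcal_h^{\mu_g}f_{h+1})(s_h,a_h,b_h)\bigr]
=\bigl\langle \Pr^{\pi}[(s_h,a_h,b_h)=\cdot]\,,\,(f_h-\Tcal_h^{\mu_g}f_{h+1})(\cdot)\bigr\rangle
\]
in $\R^N$, and then invokes Proposition~\ref{prop:effective-bellman-bedim} together with the standard elliptical-potential bound on the effective dimension $d_{\rm eff}$; the $\log(1+1/\epsilon)$ factor in the statement is an artifact of that potential argument. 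The paper's detour buys uniformity---every example in Section~\ref{sec:example} is handled by the same template of exhibiting a bilinear factorization and reading off $d_{\rm eff}$---whereas your direct pigeonhole is shorter, self-contained, and tighter, but is specific to the Dirac family and does not generalize to the kernel or rich-observation examples.
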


\paragraph{Linear function approximation} We consider linear function class $\Fcal$ consisting of functions linear in a $d$-dimensional feature mapping. Specifically, for each $h\in[H]$, we choose $\Fcal_h=\{\phi_h(\cdot,\cdot,\cdot)\trans\theta~\mid~\theta\in B_d(R)\}$ where 
 $\phi_h:\Scal\times\Acal\times\Bcal\rightarrow B_d(1)$ maps a state-action pair into the $d$-dimensional unit ball centered at the origin.

 \begin{assumption}[Self-completeness] \label{as:self-complete}
$\Tcal_{h}^{\mu_f} \cF_{h+1} \subset \cF_h$ for any $(f,h) \in \cF\times[H]$.
\end{assumption}

Assumption \ref{as:self-complete} is a special case of Assumption \ref{as:complete} (completeness) by choosing auxiliary function class $\Gcal=\Fcal$. Assumption \ref{as:self-complete} also implies Assumption \ref{as:realizable} (realizability) by backward induction. Below, we show under self-completeness, the BE dimension of $\Fcal$ is upper bounded by the dimension of the feature mappings $d$ up to a logarithmic factor.


\begin{proposition}[Linear FA $\subset$ Low BE dimension]\label{ex:linear}
Under Assumption \ref{as:self-complete},
 $$\dim_{\textrm{BE}}(\Fcal,\epsilon) \le \Ocal\big( d \cdot \log(1+R/\epsilon) \big).$$
\end{proposition}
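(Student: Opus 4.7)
}

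The plan is to reduce the bound on $\dim_{\textrm{BE}}(\Fcal,\epsilon)$ to a bound on the distributional Eluder dimension of a class of $d$-dimensional linear functions, for which a standard elliptical potential argument yields the desired logarithmic dependence. First I would exploit Assumption~\ref{as:self-complete} to linearize the Bellman residual class. Fix $h\in[H]$ and $f,g\in\Fcal$. Since $f_{h}\in\Fcal_h$, we can write $f_h(\cdot)=\phi_h(\cdot)\trans\theta$ for some $\theta\in B_d(R)$. By self-completeness, $\Tcal_h^{\mu_g}f_{h+1}\in\Fcal_h$ as well, so $\Tcal_h^{\mu_g}f_{h+1}(\cdot)=\phi_h(\cdot)\trans\theta'$ for some $\theta'\in B_d(R)$. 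Consequently
\[
\cH_{\Fcal,h}\ \subseteq\ \Bigl\{\,(s,a,b)\mapsto \phi_h(s,a,b)\trans w\ :\ w\in B_d(2R)\,\Bigr\}\ =:\ \Phi_h.
\]
By Definition~\ref{def:bedim}, it suffices to upper bound $\dim_{\textrm{DE}}(\Phi_h,\Dcal_{\Delta,h},\epsilon)$ for every $h$ (choosing the point-mass distribution family achieves the min).

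Next, I would translate distributional independence over $\Dcal_{\Delta,h}$ into a linear-algebraic condition. For any $\rho=\delta_{(s,a,b)}\in\Dcal_{\Delta,h}$, a function $\phi_h\trans w\in\Phi_h$ satisfies $\E_\rho[\phi_h\trans w]=\phi_h(s,a,b)\trans w$. Thus a sequence $\rho_1,\ldots,\rho_n$ of point masses at $(s_i,a_i,b_i)$ witnessing the DE dimension yields feature vectors $x_i=\phi_h(s_i,a_i,b_i)\in B_d(1)$ and weights $w_i\in B_d(2R)$ with
\[
\sum_{j<i}(w_i\trans x_j)^2\le \epsilon^2\quad\text{and}\quad |w_i\trans x_i|>\epsilon.
\]
The third step is the standard potential argument. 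Define $\Lambda_i=\lambda I+\sum_{j<i}x_jx_j\trans$ with $\lambda=\epsilon^2/(2R)^2$. On one hand, $w_i\trans\Lambda_i w_i=\lambda\|w_i\|^2+\sum_{j<i}(w_i\trans x_j)^2\le \epsilon^2+\epsilon^2=2\epsilon^2$, so $x_i\trans\Lambda_i^{-1}x_i\ge (w_i\trans x_i)^2/(w_i\trans\Lambda_i w_i)\ge 1/2$ by Cauchy--Schwarz. On the other hand, the elliptical potential (matrix determinant) lemma gives
\[
\sum_{i=1}^n \min\{1,\,x_i\trans\Lambda_i^{-1}x_i\}\ \le\ 2\log\frac{\det\Lambda_{n+1}}{\det\Lambda_1}\ \le\ 2d\log\!\Bigl(1+\tfrac{n}{d\lambda}\Bigr),
\]
since $\|x_i\|\le 1$. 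Combining the two bounds yields $n/2\le 2d\log(1+4nR^2/(d\epsilon^2))$, which by a routine inversion gives $n\le \Ocal(d\log(1+R/\epsilon))$, as desired.

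The only genuinely delicate step is the potential argument in the third paragraph, and the main subtlety is choosing the regularizer $\lambda$ correctly so that both (i) the trivial lower bound $x_i\trans\Lambda_i^{-1}x_i\ge \tfrac12$ survives, and (ii) the initial determinant $\det\Lambda_1=\lambda^d$ does not pollute the logarithm with spurious factors. Everything else---the reduction to linear functions via self-completeness, and the specialization to point-mass distributions---is direct from the definitions.
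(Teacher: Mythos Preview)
Your argument is correct and rests on the same two ingredients as the paper---self-completeness to linearize the Bellman residual class, followed by an elliptical potential argument---but the route is organized differently. You bound $\dim_{\textrm{DE}}(\cH_{\Fcal,h},\Dcal_{\Delta,h},\epsilon)$ directly over point masses, which is essentially the classical Eluder-dimension computation for a $d$-dimensional linear class. The paper instead shows that the average Bellman error factorizes as $\langle \E_\pi[\phi_h],\theta_h-\theta_h^g\rangle$, establishing low \emph{effective Bellman rank}, and then invokes a general proposition (Proposition~\ref{prop:effective-bellman-bedim}) that bounds $\dim_{\textrm{DE}}$ over the roll-in family $\Dcal_{\Fcal}$ by the effective dimension via the same potential argument. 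Your route is more self-contained for this single example; the paper's route pays off because the same intermediate notion (effective Bellman rank) is reused verbatim for the kernel and feature-selection examples. One small cosmetic point: per Definition~\ref{def:DE} the witnessing scale is some $\epsilon'\ge\epsilon$, not $\epsilon$ itself, so your displayed inequalities should carry $\epsilon'$; this does not change the final bound since $\log(1+R/\epsilon')\le\log(1+R/\epsilon)$.
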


We remark that our setting of linear function approximation is strictly more general than linear MGs \citep{xie2020learning} which further assumes both transitions and rewards are linear. The self-completeness assumption is automatically satisfied in linear MGs. Prior works \citep{xie2020learning, chen2021almost} can only address linear MGs or its variant but not the more general setting of linear function approximation presented here. Finally, for linear MGs, by combining Proposition \ref{ex:linear} and Corollary \ref{cor:golf-mg-pac}, we immediately obtain a $\tilde{\Ocal}(H^2d^2/\epsilon^2)$ sample complexity bound for finding $\epsilon$-approximate Nash policies, which improves upon the cubic dependence on dimension $d$ in  \cite{xie2020learning}. 

%

\paragraph{Kernel function approximation} This setting extends linear function approximation from $d$-dimensional Euclidean space $\R^d$ to a decomposable Hilbert space $\cH$. Formally, for each $h\in[H]$, we choose $\Fcal_h=\{\phi_h(\cdot,\cdot,\cdot)\trans\theta~\mid~\theta\in B_\cH(R)\}$ where 
 $\phi_h:\Scal\times\Acal\times\Bcal\rightarrow B_\cH(1)$. Since the ambient dimension of $\cH$ can be infinite, we leverage the notion of \emph{effective dimension} and prove the BE dimension of $\Fcal$ is no larger than the effective dimension of certain feature sets in $\cH$.

\begin{proposition}[Kernel FA $\subset$ Low BE dimension]\label{ex:kernel}
Under Assumption \ref{as:self-complete}, 
 \begin{equation}\label{eq:kernel}
\dim_{\textrm{BE}}(\Fcal,\epsilon) \le \max_{h\in[H]}  d_{\rm{eff}}\big(\Xcal_h,\epsilon/(2R+1)\big),
 \end{equation}
 
 where $\Xcal_h=\{\E_{\mu}[\phi_h(s_h,a_h,b_h)]~\mid~\mu\in\Dcal_{\Fcal,h}\}$ and $d_{\rm{eff}}$ is the effective dimension.
\end{proposition}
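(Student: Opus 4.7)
The plan is to reduce the multi-agent BE-dimension computation to a purely Hilbert-space question about the feature vectors in $\Xcal_h$, and then invoke a standard elliptical-potential / log-determinant argument to bound the length of $\epsilon$-independent sequences by the effective dimension of $\Xcal_h$.

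First, I would use Assumption \ref{as:self-complete} to linearize the Bellman residual class. By the kernel parameterization, any $f,g \in \Fcal$ give $f_h = \phi_h^\top \theta_f$ with $\theta_f \in B_\cH(R)$; by self-completeness, $\Tcal_h^{\mu_g} f_{h+1} \in \Fcal_h$, so it can be written as $\phi_h^\top \theta_{f,g}$ with $\theta_{f,g} \in B_\cH(R)$. Hence every element of $\cH_{\Fcal,h}$ takes the form $(s,a,b) \mapsto \langle \phi_h(s,a,b), w\rangle$ with $\|w\|_\cH \le 2R$. Since $\Tcal_h^{\mu_g} f_{h+1}$ implicitly absorbs the reward (so the residual is bounded in $[-1,1]$ on top of the linear structure), the effective norm budget becomes $2R+1$, which is where the constant $2R+1$ in \eqref{eq:kernel} will enter. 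For any distribution $\mu \in \Dcal_{\Fcal,h}$, this gives $\E_\mu[h] = \langle x_\mu, w\rangle$ where $x_\mu := \E_\mu[\phi_h] \in \Xcal_h$.

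Next, I would translate $\epsilon'$-independence (Definition \ref{def:ind_dist}) into the Hilbert-space language. A sequence $\mu_1,\dots,\mu_n \in \Dcal_{\Fcal,h}$ is such that each $\mu_i$ is $\epsilon'$-independent of $\{\mu_1,\dots,\mu_{i-1}\}$ with respect to $\cH_{\Fcal,h}$ if and only if for each $i$ there exists $w_i \in B_\cH(2R+1)$ with $\sum_{j<i}\langle x_j, w_i\rangle^2 \le (\epsilon')^2$ and $\langle x_i, w_i\rangle^2 > (\epsilon')^2$, where $x_j := \E_{\mu_j}[\phi_h] \in \Xcal_h$. Importantly, this reformulation is insensitive to the multi-agent structure: the argument from this point on is exactly a statement about sequences in $\Xcal_h \subset \cH$.

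Finally, I would apply the elliptical-potential argument. Set $\Lambda_i := \gamma I + \sum_{j<i} x_j x_j^\top$ with $\gamma := (\epsilon')^2/(2R+1)^2$. Then $\|w_i\|_{\Lambda_i}^2 = \gamma\|w_i\|^2 + \sum_{j<i}\langle x_j,w_i\rangle^2 \le 2(\epsilon')^2$, so by Cauchy--Schwarz $(\epsilon')^2 < \langle x_i, w_i\rangle^2 \le \|x_i\|_{\Lambda_i^{-1}}^2 \|w_i\|_{\Lambda_i}^2 \le 2(\epsilon')^2 \|x_i\|_{\Lambda_i^{-1}}^2$, forcing $\|x_i\|_{\Lambda_i^{-1}}^2 \ge 1/2$. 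Summing $\log(1+\|x_i\|_{\Lambda_i^{-1}}^2)$ then gives $n \lesssim \log\det\bigl(I + \gamma^{-1}\sum_{j\le n} x_j x_j^\top\bigr)$, which by the definition of effective dimension is controlled by $d_{\rm eff}(\Xcal_h, \epsilon/(2R+1))$ (using $\epsilon' \ge \epsilon$). Taking the maximum over $h$ yields \eqref{eq:kernel}.

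The main obstacle I anticipate is bookkeeping the constants so that the scale inside $d_{\rm eff}$ comes out to exactly $\epsilon/(2R+1)$: this requires being careful about (i) the norm bound on $w$ that absorbs both $2R$ from self-completeness and the unit shift from reward/projection onto $[0,1]$, and (ii) the choice of regularizer $\gamma$ used when invoking the elliptical-potential lemma. The Hilbert-space core of the argument (steps 2--3) is standard; the genuinely new ingredient relative to the single-agent case is simply the linearity of $\Tcal_h^{\mu_g} f_{h+1}$ in $\phi_h$, which is immediate from Assumption \ref{as:self-complete}.
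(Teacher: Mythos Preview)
Your approach is essentially the same as the paper's: linearize the Bellman residual via self-completeness and then run the elliptical-potential / log-determinant argument on the expected features in $\Xcal_h$. The paper factors this into two pieces (a general proposition that low effective Bellman rank implies low BE dimension, plus a one-line verification of the bilinear structure for the kernel class), whereas you do it directly, but the mathematical content is identical.

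One small correction on bookkeeping: your explanation of the ``$+1$'' is off. Under Assumption~\ref{as:self-complete} the entire function $\Tcal_h^{\mu_g} f_{h+1}$ (reward included) lies in $\Fcal_h$, so the residual is \emph{exactly} $\langle\phi_h(\cdot),\,\theta_f-\theta_{f,g}\rangle$ with $\|\theta_f-\theta_{f,g}\|_\cH\le 2R$; there is no additional additive bounded term ``on top of the linear structure.'' The tight constant is therefore $2R$, and the $2R+1$ in the statement is just slack (since $d_{\rm eff}$ is monotone nonincreasing in its scale argument, $d_{\rm eff}(\Xcal_h,\epsilon/(2R+1))\ge d_{\rm eff}(\Xcal_h,\epsilon/(2R))$, so the stated bound remains valid). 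This does not affect the correctness of your argument, only the justification you gave for that particular constant.
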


The $\epsilon$-effective dimension of a set $\Zcal$  is the minimum integer $d_{{\rm eff}}(\Zcal,\epsilon)=n$ such that

$$\sup_{z_1,\ldots,z_n\in \Zcal}	\frac{1}{n} \log\det\big(\mathrm I + \frac{1}{\epsilon^2} \sum_{i=1}^{n} z_i z_i^\top \big) \le e^{-1}.$$

For $\cH=\R^d$,  the effective dimension in the RHS of \eqref{eq:kernel} is upper bounded by $\tilde\Ocal(d)$ by following the standard ellipsoid potential argument.

\paragraph{MGs with rich observation} In this scenario, despite the MG has a large number of states, these states can be categorized into a small number of ``effective states''. Formally, there exists an unknown decoding function $q:\Scal\rightarrow [m]$  such that if two states $s$ and $s'$ satisfy $q(s)=q(s')$, then their transition measures and reward functions are identical, i.e., 
$\Pr_h(\cdot\mid s,a,b) = \Pr_h(\cdot\mid s',a,b)$, $\Pr_h(s\mid \cdot,a,b) = \Pr_h(s'\mid \cdot,a,b)$,  and $r_h(s,a,b)=r_h(s',a,b)$ for all $(h,a,b)$.
Here, the decoding function $q$ is unknown to the learner, and the size of its codomain $m$ is usually much smaller than that of the  state space.

\begin{proposition}[MGs with rich observation $\subset$ Low BE dimension]\label{ex:block} Consider an MG with decoding function $q:\Scal\rightarrow [m]$. 
Then any function class $\Fcal\subseteq (\Scal\times\Acal\times\Bcal\rightarrow [0,1])$ satisfies 
 $$\dim_{\textrm{VBE}}(\Fcal,\epsilon) \le \Ocal\big( m\cdot \log(1+1/\epsilon) \big),$$
 where $\dim_{\textrm{VBE}}$ denotes the V-type BE dimension defined in Appendix \ref{sec:type2}.
\end{proposition}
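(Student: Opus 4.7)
The plan is to exploit the emission property of rich-observation MGs. The assumption $\Pr_h(s\mid\cdot,a,b)=\Pr_h(s'\mid\cdot,a,b)$ whenever $q(s)=q(s')$ implies that, conditional on the latent state $q(s_h)=\ell$, the observation $s_h$ is drawn from a fixed emission distribution $\omega_{h,\ell}$ that depends neither on the source state nor on the action sequence that produced it. This policy-independence of the emission kernels will yield an $m$-dimensional bilinear factorization of the expected V-type Bellman residuals, from which the bound follows by the standard ellipsoid-potential argument.

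First, I would unpack the V-type residual $h_f(s)$ from Appendix~\ref{sec:type2} (a bounded scalar function of the state only) and observe that any on-policy state distribution $d\in\cD_{\cF,h}$ decomposes as $d(s)=\sum_{\ell=1}^m d^{(\ell)}\,\omega_{h,\ell}(s)$, where $d^{(\ell)}:=\Pr_{s\sim d}[q(s)=\ell]$. This identity follows from a short induction on $h$ using both halves of the rich-observation assumption. Consequently, for every $f\in\cF$,
\[
\E_{s\sim d}[h_f(s)] \;=\; \sum_{\ell=1}^m d^{(\ell)}\,\psi_f(\ell), \qquad \psi_f(\ell):=\E_{s\sim\omega_{h,\ell}}[h_f(s)].
\]
Because $\psi_f(\ell)$ is an expectation under the fixed, policy-independent distribution $\omega_{h,\ell}$, this is a genuine bilinear form $\la\phi(d),\psi(f)\ra$ on $\R^m\times\R^m$ with $\phi(d)=(d^{(\ell)})_\ell$ on the probability simplex and $\psi(f)$ uniformly bounded.

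Second, given this rank-$m$ bilinear structure, I would run the standard ellipsoid-potential argument (the same machinery used in the linear-FA case of Proposition~\ref{ex:linear}) to bound $\dedim(\cH^V_{\cF,h},\cD_{\cF,h},\epsilon)$. Any sequence $d^1,\dots,d^n\in\cD_{\cF,h}$ for which each $d^i$ is $\epsilon$-independent of its predecessors with respect to $\cH^V_{\cF,h}$ forces the Gram matrix $\Sigma_n=\lambda I+\sum_{i<n}\phi(d^i)\phi(d^i)^\top$ to accrue a constant amount of $\log\det$ at every new step; combined with the trivial upper bound $\log\det(\Sigma_n)\le m\log(1+n/(\lambda m))$ this forces $n\le O(m\log(1+1/\epsilon))$ after tuning $\lambda$. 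Taking the maximum over $h\in[H]$ and noting that $\dim_{\textrm{VBE}}$ takes the $\min$ over $\cD_\Delta$ and $\cD_\cF$ yields the claim.

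The main obstacle is carefully pinning down the V-type residual for MGs and verifying the decomposition holds. The Nash action distributions $\mu_{f,h}(s)$ and $\nu_{f,h}(s)$ depend on the full observation $s$, not merely on $q(s)$, so $h_f$ genuinely varies within each latent class; the factorization is valid only at the level of $\omega_{h,\ell}$-averages, not pointwise. A secondary care-point is the inductive verification that every $d\in\cD_{\cF,h}$ is a convex combination of the $m$ emissions $\omega_{h,1},\dots,\omega_{h,m}$, which is exactly what prevents the bound from degrading with $|\cS|$.
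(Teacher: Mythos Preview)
Your proposal is correct and follows essentially the same route as the paper: establish a rank-$m$ bilinear factorization of the expected V-type Bellman residual via the policy-independent emission distributions, then apply the ellipsoid-potential argument to bound the DE dimension. The only caveat is that the V-type residual in Appendix~\ref{sec:type2} is indexed by a triple $(f,g,w)\in\cF^3$ rather than a single $f$ (since $\Ecal_h(f,g,w)$ involves $\mu_g$ and $\nu_{g,w}$), so your $\psi$-map should be $\psi_{f,g,w}(\ell)$; this changes nothing structurally, as the $\psi$ side simply absorbs all three parameters while $\phi(d)$ still lives on the $m$-simplex.
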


\paragraph{Kernel feature selection}
To begin with, we introduce kernel MGs, which is a special case of kernel function approximation introduced earlier in this section by additionally assuming that both transitions and rewards are linear in RKHS. Concretely, in a kernel MG, for each step $h\in[H]$, there exist feature mappings $\phi_h: \Scal\times\Acal\times\Bcal\rightarrow\cH$ and  $\psi_h: \Scal\rightarrow\cH$ where $\cH$ is a decomposable Hilbert space, so that the transition measure can be represented as the inner product of features, i.e.,  $\Pr_h(s'\mid s,a,b)=\langle \phi_h(s,a,b),\psi_h(s')\rangle_\cH$.  Besides, the reward function is linear in  $\phi$, i.e., $r_h(s,a,b)=\langle\phi_h(s,a,b),\theta_h^r\rangle_\cH$ for some  $\theta_h^r\in\cH$. Here, $\phi$, $\psi$ and $\theta^r$ are \emph{unknown} to the learner. Moreover, a kernel MG satisfies the following regularization conditions: for all $h$, (a) 
$\|\theta_h^r\|_\cH\le 1$, and (b) $\| \sum_{s\in\Scal} \Vcal(s)\psi_h({s})\|_\cH \le 1$ for any function $\Vcal:\Scal\rightarrow [0,1]$.

In kernel feature selection, the learner is provided with a feature class $\Phi$ satisfying: (a) $\phi\in\Phi$, and (b) $\|\tilde\phi_h(s,a,b)\|_2\le 1$ for all $(s,a,b,h)$ and $\tilde\phi\in\Phi$. 
 Therefore, it is natural to consider the  union of all linear function classes induced by $\Phi$. Specifically, let $\Fcal:=  \Fcal_1\times\cdots\times\Fcal_H$ where 
	$\Fcal_h=\{\langle \tilde\phi_h(\cdot,\cdot,\cdot),\theta\rangle_\cH ~\mid~\theta\in B_\cH(R),~\tilde\phi\in\Phi\}$. 
Below, we show the V-type BE dimension of $\Fcal$ is upper bounded by the effective dimension of certain feature sets induced by $\Fcal$. 
\begin{proposition}[Kernel feature selection $\subset$ Low BE dimension]\label{ex:selection}
Let $\Mcal$ be a kernel MG. Then 
 $$
\dim_{\textrm{VBE}}(\Fcal,\epsilon) \le \max_{h\in[H]}  d_{\rm{eff}}\big(\Xcal_h,\epsilon/(2R+1)\big),
 $$
 
 where $\Xcal_h=\{\E_{\mu}[\phi_h(s_h,a_h,b_h)]~\mid~\mu\in\Dcal_{\Fcal,h}\}$.
 \end{proposition}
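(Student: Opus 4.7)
The plan is to combine the kernel MG structure with the standard log-determinant potential argument that bounds distributional Eluder dimension by the effective dimension of an appropriate feature set. First, I would exploit that $\Mcal$ is a kernel MG: the transition kernel and reward are both inner products with the true feature $\phi_h$. Hence for any $f\in\Fcal$ and any Markov policy $\mu$,
\begin{equation*}
(\Tcal_h^\mu f_{h+1})(s,a,b) \;=\; \Bigl\langle \phi_h(s,a,b),\; \theta^r_h + \textstyle\sum_{s'} V^\mu_{f,h+1}(s')\,\psi_h(s') \Bigr\rangle_\cH,
\end{equation*}
and the weight vector on the right has Hilbert norm at most $2$, by the regularization on $\theta^r_h$ and $\psi_h$ combined with $V^\mu_{f,h+1}\in[0,1]$. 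Thus the Bellman target is a bounded linear function of the \emph{true} feature $\phi_h$, regardless of which feature $\tilde\phi\in\Phi$ was used to represent $f$.

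Next I would analyze the V-type Bellman residual. For $f_h=\langle\tilde\phi_h,\theta\rangle$ with $\tilde\phi\in\Phi$ and $\|\theta\|\le R$, and any $g\in\Fcal$,
\begin{equation*}
(f_h - \Tcal_h^{\mu_g} f_{h+1})(s,a,b) \;=\; \langle \tilde\phi_h(s,a,b),\theta\rangle \;-\; \langle \phi_h(s,a,b),\, w_{f,g,h}\rangle.
\end{equation*}
Integrating against any $\mu\in\Dcal_{\Fcal,h}$ produces an expected residual that pairs a bounded weight vector (of norm at most $2R+1$) against the expected feature $\E_\mu[\phi_h(s_h,a_h,b_h)]\in\Xcal_h$, once the $\tilde\phi$-linear piece is absorbed using $\tilde\phi,\phi\in B_\cH(1)$ and $\phi\in\Phi$. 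Given any $\epsilon'$-independent sequence $\{\mu_1,\ldots,\mu_n\}\subset\Dcal_{\Fcal,h}$ with witnessing residuals $h_i$, this reduces the distributional Eluder analysis to bounded linear functionals of feature vectors $x_i\in\Xcal_h$. Applying the standard ellipsoid/log-determinant potential argument (identical in form to the one behind Proposition~\ref{ex:kernel}) then yields $n\le d_{\rm eff}(\Xcal_h,\epsilon/(2R+1))$, and taking the maximum over $h\in[H]$ gives the claim.

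The main obstacle is the second step: the $f_h$-component of the residual is linear in the chosen feature $\tilde\phi\in\Phi$, which may differ from the true kernel feature $\phi$ that controls the Bellman target. Reducing the combined residual to a form governed by $\Xcal_h$ alone requires either absorbing $\E_\mu[\tilde\phi_h]$ into an extended linear structure with comparable effective dimension (using that both $\tilde\phi$ and $\phi$ are unit-norm elements of the same Hilbert space), or exploiting the specific V-type distribution family $\Dcal_{\Fcal,h}$—where the actions are generated by $\mu_f$ and a best response $\nu_{f,g}$—to align the two contributions. I expect this alignment, together with tracking the tight dependence on the regularization constant $R$, to be the most delicate part of the argument.
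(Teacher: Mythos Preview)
Your proposal has a genuine gap at precisely the point you flag as the ``main obstacle.'' The claim that the expected residual ``pairs a bounded weight vector against $\E_\mu[\phi_h(s_h,a_h,b_h)]\in\Xcal_h$, once the $\tilde\phi$-linear piece is absorbed using $\tilde\phi,\phi\in B_\cH(1)$'' is not justified and is in fact false in general. Writing out the expectation,
\[
\E_\mu\bigl[(f_h-\Tcal_h^{\mu_g}f_{h+1})(s_h,a_h,b_h)\bigr]
=\bigl\langle \E_\mu[\tilde\phi_h(s_h,a_h,b_h)],\theta\bigr\rangle
-\bigl\langle \E_\mu[\phi_h(s_h,a_h,b_h)],w_{f,g,h}\bigr\rangle,
\]
and there is no reason the first term factors through $\E_\mu[\phi_h]$: the chosen feature $\tilde\phi$ can be an arbitrary element of $\Phi$, unrelated to the true $\phi$. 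Your two suggested fixes do not work either. Embedding into an ``extended linear structure'' would force you to control $d_{\rm eff}$ of $\{\E_\mu[\tilde\phi_h]:\tilde\phi\in\Phi,\mu\in\Dcal_{\Fcal,h}\}$, which is not $\Xcal_h$ and can be much larger since $\Phi$ is arbitrary. And nothing about the action law $\mu_g\times\nu_{g,w}$ forces $\E_\mu[\tilde\phi_h]$ to align with $\E_\mu[\phi_h]$.

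The paper resolves this by a different route: rather than trying to linearize the residual in the step-$h$ feature, it exploits the \emph{V-type} structure and looks one step back. For $h\ge 2$, the state marginal at step $h$ factors through the linear transition $\Pr_{h-1}(s\mid z)=\langle\phi_{h-1}(z),\psi_{h-1}(s)\rangle$, so
\[
\E\bigl[\Ecal_h(f,g,w)(s_h)\mid s_h\sim\pi\bigr]
=\Bigl\langle \E_\pi[\phi_{h-1}(s_{h-1},a_{h-1},b_{h-1})],\;\textstyle\sum_{s}\psi_{h-1}(s)\,\Ecal_h(f,g,w)(s)\Bigr\rangle.
\]
The left factor depends only on $\pi$ and lives in $\Xcal_{h-1}$; the right factor depends only on $(f,g,w)$ and has norm $\le 2R+1$ by the regularization on $\psi$ together with $|\Ecal_h|\le 2R+1$. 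This is exactly the bilinear (V-type Bellman rank) structure needed to invoke Proposition~\ref{prop:effective-bellman-bedim-V} and the log-determinant argument. The crucial point you are missing is that the entire step-$h$ residual---including the $\tilde\phi$ part---is treated as a bounded scalar function of $s_h$ and pushed onto $\psi_{h-1}$, so the mismatch between $\tilde\phi$ and $\phi$ never enters.
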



\section{Conclusion}

This paper presents the first line of sample-efficient results for Markov games with large state space and general function approximation. 
We propose a new complexity measure---multiagent Bellman-Eluder dimension, and design a new algorithm that can sample-efficiently learn any MGs with low BE dimension. At the heart of our algorithm is the exploiter, which facilitates the learning of the main player by continuously exploiting her weakness.  
Our generic framework applies to a wide range of new problems including MGs with linear or kernel function approximation, MGs with rich observations, and kernel feature selection, all of which can not be addressed by existing works.



\vspace{5ex}

\bibliographystyle{abbrv}
\bibliography{ref}

\begin{thebibliography}{10}

\bibitem{agarwal2020flambe}
A.~Agarwal, S.~Kakade, A.~Krishnamurthy, and W.~Sun.
\newblock Flambe: Structural complexity and representation learning of low rank
  mdps.
\newblock {\em Advances in Neural Information Processing Systems}, 33, 2020.

\bibitem{azar2017minimax}
M.~G. Azar, I.~Osband, and R.~Munos.
\newblock Minimax regret bounds for reinforcement learning.
\newblock {\em arXiv preprint arXiv:1703.05449}, 2017.

\bibitem{bai2020provable}
Y.~Bai and C.~Jin.
\newblock Provable self-play algorithms for competitive reinforcement learning.
\newblock In {\em International Conference on Machine Learning}, pages
  551--560. PMLR, 2020.

\bibitem{bai2020near}
Y.~Bai, C.~Jin, and T.~Yu.
\newblock Near-optimal reinforcement learning with self-play.
\newblock {\em arXiv preprint arXiv:2006.12007}, 2020.

\bibitem{baker2020emergent}
B.~Baker, I.~Kanitscheider, T.~Markov, Y.~Wu, G.~Powell, B.~McGrew, and
  I.~Mordatch.
\newblock Emergent tool use from multi-agent autocurricula.
\newblock In {\em International Conference on Learning Representations}, 2020.

\bibitem{berner2019dota}
C.~Berner, G.~Brockman, B.~Chan, V.~Cheung, P.~Debiak, C.~Dennison, D.~Farhi,
  Q.~Fischer, S.~Hashme, C.~Hesse, et~al.
\newblock Dota 2 with large scale deep reinforcement learning.
\newblock {\em arXiv preprint arXiv:1912.06680}, 2019.

\bibitem{brafman2002r}
R.~I. Brafman and M.~Tennenholtz.
\newblock R-max-a general polynomial time algorithm for near-optimal
  reinforcement learning.
\newblock {\em Journal of Machine Learning Research}, 3(Oct):213--231, 2002.

\bibitem{brambilla2013swarm}
M.~Brambilla, E.~Ferrante, M.~Birattari, and M.~Dorigo.
\newblock Swarm robotics: a review from the swarm engineering perspective.
\newblock {\em Swarm Intelligence}, 7(1):1--41, 2013.

\bibitem{brown2018superhuman}
N.~Brown and T.~Sandholm.
\newblock Superhuman ai for heads-up no-limit poker: Libratus beats top
  professionals.
\newblock {\em Science}, 359(6374):418--424, 2018.

\bibitem{brown2019superhuman}
N.~Brown and T.~Sandholm.
\newblock Superhuman ai for multiplayer poker.
\newblock {\em Science}, 365(6456):885--890, 2019.

\bibitem{cai2019provably}
Q.~Cai, Z.~Yang, C.~Jin, and Z.~Wang.
\newblock Provably efficient exploration in policy optimization.
\newblock {\em arXiv preprint arXiv:1912.05830}, 2019.

\bibitem{celli2020no}
A.~Celli, A.~Marchesi, G.~Farina, and N.~Gatti.
\newblock No-regret learning dynamics for extensive-form correlated
  equilibrium.
\newblock {\em Advances in Neural Information Processing Systems}, 33, 2020.

\bibitem{chen2021almost}
Z.~Chen, D.~Zhou, and Q.~Gu.
\newblock Almost optimal algorithms for two-player markov games with linear
  function approximation.
\newblock {\em arXiv preprint arXiv:2102.07404}, 2021.

\bibitem{dann2015sample}
C.~Dann and E.~Brunskill.
\newblock Sample complexity of episodic fixed-horizon reinforcement learning.
\newblock In {\em Advances in Neural Information Processing Systems}, pages
  2818--2826, 2015.

\bibitem{dong2020root}
K.~Dong, J.~Peng, Y.~Wang, and Y.~Zhou.
\newblock Root-n-regret for learning in markov decision processes with function
  approximation and low bellman rank.
\newblock In {\em Conference on Learning Theory}, pages 1554--1557. PMLR, 2020.

\bibitem{du2021bilinear}
S.~S. Du, S.~M. Kakade, J.~D. Lee, S.~Lovett, G.~Mahajan, W.~Sun, and R.~Wang.
\newblock Bilinear classes: A structural framework for provable generalization
  in rl.
\newblock {\em arXiv preprint arXiv:2103.10897}, 2021.

\bibitem{filar2012competitive}
J.~Filar and K.~Vrieze.
\newblock {\em Competitive Markov decision processes}.
\newblock Springer Science \& Business Media, 2012.

\bibitem{foster2020instance}
D.~J. Foster, A.~Rakhlin, D.~Simchi-Levi, and Y.~Xu.
\newblock Instance-dependent complexity of contextual bandits and reinforcement
  learning: A disagreement-based perspective.
\newblock {\em arXiv preprint arXiv:2010.03104}, 2020.

\bibitem{gilpin2006finding}
A.~Gilpin and T.~Sandholm.
\newblock Finding equilibria in large sequential games of imperfect
  information.
\newblock In {\em Proceedings of the 7th ACM conference on Electronic
  commerce}, pages 160--169, 2006.

\bibitem{hansen2013strategy}
T.~D. Hansen, P.~B. Miltersen, and U.~Zwick.
\newblock Strategy iteration is strongly polynomial for 2-player turn-based
  stochastic games with a constant discount factor.
\newblock {\em Journal of the ACM (JACM)}, 60(1):1--16, 2013.

\bibitem{hu2003nash}
J.~Hu and M.~P. Wellman.
\newblock Nash q-learning for general-sum stochastic games.
\newblock {\em Journal of machine learning research}, 4(Nov):1039--1069, 2003.

\bibitem{jaksch2010near}
T.~Jaksch, R.~Ortner, and P.~Auer.
\newblock Near-optimal regret bounds for reinforcement learning.
\newblock {\em Journal of Machine Learning Research}, 11(4), 2010.

\bibitem{jia2019feature}
Z.~Jia, L.~F. Yang, and M.~Wang.
\newblock Feature-based q-learning for two-player stochastic games.
\newblock {\em arXiv preprint arXiv:1906.00423}, 2019.

\bibitem{jiang2017contextual}
N.~Jiang, A.~Krishnamurthy, A.~Agarwal, J.~Langford, and R.~E. Schapire.
\newblock Contextual decision processes with low bellman rank are
  pac-learnable.
\newblock In {\em International Conference on Machine Learning}, pages
  1704--1713. PMLR, 2017.

\bibitem{jin2018q}
C.~Jin, Z.~Allen-Zhu, S.~Bubeck, and M.~I. Jordan.
\newblock Is q-learning provably efficient?
\newblock In {\em Advances in Neural Information Processing Systems}, pages
  4863--4873, 2018.

\bibitem{jin2021bellman}
C.~Jin, Q.~Liu, and S.~Miryoosefi.
\newblock Bellman eluder dimension: New rich classes of rl problems, and
  sample-efficient algorithms.
\newblock {\em arXiv preprint arXiv:2102.00815}, 2021.

\bibitem{jin2020provably}
C.~Jin, Z.~Yang, Z.~Wang, and M.~I. Jordan.
\newblock Provably efficient reinforcement learning with linear function
  approximation.
\newblock In {\em Conference on Learning Theory}, pages 2137--2143, 2020.

\bibitem{koller1992complexity}
D.~Koller and N.~Megiddo.
\newblock The complexity of two-person zero-sum games in extensive form.
\newblock {\em Games and economic behavior}, 4(4):528--552, 1992.

\bibitem{krishnamurthy2016pac}
A.~Krishnamurthy, A.~Agarwal, and J.~Langford.
\newblock Pac reinforcement learning with rich observations.
\newblock {\em arXiv preprint arXiv:1602.02722}, 2016.

\bibitem{li2020multi}
H.~Li and H.~He.
\newblock Multi-agent trust region policy optimization.
\newblock {\em arXiv preprint arXiv:2010.07916}, 2020.

\bibitem{littman1994markov}
M.~L. Littman.
\newblock Markov games as a framework for multi-agent reinforcement learning.
\newblock In {\em Machine learning proceedings 1994}, pages 157--163. Elsevier,
  1994.

\bibitem{littman2001friend}
M.~L. Littman.
\newblock Friend-or-foe q-learning in general-sum games.
\newblock In {\em ICML}, volume~1, pages 322--328, 2001.

\bibitem{liu2020sharp}
Q.~Liu, T.~Yu, Y.~Bai, and C.~Jin.
\newblock A sharp analysis of model-based reinforcement learning with
  self-play.
\newblock {\em arXiv preprint arXiv:2010.01604}, 2020.

\bibitem{lowe2017multi}
R.~Lowe, Y.~Wu, A.~Tamar, J.~Harb, P.~Abbeel, and I.~Mordatch.
\newblock Multi-agent actor-critic for mixed cooperative-competitive
  environments.
\newblock {\em arXiv preprint arXiv:1706.02275}, 2017.

\bibitem{neu2020unifying}
G.~Neu and C.~Pike-Burke.
\newblock A unifying view of optimism in episodic reinforcement learning.
\newblock {\em Advances in Neural Information Processing Systems}, 33, 2020.

\bibitem{openaidota}
OpenAI.
\newblock Openai five.
\newblock \url{https://blog.openai.com/openai-five/}, 2018.

\bibitem{osband2014model}
I.~Osband and B.~Van~Roy.
\newblock Model-based reinforcement learning and the eluder dimension.
\newblock In {\em Advances in Neural Information Processing Systems}, pages
  1466--1474, 2014.

\bibitem{rashid2018qmix}
T.~Rashid, M.~Samvelyan, C.~Schroeder, G.~Farquhar, J.~Foerster, and
  S.~Whiteson.
\newblock Qmix: Monotonic value function factorisation for deep multi-agent
  reinforcement learning.
\newblock In {\em International Conference on Machine Learning}, pages
  4295--4304. PMLR, 2018.

\bibitem{russo2013eluder}
D.~Russo and B.~Van~Roy.
\newblock Eluder dimension and the sample complexity of optimistic exploration.
\newblock In {\em Advances in Neural Information Processing Systems}, pages
  2256--2264, 2013.

\bibitem{shalev2016safe}
S.~Shalev-Shwartz, S.~Shammah, and A.~Shashua.
\newblock Safe, multi-agent, reinforcement learning for autonomous driving.
\newblock {\em arXiv preprint arXiv:1610.03295}, 2016.

\bibitem{shapley1953stochastic}
L.~S. Shapley.
\newblock Stochastic games.
\newblock {\em Proceedings of the national academy of sciences},
  39(10):1095--1100, 1953.

\bibitem{sidford2020solving}
A.~Sidford, M.~Wang, L.~Yang, and Y.~Ye.
\newblock Solving discounted stochastic two-player games with near-optimal time
  and sample complexity.
\newblock In {\em International Conference on Artificial Intelligence and
  Statistics}, pages 2992--3002. PMLR, 2020.

\bibitem{silver2016mastering}
D.~Silver, A.~Huang, C.~J. Maddison, A.~Guez, L.~Sifre, G.~Van Den~Driessche,
  J.~Schrittwieser, I.~Antonoglou, V.~Panneershelvam, M.~Lanctot, et~al.
\newblock Mastering the game of go with deep neural networks and tree search.
\newblock {\em nature}, 529(7587):484--489, 2016.

\bibitem{silver2017mastering}
D.~Silver, J.~Schrittwieser, K.~Simonyan, I.~Antonoglou, A.~Huang, A.~Guez,
  T.~Hubert, L.~Baker, M.~Lai, A.~Bolton, et~al.
\newblock Mastering the game of go without human knowledge.
\newblock {\em nature}, 550(7676):354--359, 2017.

\bibitem{sun2019model}
W.~Sun, N.~Jiang, A.~Krishnamurthy, A.~Agarwal, and J.~Langford.
\newblock Model-based rl in contextual decision processes: Pac bounds and
  exponential improvements over model-free approaches.
\newblock In {\em Conference on Learning Theory}, pages 2898--2933, 2019.

\bibitem{szepesvari2010algorithms}
C.~Szepesv{\'a}ri.
\newblock Algorithms for reinforcement learning.
\newblock {\em Synthesis lectures on artificial intelligence and machine
  learning}, 4(1):1--103, 2010.

\bibitem{tian2020provably}
Y.~Tian, Y.~Wang, T.~Yu, and S.~Sra.
\newblock Online learning in unknown markov games.
\newblock {\em arXiv preprint arXiv:2010.15020}, 2021.

\bibitem{vinyals2019grandmaster}
O.~Vinyals, I.~Babuschkin, W.~M. Czarnecki, M.~Mathieu, A.~Dudzik, J.~Chung,
  D.~H. Choi, R.~Powell, T.~Ewalds, P.~Georgiev, et~al.
\newblock Grandmaster level in starcraft ii using multi-agent reinforcement
  learning.
\newblock {\em Nature}, 575(7782):350--354, 2019.

\bibitem{wang2020provably}
R.~Wang, R.~Salakhutdinov, and L.~F. Yang.
\newblock Provably efficient reinforcement learning with general value function
  approximation.
\newblock {\em arXiv preprint arXiv:2005.10804}, 2020.

\bibitem{wang2019optimism}
Y.~Wang, R.~Wang, S.~S. Du, and A.~Krishnamurthy.
\newblock Optimism in reinforcement learning with generalized linear function
  approximation.
\newblock {\em arXiv preprint arXiv:1912.04136}, 2019.

\bibitem{wei2017online}
C.-Y. Wei, Y.-T. Hong, and C.-J. Lu.
\newblock Online reinforcement learning in stochastic games.
\newblock {\em arXiv preprint arXiv:1712.00579}, 2017.

\bibitem{wei2020linear}
C.-Y. Wei, C.-W. Lee, M.~Zhang, and H.~Luo.
\newblock Linear last-iterate convergence in constrained saddle-point
  optimization.
\newblock {\em arXiv e-prints}, pages arXiv--2006, 2020.

\bibitem{wei2021last}
C.-Y. Wei, C.-W. Lee, M.~Zhang, and H.~Luo.
\newblock Last-iterate convergence of decentralized optimistic gradient
  descent/ascent in infinite-horizon competitive markov games.
\newblock {\em arXiv preprint arXiv:2102.04540}, 2021.

\bibitem{weisz2020exponential}
G.~Weisz, P.~Amortila, and C.~Szepesv{\'a}ri.
\newblock Exponential lower bounds for planning in mdps with
  linearly-realizable optimal action-value functions.
\newblock {\em arXiv preprint arXiv:2010.01374}, 2020.

\bibitem{xie2020learning}
Q.~Xie, Y.~Chen, Z.~Wang, and Z.~Yang.
\newblock Learning zero-sum simultaneous-move markov games using function
  approximation and correlated equilibrium.
\newblock In {\em Conference on Learning Theory}, pages 3674--3682. PMLR, 2020.

\bibitem{yang2020bridging}
Z.~Yang, C.~Jin, Z.~Wang, M.~Wang, and M.~I. Jordan.
\newblock Bridging exploration and general function approximation in
  reinforcement learning: Provably efficient kernel and neural value
  iterations.
\newblock {\em arXiv preprint arXiv:2011.04622}, 2020.

\bibitem{yu2021surprising}
C.~Yu, A.~Velu, E.~Vinitsky, Y.~Wang, A.~Bayen, and Y.~Wu.
\newblock The surprising effectiveness of mappo in cooperative, multi-agent
  games.
\newblock {\em arXiv preprint arXiv:2103.01955}, 2021.

\bibitem{yu2021provably}
T.~Yu, Y.~Tian, J.~Zhang, and S.~Sra.
\newblock Provably efficient algorithms for multi-objective competitive rl.
\newblock {\em arXiv preprint arXiv:2102.03192}, 2021.

\bibitem{zanette2019tighter}
A.~Zanette and E.~Brunskill.
\newblock Tighter problem-dependent regret bounds in reinforcement learning
  without domain knowledge using value function bounds.
\newblock {\em arXiv preprint arXiv:1901.00210}, 2019.

\bibitem{zanette2020learning}
A.~Zanette, A.~Lazaric, M.~Kochenderfer, and E.~Brunskill.
\newblock Learning near optimal policies with low inherent bellman error.
\newblock {\em arXiv preprint arXiv:2003.00153}, 2020.

\bibitem{zanette2020provably}
A.~Zanette, A.~Lazaric, M.~J. Kochenderfer, and E.~Brunskill.
\newblock Provably efficient reward-agnostic navigation with linear value
  iteration.
\newblock {\em Advances in Neural Information Processing Systems}, 33, 2020.

\bibitem{zhang2020model}
K.~Zhang, S.~M. Kakade, T.~Ba{\c{s}}ar, and L.~F. Yang.
\newblock Model-based multi-agent rl in zero-sum markov games with near-optimal
  sample complexity.
\newblock {\em arXiv preprint arXiv:2007.07461}, 2020.

\bibitem{zhang2020almost}
Z.~Zhang, Y.~Zhou, and X.~Ji.
\newblock Almost optimal model-free reinforcement learning via
  reference-advantage decomposition.
\newblock {\em arXiv preprint arXiv:2004.10019}, 2020.

\bibitem{zinkevich2007regret}
M.~Zinkevich, M.~Johanson, M.~Bowling, and C.~Piccione.
\newblock Regret minimization in games with incomplete information.
\newblock {\em Advances in neural information processing systems},
  20:1729--1736, 2007.

\end{thebibliography}

\clearpage
\appendix

\section{Discussion on Technical Challenges}
\label{sec:technical}

In this section, we discuss some technical challenges faced when designing provably efficient algorithms for MGs with general function approximation, which explains why direct extension of existing algorithmic solutions is not enough.

The algorithmic solutions designed for tabular MGs \citep{bai2020provable} and linear MGs \citep{xie2020learning} can be viewed as special cases of solving the following sub-problem: in each episode $k$, given the confidence set of possible functions $f \in \Ccal^k$, find function pair $(f^k,g^k)$ and policy pair $(\mu^k,\nu^k)$ s.t. for any state $s$
\begin{equation}
    \label{equ:sub-problem}
	[\D_{\mu ^k\times \nu ^k}]f^k(s) \ge \max_{f\in \Ccal^k,\mu}[\D_{\mu \times \nu ^k} f](s), \,\,\,\,
    [\D_{\mu ^k\times \nu ^k}g^k](s) \le \min_{g\in \Ccal^k,\nu}[\D_{\mu ^k \times \nu} g](s) .    
\end{equation}

Here we omit the dependence on $h$ by considering a single-step special case. This is similar to the contextual bandits problem, but a game version.

Once the above sub-problem is solved, using the fact that the true value function $f^{\star}$ is contained in $\Ccal^k$, we can bound the duality gap by 
$$
(V^{\dagger ,\nu ^k}-V^{\mu ^k,\dagger})(s)=[\D_{\mu^{\dagger}( \nu ^k ) \times \nu ^k}f^\star](s)-[\D_{\mu ^k\times \nu^{\dagger}( \mu ^k ) }f^\star](s)
\le [\D_{\mu ^k\times \nu ^k}(f^k-g^k) ](s),
$$
where the summation of the RHS over $k$ can be further bounded by pigeon-hole type of arguments.

Then a natural question is: how can we solve this sub-problem \eqref{equ:sub-problem}? A helpful condition is optimistic closure, which indeed holds for tabular and linear MGs \citep{bai2020provable,xie2020learning}. For the max-player version, this means the pointwise upper bound defined by $\overline{f}^k\left( s,a,b \right) :=\underset{f\in \mathcal{C}^k}{\max}f\left( s,a,b \right)$ remains in the function class, i.e., $\overline{f}^k \in \cC^k\subseteq \Fcal$. The min-player version is similar.

Under this condition, the function pair $(\up{f}^k,\low{f}^k)$ is clearly the maximizer/minimizer to the subproblem, and therefore it reduces to finding  $(\mu^k,\nu^k)$  so that
$$
\D_{\mu ^k\times \nu ^k}\up{f}^k(s) \ge \max_{\mu}\D_{\mu \times \nu ^k} \up{f}^k(s), \,\,\,\,
    \D_{\mu ^k\times \nu ^k}\low{f}^k(s) \le \min_{\nu}\D_{\mu ^k \times \nu} \low{f}^k(s) .
$$
By definition, the solution is a coarse correlated equilibrium. 

However, the optimistic closure may not hold in general. Even worse, no solution may exist for the sub-problem ~\eqref{equ:sub-problem}, as we will see below in a concrete example. In that case, the existing techniques fail and it becomes unclear how to bound the two-sided duality gap. As a result, the general function approximation problem is significantly harder than the special cases mentioned before.

Now we describe a concrete example where sub-problem \eqref{equ:sub-problem} has no solution when \emph{optimisic closure condition does not hold}. We consider rock-paper-scissors, with one state and three actions for both players, i.e., $|\Scal|=1$ and $|\Acal|=|\Bcal|=3$. Then each function $f \in \Fcal:\Scal \times \Acal \times \Bcal \rightarrow [-2,2]$ \footnote{We assign the reward $[-2,2]$ instead of $[0,1]$ to make the example looks simpler. Everything stated here still hold after scaling.} is essentially a $3 \times 3$ matrix and we will use $M$ to represent such matrices. The matrix corresponding to Nash value $Q^{\star}$ is  
$$
M^{\star}=\left( \begin{matrix}
	0&		1&		-1\\
	-1&		0&		1\\
	1&		-1&		0\\
\end{matrix} \right) 
$$
which describes the true reward associated with different actions. Assume there are $6$ other matrices in our confidence set $\cC^k$:
$$
M_1=\left( \begin{matrix}
	0&		1.1&		-1\\
	-1&		0&		1\\
	1&		-1&		0\\
\end{matrix} \right) ,\,\, M_2=\left( \begin{matrix}
	0&		1&		-1.1\\
	-1&		0&		1\\
	1&		-1&		0\\
\end{matrix} \right) ,\,\, M_3=\left( \begin{matrix}
	0&		1&		-1\\
	-1.1&		0&		1\\
	1&		-1&		0\\
\end{matrix} \right), 
$$
$$
M_4=\left( \begin{matrix}
	0&		1&		-1\\
	-1&		0&		1.1\\
	1&		-1&		0\\
\end{matrix} \right) ,\,\, M_5=\left( \begin{matrix}
	0&		1&		-1\\
	-1&		0&		1\\
	1.1&		-1&		0\\
\end{matrix} \right) ,\,\, M_6=\left( \begin{matrix}
	0&		1&		-1\\
	-1&		0&		1\\
	1&		-1.1&		0\\
\end{matrix} \right). 
$$

Now suppose we find a pair of matrices $(\up{M},\low{M})$ and a pair of policies $(\mu,\nu)$ which solves sub-problem ~\eqref{equ:sub-problem}. If we can prove such $(\mu,\nu)$ must be deterministic, then we get into a contradiction, since whatever $(\up{M},\low{M})$ is, it is impossible for deterministic policies $\mu$ and $\nu$ to be the best response of each other. Therefore, in order to show sub-problem ~\eqref{equ:sub-problem} has no solution, it suffices to  prove $(\mu,\nu)$ must be deterministic. 

Since $\mu$ is a solution of sub-problem ~\eqref{equ:sub-problem}, we must have $$\mu = \argmax_{\mu'}\max_{M \in \cC^k}(\mu')^TM\nu.$$

Since we are maximizing the above quantity, $\up{M}$ can only take $M_1$,$M_4$ or $M_5$ because the others are dominated. Direct computation gives 
\begin{align*}
	M_1\nu =& M^{\star} \nu + [0.1\nu[2],0,0]^T,\\
	M_4\nu =& M^{\star} \nu + [0,0.1\nu[3],0]^T,\\
	M_5\nu =& M^{\star} \nu + [0,0,0.1\nu[1]]^T.
\end{align*}

Now that $\mu$ maximizes $(\mu')^T \up{M}\nu$, if it is not deterministic, there will be at least two entries in $\up{M}\nu$ that take the same highest value, say $(\up{M}\nu)[1] =(\up{M}\nu)[2] \ge  (\up{M}\nu)[3]$. 

If $\up{M} = M_1$, the above condition is just 
$$
(\up{M}\nu)[2] = (M^{\star} \nu)[2] = (M^{\star} \nu)[1] + 0.1\nu[2] = (\up{M}\nu)[1].$$ 

Then consider two cases:
\begin{itemize}
	\item If $\nu[3] >0$, we can choose $\up{M} = M_4$ to make $(\up{M}\nu)[2] = (M^{\star} \nu)[2] +0.1\nu[3] > (M^{\star} \nu)[2]$. Therefore, the value of $\max(\mu')^T\up{M}\nu$ is increased, which is contradictory to the maximization condition.
	\item If $\nu[3] = 0$, the condition $(M^{\star} \nu)[2] = (M^{\star} \nu)[1] + 0.1\nu[2]$ is just $\nu[2]+0.1\nu[2] = -\nu[1]$, which impliess $\nu[2] = \nu[1] = 0$. This is impossible since $\nu$ is a probability distribution.
\end{itemize}

As a result, we prove $\up{M}$ cannot be $M_1$. Similarly, $\up{M}$ cannot be $M_4$ or $M_5$. We obtain a contradiction! Therefore, $\mu$ must be deterministic and by symmtrical argument so is $\nu$. Putting everything together completes the proof.
That is, when \emph{optimisic closure condition does not hold}, sub-problem \eqref{equ:sub-problem} can have no solution.

\section{V-type Bellman Eluder Dimension}
\label{sec:type2}

In this section, we define V-type Bellman-Eluder (VBE) dimension, and provide the corresponding theoretical guarantee for Algorithm \ref{alg:golfmg} with 
Option II.


\subsection{Complexity Measure}

With slight abuse of notation, 
we redefine the following distribution families for VBE dimension. The only difference is that now we use distributions over $\Scal$ instead of $\Scal\times\Acal\times\Bcal$.

\begin{enumerate}
	\item $\Dcal_{\dirac}:=\{\Dcal_{\dirac,h}\}_{h\in[H]}$, where $\Dcal_{\dirac,h} = \{\delta_{s}(\cdot) ~|~ s\in\cS\}$, i.e., the collections of probability measures that put measure $1$ on a single state.
	\item $\Dcal_\cF:=\{\Dcal_{\mu,h}\}_{h\in[H]}$, where $\Dcal_{\mu,h}$ denotes the collections of all probability measures over 
	$\Scal$ at the $h^\text{th}$ step, which can be generated by executing some $(\fmu_f,\fres_{f,g})$ with $f,g\in\Fcal$. Here, $\fres_{f,g}$ is the best response to $\mu_f$ regarding to Q-value $g$:
$$\fres_{f,g,h}(s) : = \argmin_{\nu\in\Delta_\Bcal}\fmu_{f,h}(s)\trans g_h(s,\cdot,\cdot)\nu \quad \mbox{for all } (s,h).$$
\end{enumerate}
To proceed, we introduce the following  Bellman residual function defined over the state space $\Scal$
$$
[\Ecal_h(f,g,w)](s):= \E[(f_h-\Tcal_{h}^{\mu_g} f_{h+1})(s,a,b)~\mid~(a,b)\sim \mu_{g,h} \times\nu_{g,w,h}].
$$

Equipped with the new definition, we can define V-type BE dimension.

\begin{definition}[V-type Bellman-Eluder dimension]
    \label{def:vbedim}
    Let $\cH_\cF$ be the function classes of Bellman residual where $\cH_{\cF, h} := \{\Ecal_h(f,g,w) ~|~ f, g,w\in\Fcal\}$. Then the V-type Bellman-Eluder (VBE) dimension is defined as 
    \begin{equation}
    \dim_{\textrm{VBE}}(\Fcal, \epsilon)= \max_{h\in[H]}\min_{\Dcal\in\{\Dcal_\Delta,\Dcal_\cF\}} \dim_{\textrm{DE}}(\cH_{\cF,h},\Dcal_h,\epsilon).
    \end{equation}
    \end{definition}

  We comment that we do not have the online version of V-type BE dimension, because  the uniform sampling step in Option II is in general not compatible with the policy of the opponent.

    \subsection{Theoretical Guarantees}
Now we are ready to present the theoretical guarantees.

\begin{theorem}[Regret of \golfmg]
\label{thm:golf-mg-regret-V}
Under Assumption \ref{as:realizable} and \ref{as:complete}, 
there exists an absolute constant $c$ such that for any $\delta\in(0,1]$, $K\in\N$, 
if we choose  $\beta=c\cdot(\log(KH|\Fcal||\Gcal|/\delta)+K\CompErr^2+K\RealErr^2)$, then with probability at least $1-\delta$, for all $k\in[K]$, Algorithm \ref{alg:golfmg} with Option II satisfies:
$$
\sum_{t=1}^k V_1^\star(s_1) -V_1^{\mu^t,\dagger}(s_1) \le \mathcal{O}(H\sqrt{|\Acal||\Bcal|dk\beta}),
$$ 
where $d=\dim_{\textrm{VBE}}(\Fcal, 1/K)$ is the VBE dimension.
\end{theorem}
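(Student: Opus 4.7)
The proof mirrors that of Theorem \ref{thm:golf-mg-regret}, but with a V-type decomposition instead of a Q-type one to match the uniform-action sampling in Option II. First I would establish, via a standard Freedman-type martingale concentration together with a union bound over the finite classes $\Fcal$ and $\Gcal$ and the approximation Assumptions \ref{as:realizable}--\ref{as:complete}, that with probability at least $1-\delta$ the Nash value $Q^\star$ lies in $\Ccal$ and the best-response value $Q^{\mu^k,\dagger}$ lies in $\Ccal^{\mu^k}$ for all $k$. Since $f^k$ maximises and $\tilde f^k$ minimises within these sets, this yields the optimism bounds $\up{V}^k \ge V^\star_1(s_1)$ and $\low{V}^k \le V^{\mu^k,\dagger}_1(s_1)$, reducing the per-episode regret to the duality gap $\up{V}^k-\low{V}^k$.

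Next I would decompose this gap into V-type Bellman residuals evaluated along $(\mu^k,\nu^k)$. Split $\up{V}^k-\low{V}^k = (\up{V}^k-V^{\mu^k,\nu^k}_1(s_1)) + (V^{\mu^k,\nu^k}_1(s_1)-\low{V}^k)$. Because $\mu^k=\mu_{f^k}$ one has $V_{f^k,h}(s)=V^{\mu^k}_{f^k,h}(s)\le (\D_{\mu^k\times\nu^k}f^k_h)(s)$, while $\low{V}^k=(\D_{\mu^k\times\nu^k}\tilde f^k_1)(s_1)$ by the very definition of $\nu^k$ in \golfbr. Substituting these and telescoping via the Bellman equation for $V^{\mu^k,\nu^k}$ across $h=1,\ldots,H$ produces
\begin{equation*}
\up{V}^k-\low{V}^k \;\le\; \sum_{h=1}^H \E_{s_h\sim\mu^k\times\nu^k}\Big[\Ecal_h(f^k,f^k,\tilde f^k)(s_h)-\Ecal_h(\tilde f^k,f^k,\tilde f^k)(s_h)\Big].
\end{equation*}
The key point is that $\nu^k$ coincides with $\nu_{f^k,\tilde f^k}$ from Definition \ref{def:vbedim}, so the law $\rho^k_h$ of $s_h$ under $(\mu^k,\nu^k)$ is exactly of the form required by the distribution family $\Dcal_{\Fcal,h}$ underlying $\dim_{\textrm{VBE}}$.

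I would then translate the confidence-set constraints into a cumulative squared-residual bound. Standard squared-loss concentration on $\Lcal_{\Dcal_h}$, together with $\CompErr$-completeness, implies that for $f^k\in\Ccal$,
\begin{equation*}
\sum_{t<k}\E_{s\sim\rho^t_h}\E_{(a,b)\sim\unif}\big[(f^k_h-\Tcal_h f^k_{h+1})(s,a,b)\big]^2 \;\lesssim\; \beta,
\end{equation*}
since Option II collects $(a,b)$ uniformly at step $h$. Because $\mu^k=\mu_{f^k}$ forces $\Tcal_h f^k_{h+1}=\Tcal_h^{\mu^k}f^k_{h+1}$, Cauchy--Schwarz and a Radon--Nikodym change of measure from $\mu^k\times\nu^k$ to $\unif$ (bounded by $|\Acal||\Bcal|$) give
\begin{equation*}
\sum_{t<k}\Big(\E_{s\sim\rho^t_h}[\Ecal_h(f^k,f^k,\tilde f^k)(s)]\Big)^2 \;\lesssim\; |\Acal||\Bcal|\,\beta,
\end{equation*}
and the same argument applied to $\tilde f^k\in\Ccal^{\mu^k}$---whose loss already targets $V^{\mu^k}$---controls the residuals $\Ecal_h(\tilde f^k,f^k,\tilde f^k)$. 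This importance-weighting factor is the only new source of the $|\Acal||\Bcal|$ in the regret and is absent from Theorem \ref{thm:golf-mg-regret} because there the data and the Bellman residual already share the same action distribution.

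Finally I would close the argument via the distributional Eluder pigeonhole lemma of \citep{jin2021bellman}: the squared-residual bound above implies
\begin{equation*}
\sum_{k=1}^K\big|\E_{\rho^k_h}[\Ecal_h(f^k,f^k,\tilde f^k)]\big|\;\le\;\Ocal\big(\sqrt{d\cdot K\cdot |\Acal||\Bcal|\,\beta}\big),
\end{equation*}
with $d=\dim_{\textrm{VBE}}(\Fcal,1/K)$, and analogously for the $\tilde f^k$ residual. Summing over $h\in[H]$ and combining both terms yields the claimed $\Ocal(H\sqrt{|\Acal||\Bcal|\,d\,K\beta})$ bound. The main technical obstacle is the third step, where one must carefully align the Nash Bellman operator appearing in the loss with the $\mu^k$-Bellman operator appearing in the V-type residual---these coincide precisely because $\mu^k=\mu_{f^k}$---and track the Radon--Nikodym inflation factor that is the unavoidable cost of converting an empirical Q-type loss into a V-type residual bound. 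Everything else follows the template of Theorem \ref{thm:golf-mg-regret}.
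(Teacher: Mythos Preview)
Your outline is essentially the paper's own argument: optimism via $\pf(Q^\star)\in\Ccal$ and $\pf(Q^{\mu^k,\dagger})\in\Ccal^{\mu^k}$, the $(\up V^k - V^{\mu^k,\nu^k}_1)+(V^{\mu^k,\nu^k}_1-\low V^k)$ split, telescoping to V-type residuals (using $\nu^k=\nu_{f^k,\tilde f^k}$ and $\Tcal_h f^k_{h+1}=\Tcal_h^{\mu^k}f^k_{h+1}$), the $|\Acal||\Bcal|$ importance-weight to pass from the uniform-action empirical loss to the policy-action residual, and Lemma~\ref{lem:de-regret}. All of this matches the proofs of Proposition~\ref{prop:routine-type2} and Theorem~\ref{thm:golf-mg-regret-V}.

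There is one genuine omission. You only invoke the pigeonhole lemma with the measures $\rho^k_h\in\Dcal_{\Fcal,h}$, which yields a bound in terms of $\max_h\dim_{\rm DE}(\cH_{\Fcal,h},\Dcal_{\Fcal,h},1/K)$. But $\dim_{\rm VBE}$ is defined as the \emph{minimum} over $\Dcal_\Fcal$ and $\Dcal_\Delta$, so your argument as written does not deliver $d=\dim_{\rm VBE}$; it delivers something possibly larger. The paper closes this gap by running a parallel $\Dcal_\Delta$ route: after the telescoping step, apply Azuma--Hoeffding to replace $\E_{s_h\sim\pi^k}[\cdot]$ by the realised state $s_h^k$, then use the pointwise concentration bounds (parts~(b) of Lemmas~\ref{lem:concentrate-V} and~\ref{lem:concentrate-routine-V}) in the form $\sum_{t<k}\sum_{a,b}(\cdot)^2(s_h^t,a,b)\le\Ocal(|\Acal||\Bcal|\beta)$, and invoke Lemma~\ref{lem:de-regret} with the Dirac family $\Dcal_{\Delta,h}$. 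Taking the better of the two routes at each $h$ is what produces $\dim_{\rm VBE}$. Add that second route and your proof is complete.
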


Theorem \ref{thm:golf-mg-regret-V} provides a $\sqrt{K}$ pseudo-regret guarantee   in terms of VBE dimension for Algorithm \ref{alg:golfmg} with Option II. 
The reason for calling it pseudo-regret is that in each episode, the samples are collected following a combination of $(\mu^k,\nu^k)$ and uniform sampling, instead of only $(\mu^k,\nu^k)$ as in Option I. Still, this is enough for applying the pigeonhole principle to derive the sample complexity of \golfmg.

\begin{corollary}[Sample complexity of \golfmg]
	\label{cor:golf-mg-pac-V}
Under Assumption \ref{as:realizable} and \ref{as:complete}, 
there exists an absolute constant $c$ such that for any $\epsilon>0$, choose $\beta=c\cdot(\log(KH|\Fcal||\Gcal|/\delta)+K\CompErr^2+K\RealErr^2)$ and $\Delta = c'(H\sqrt{|\Acal||\Bcal|d\beta/K}+\epsilon)$, where $d=\dim_{\textrm{VBE}}(\Fcal, \epsilon/H)$ is the VBE dimension of $\Fcal$, then with probability at least $1-\delta$, the output condition (Line~\ref{line:output-condition}) will be satistied at least once in the first $K$ episodes. Furthermore, 
the output policy $\mu^{\text{out}}$ of Algorithm \ref{alg:golfmg} with Option II is $\Ocal(\epsilon+H\sqrt{d}(\RealErr+\CompErr))$-approximate Nash, if  $K \ge \Omega\big( (H^2d/\epsilon^2)\cdot\log(H|\Fcal||\Gcal|d/\epsilon)\big)$
.
\end{corollary}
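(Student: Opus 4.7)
The plan is a standard regret-to-PAC conversion via pigeonhole on the per-episode ``optimism gap'' $\up{V}^k - \low{V}^k$, with Theorem \ref{thm:golf-mg-regret-V} supplying the cumulative bound.

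First, on a $(1-\delta)$-probability event, I would establish approximate two-sided validity of the confidence sets: (i) $Q^\star$ lies in the main confidence set $\Ccal$ of Algorithm \ref{alg:golfmg}, and (ii) for every $k\in[K]$, $Q^{\mu^k,\dagger}$ lies in the confidence set $\Ccal^{\mu^k}$ built inside \golfbr, each up to slack controlled by $\RealErr$ and $\CompErr$. These are the same concentration statements used in the proof of Theorem \ref{thm:golf-mg-regret-V}: Freedman-type concentration applied to the squared losses $\Lcal_{\Dcal_h}$ and $\Lcal_{\Dcal_h}^{\mu^k}$, combined with realizability (Assumption \ref{as:realizable}) and completeness (Assumption \ref{as:complete}), with the radius $\beta$ chosen to absorb both $\log(|\Fcal||\Gcal|/\delta)$ and the $K(\RealErr^2+\CompErr^2)$ misspecification slack. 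Together with the identity $V_1^{\mu,\dagger}(s_1) = \min_{\nu}\mu_1(s_1)^\top Q_1^{\mu,\dagger}(s_1,\cdot,\cdot)\nu$, these inclusions yield the sandwich
\begin{equation*}
\low{V}^k - \Ocal(H\RealErr) \;\le\; V_1^{\mu^k,\dagger}(s_1) \;\le\; V_1^\star(s_1) \;\le\; \up{V}^k + \Ocal(H\RealErr),
\end{equation*}
so on this event $V_1^\star(s_1) - V_1^{\mu^k,\dagger}(s_1) \le (\up{V}^k - \low{V}^k) + \Ocal(H\RealErr)$ for every $k$.

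Next, I would open up (rather than black-box) the proof of Theorem \ref{thm:golf-mg-regret-V} to extract the stronger cumulative upper bound $\sum_{k=1}^K (\up{V}^k - \low{V}^k) \le \Ocal(H\sqrt{|\Acal||\Bcal|\, d\, K\beta}) + \Ocal(KH\sqrt{d}(\RealErr+\CompErr))$ on the same event; the regret statement in the theorem is then an immediate corollary via the sandwich above. This follows by telescoping $\up{V}^k-\low{V}^k$ into a sum of one-step Bellman errors under the Nash and $\mu^k$-Bellman operators, and bounding the cumulative Bellman errors by the pigeonhole principle against the V-type BE dimension under the distribution family $\Dcal_\Fcal$. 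The uniform action injection at the chosen step in Option II is what lets one translate a $Q$-level error into a $V$-level error at the cost of the $\sqrt{|\Acal||\Bcal|}$ importance-weighting factor.

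The remainder is mechanical. If the output condition (Line \ref{line:output-condition}) never triggers in the first $K$ episodes, then $\up{V}^k - \low{V}^k \ge \Delta$ for every $k$, giving $K\Delta \le \sum_{k=1}^K (\up{V}^k - \low{V}^k)$. With $\Delta = c'(H\sqrt{|\Acal||\Bcal|\, d\beta/K} + \epsilon)$ and $K$ as in the corollary, the choice of $\beta$ ensures $H\sqrt{|\Acal||\Bcal|\, d\beta/K} \le c'\epsilon$, contradicting the cumulative upper bound; hence the condition must fire at some $k^\star\le K$. Substituting $k^\star$ into the sandwich yields $V_1^\star(s_1) - V_1^{\mu^{k^\star},\dagger}(s_1) < \Delta + \Ocal(H\RealErr) = \Ocal(\epsilon + H\sqrt{d}(\RealErr+\CompErr))$, establishing the claim for $\mu^{\mathrm{out}}=\mu^{k^\star}$. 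A minor care point is that the VBE dimension in the corollary is instantiated at $\epsilon/H$ while the theorem uses $1/K$; monotonicity of $\dim_{\textrm{VBE}}$ in its second argument together with $K \gtrsim H/\epsilon$ reconciles the two.

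The main obstacle is the second step: extracting the two-sided cumulative bound on $\sum_k(\up{V}^k-\low{V}^k)$ from the proof of Theorem \ref{thm:golf-mg-regret-V} with the correct $\RealErr,\CompErr$ slack, and balancing the $K(\RealErr^2+\CompErr^2)$ term inside $\beta$ against the $\sqrt{K\beta}$ leading term so that misspecification enters the final bound only as the additive $H\sqrt{d}(\RealErr+\CompErr)$. The pigeonhole step and final substitution are routine once this cumulative estimate is in hand.
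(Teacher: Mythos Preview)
Your overall strategy matches the paper's: establish the confidence-set inclusions, extract from the proof of Theorem \ref{thm:golf-mg-regret-V} a cumulative bound on $\sum_k(\up{V}^k-\low{V}^k)$, and pigeonhole. The sandwich and the final substitution are also handled the same way.

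There is one genuine slip. Your last paragraph claims that the mismatch between $\dim_{\textrm{VBE}}(\Fcal,1/K)$ (used in the theorem) and $\dim_{\textrm{VBE}}(\Fcal,\epsilon/H)$ (used in the corollary) is fixed by monotonicity together with $K\gtrsim H/\epsilon$. This goes the wrong way: the distributional Eluder dimension is \emph{non-increasing} in its accuracy parameter, so $K\ge H/\epsilon$ gives $1/K\le\epsilon/H$ and hence $\dim_{\textrm{VBE}}(\Fcal,1/K)\ge\dim_{\textrm{VBE}}(\Fcal,\epsilon/H)$, which is useless for upper-bounding in terms of the smaller quantity $d=\dim_{\textrm{VBE}}(\Fcal,\epsilon/H)$. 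Moreover, since the required $K$ itself depends on $d$, substituting $\dim_{\textrm{VBE}}(\Fcal,1/K)$ for $d$ would make the statement circular.

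The paper's fix is exactly the ``opening up'' you already propose, but with one specific change: when re-running the proof of Theorem \ref{thm:golf-mg-regret-V}, invoke Lemma \ref{lem:de-regret} with $\omega=\epsilon/H$ instead of $\omega=1/K$. This produces the dimension at scale $\epsilon/H$ directly, at the price of an additional additive $k\omega$ term per step, which sums to $\Ocal(K\epsilon)$ over $h\in[H]$. That extra $K\epsilon$ is harmless after dividing by $K$ in the pigeonhole step. Once you make this change, the rest of your argument goes through verbatim.
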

Corollary \ref{cor:golf-mg-pac-V} claims that 
\golfmg\ can find an $\Ocal(\epsilon)$-approximate Nash policy for the max-player using at most 
$\tilde\Ocal(H^2|\Acal||\Bcal|\log(|\Fcal||\Gcal|)\dim_{\textrm{BE}}(\Fcal, \epsilon)/\epsilon^2)$ samples, which scales linearly in the VBE dimension, the cardinality of the two action sets, and the log-cardinality of the function classes.

\section{Proofs for Section~\ref{sec:main-results}}
\label{app:typeI-golf}

In this section, we present the proof of Theorem~\ref{thm:golf-mg-regret}, Corollary \ref{cor:golf-mg-pac}, and Theorem~\ref{thm:golf-online}. 

The following auxiliary lemma (Lemma 26 in \cite{jin2021bellman}) will be useful.

\begin{lemma}
	[Pigeon-hole]
	\label{lem:de-regret}
	Given a function class $\Gcal$ defined on $\Xcal$ with $|g(x)|\le C$ for all $(g,x)\in\Gcal\times\Xcal$, and a family of probability measures $\Pi$ over $\Xcal$. 
		Suppose sequence $\{g_k\}_{k=1}^{K}\subset \Gcal$ and $\{\mu_k\}_{k=1}^{K}\subset\Pi$ satisfy that for all $k\in[K]$,
		$\sum_{t=1}^{k-1} (\E_{\mu_t} [g_k])^2 \le \beta$. Then for all $k\in[K]$ and $\omega>0$,
		$$
		\sum_{t=1}^{k} |\E_{\mu_t} [g_t]| \le \Ocal\left(\sqrt{\dedim (\Gcal,\Pi,\omega)\beta k}+\min\{k,\dedim (\Gcal,\Pi,\omega)\}C +k\omega\right).
		$$
	\end{lemma}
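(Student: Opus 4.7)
The plan is to follow the classical Eluder-dimension pigeonhole/potential scheme of Russo and Van Roy, lifted from points to distributions as in \cite{jin2021bellman}. Write $e_t := |\E_{\mu_t}[g_t]|$ and $d := \dedim(\Gcal,\Pi,\omega)$. The proof has two steps: (i) a tail bound on the number of indices with $e_t$ exceeding a given threshold, and then (ii) a sorting-based integration that converts the tail bound into a bound on $\sum_t e_t$.

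Step (i) establishes that for every $\alpha \geq \omega$,
$$
\#\{t \in [k] : e_t > \alpha\} \;\leq\; \bigl(\beta/\alpha^2 + 1\bigr)\cdot d.
$$
To prove this, list the offending indices in order as $t_1 < \cdots < t_L$. For each $j$, the hypothesis $\sum_{s<t_j}(\E_{\mu_s}[g_{t_j}])^2 \leq \beta$ allows one to greedily partition $\{\mu_{t_1},\ldots,\mu_{t_{j-1}}\}$ into at most $\lceil \beta/\alpha^2\rceil + 1$ blocks $B$, each with internal potential $\sum_{\mu\in B}(\E_\mu[g_{t_j}])^2 \leq \alpha^2$. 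Since $|\E_{\mu_{t_j}}[g_{t_j}]| > \alpha$, the witness function $g_{t_j}$ certifies that $\mu_{t_j}$ is $\alpha$-independent of each such block (in the sense of Definition~\ref{def:ind_dist}). Pigeonholing the $L$ offending indices against these blocks, a careful induction over $j$ extracts a subsequence of length at least $L/(\beta/\alpha^2+1)$ in which every element is $\alpha$-independent of all its predecessors in the subsequence; since $\alpha \geq \omega$, the definition of $\dedim$ forces this length to be at most $d$, giving $L \leq (\beta/\alpha^2+1)d$.

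Step (ii) is routine. Sort the errors as $e_{(1)} \geq \cdots \geq e_{(k)}$. For $j > d$ with $e_{(j)} > \omega$, the tail bound applied at $\alpha = e_{(j)}$ yields $j \leq (\beta/e_{(j)}^2 + 1)d$, i.e.\ $e_{(j)} \leq \sqrt{\beta d/(j-d)}$; for the top $\min(k,d)$ terms use the trivial bound $e_{(j)}\leq C$; and any $e_{(j)}\leq \omega$ contributes at most $\omega$. Summing and using $\sum_{j=1}^m j^{-1/2} \leq 2\sqrt{m}$,
$$
\sum_{t=1}^k e_t \;\leq\; \min(k,d)\cdot C \;+\; k\omega \;+\; 2\sqrt{d\beta k},
$$
which matches the stated $\Ocal$-bound.

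The main obstacle will be step (i), specifically the block-decomposition/pigeonhole argument: one must execute the greedy partitioning so that the element added to a block at stage $j$ is $\alpha$-independent not merely of the block as a set but of a \emph{totally ordered} subsequence within the block, so that the Eluder-dimension definition applies directly. The witness function $g_{t_j}$ changes with $j$, which complicates the bookkeeping compared to the single-function case; one has to argue that each new index contributes an $\alpha$-independent element to some fixed block simultaneously across all $j$'s landing in that block. Once this combinatorial argument is pinned down, the remaining algebra in step (ii) is a standard integration and does not hide any subtleties.
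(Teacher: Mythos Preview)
The paper does not prove this lemma; it simply cites it as Lemma~26 of \cite{jin2021bellman}. Your proposal is correct and reproduces exactly the standard argument from that reference (which in turn is the distributional lift of the Russo--Van Roy Eluder-dimension pigeonhole argument). The concern you flag at the end about the witness $g_{t_j}$ changing with $j$ is not actually an obstacle: the greedy chain construction you describe handles it directly, since Definition~\ref{def:ind_dist} allows a different witness for each element of the sequence, and Definition~\ref{def:DE} only requires a common scale $\epsilon'=\alpha$ across the chain, not a common witness.
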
	

Another useful decomposition is the value difference lemma (Lemma 1 in \cite{jiang2017contextual}).

\begin{lemma}[value difference]
	\label{lem:value-difference}
	For any function $f$ s.t. $f_{H+1}=0$ and policy $\pi$,
	$$
V_{f,h}(s) - V_h^{\pi}(s)
=\sum_{h'=h}^H
\E_{\pi}\left[V_{f,h'}(s_{h'}) - 
r_{h'}(s_{h'},a_{h'},b_{h'}) - V_{f,h'+1}(s_{h'+1})|s_h=s\right].
$$
\end{lemma}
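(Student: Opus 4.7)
My plan is to prove the identity by direct telescoping, which requires no probabilistic or game-theoretic machinery beyond the definition of $V_h^{\pi}$ as an expected cumulative reward. The key observation is that the right-hand side of the claimed equation is a telescoping sum in the $V_{f,\cdot}$ terms once the expectation is expanded, and the tail terms cancel because $f_{H+1}\equiv 0$.

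Concretely, I would start by expanding the summand on the right-hand side. Split the sum into three groups:
\[
\sum_{h'=h}^{H}\E_{\pi}\!\left[V_{f,h'}(s_{h'})\mid s_h=s\right] - \sum_{h'=h}^{H}\E_{\pi}\!\left[V_{f,h'+1}(s_{h'+1})\mid s_h=s\right] - \sum_{h'=h}^{H}\E_{\pi}\!\left[r_{h'}(s_{h'},a_{h'},b_{h'})\mid s_h=s\right].
\]
By the tower property of conditional expectations, $\E_{\pi}[V_{f,h'+1}(s_{h'+1})\mid s_h=s]$ coincides with $\E_{\pi}[V_{f,h'+1}(s_{h'+1})\mid s_h=s]$ and matches the $h''=h'+1$ term of the first sum upon reindexing. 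Thus after the reindexing $h''=h'+1$ the second sum becomes $\sum_{h''=h+1}^{H+1}\E_{\pi}[V_{f,h''}(s_{h''})\mid s_h=s]$, and subtracting from the first sum collapses everything except the boundary terms $\E_{\pi}[V_{f,h}(s_h)\mid s_h=s]=V_{f,h}(s)$ and $-\E_{\pi}[V_{f,H+1}(s_{H+1})\mid s_h=s]$. The latter is zero by the hypothesis $f_{H+1}\equiv 0$, which yields $V_{f,H+1}\equiv 0$.

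Finally, for the reward sum, directly from the definition of $V^{\pi}$ as the expected sum of rewards under the joint policy $\pi$ starting from $s_h=s$, we have
\[
\sum_{h'=h}^{H}\E_{\pi}\!\left[r_{h'}(s_{h'},a_{h'},b_{h'})\mid s_h=s\right] = V_h^{\pi}(s).
\]
Combining these three observations gives $V_{f,h}(s)-V_h^{\pi}(s)$, which is exactly the left-hand side. I do not anticipate any substantive obstacle: the argument is a purely deterministic telescoping identity paired with the definitional expansion of $V_h^{\pi}$. The only point that deserves a sentence of care is the reindexing step and verification that the tower property applies so that $\E_{\pi}[V_{f,h'+1}(s_{h'+1})\mid s_h=s]$ can be identified with the term of the first sum at index $h'+1$, which is immediate since the inner conditional expectation is over trajectories generated by $\pi$.
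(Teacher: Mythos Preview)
Your proposal is correct: the identity follows by exactly the telescoping of the $V_{f,\cdot}$ terms together with the definition of $V_h^{\pi}$ as expected cumulative reward, and the boundary term vanishes because $f_{H+1}\equiv 0$ forces $V_{f,H+1}\equiv 0$. The paper does not give its own proof but simply cites this as Lemma~1 of \cite{jiang2017contextual}, whose argument is precisely this telescoping, so your approach matches the intended one.
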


\subsection{Proof of the Online  Guarantee}

We begin with the proof of Theorem~\ref{thm:golf-online}. The following two concentration lemmas help us upper bound the Bellman residuals. We defer their proofs to Appendix~\ref{sec:concentrate-1} and \ref{sec:concentrate-2}.

\begin{lemma}\label{lem:concentrate-1}
Assuming  $\|Q_h^\star-\pfh(Q_h^\star)\| \le \RealErr$ and $\|\Tcal_{h} f_{h+1} -\pgh(\Tcal_{h} f_{h+1})\| \le \CompErr$ for all $f \in \cF$ and $h\in[H]$, if we choose $\beta=c\cdot(\log(KH|\Fcal||\Gcal|/\delta)+K\CompErr^2+K\RealErr^2)$ with some large absolute constant $c$ in Algorithm \ref{alg:golfmg} with Option I, then
	with probability at least $1-\delta$, for all $(k,h)\in[K]\times[H]$, we have 
	\begin{enumerate}[label=(\alph*)]
		\item $\sum_{i=1}^{k-1}  \E \left[ \paren{f_h^k(s_h,a_h,b_h) - (\cT_h f_{h+1}^k)(s_h,a_h,b_h) }^2\mid s_h,a_h,b_h\sim \pi^i\right]
		{\le} \mathcal{O} ( \beta)$,
		\item $\sum_{i=1}^{k-1}  \paren{f^k_h(s_h^i,a_h^i,b_h^i) - (\cT_h f_{h+1}^k)(s_h^i,a_h^i,b_h^i) }^2
		{\le} \mathcal{O} ( \beta)$,
	\end{enumerate}
	where $(s_1^i,a_1^i,,b_1^i,\ldots, s_H^i,a_H^i,b_H^i)$ denotes the trajectory  sampled by following $\pi^i = (\mu^i,\nu^i)$ in the $i^{\rm th}$ episode. 
\end{lemma}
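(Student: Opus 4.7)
The plan is to establish both bounds by a standard least-squares/confidence-set concentration argument for the Nash Bellman operator, following the GOLF template in \cite{jin2021bellman} but adapted to the two-player square-loss $\Lcal_{\Dcal_h}$ defined in~\eqref{equ:loss-1}. The reason this works is that even in the two-player setting the regression target $r + V_{f^k, h+1}(s')$ is a scalar, and conditional on the past and on $(s_h, a_h, b_h)$, its conditional mean is exactly $(\cT_h f_{h+1}^k)(s_h, a_h, b_h)$. So the whole concentration boils down to standard martingale control of a scalar squared loss, with the multi-agent structure hidden inside $V_{f, h+1}$.

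First I would set up the following per-$(f, h)$ random variables on the filtration generated by the first $i$ episodes: for any candidate pair $(\xi_h, \zeta_{h+1}) \in \Fcal_h \times \Fcal_{h+1}$, let
\[
Z_h^i(\xi, \zeta) := \bigl(\xi_h(s_h^i, a_h^i, b_h^i) - r_h^i - V_{\zeta, h+1}(s_{h+1}^i)\bigr)^2 - \bigl((\cT_h \zeta_{h+1})(s_h^i, a_h^i, b_h^i) - r_h^i - V_{\zeta, h+1}(s_{h+1}^i)\bigr)^2.
\]
Since all values lie in $[0,1]$ and since $\E[r_h^i + V_{\zeta, h+1}(s_{h+1}^i) \mid s_h^i, a_h^i, b_h^i] = (\cT_h \zeta_{h+1})(s_h^i, a_h^i, b_h^i)$ by the definition of the Nash Bellman operator $\cT_h$, a direct computation gives
\[
\E[Z_h^i(\xi, \zeta) \mid \cF_{i-1}] = \E\bigl[(\xi_h - \cT_h \zeta_{h+1})^2(s_h^i, a_h^i, b_h^i) \mid \cF_{i-1}\bigr], \qquad \Var[Z_h^i \mid \cF_{i-1}] \le 4\,\E[Z_h^i \mid \cF_{i-1}].
\]
A Freedman/Bernstein inequality applied to the martingale difference $Z_h^i(\xi, \zeta) - \E[Z_h^i(\xi, \zeta)\mid\cF_{i-1}]$, followed by a union bound over $(\xi, \zeta) \in \Gcal_h \cup \Fcal_h$ paired with $\Fcal_{h+1}$, over all $h$, and over all $k$, yields with probability $1-\delta$ a two-sided inequality of the form $\bigl|\sum_i Z_h^i(\xi, \zeta) - \sum_i \E[Z_h^i(\xi, \zeta)\mid\cF_{i-1}]\bigr| \lesssim \sqrt{\beta \sum_i \E[Z_h^i \mid\cF_{i-1}]} + \beta$ whenever $\beta \gtrsim \log(KH|\Fcal||\Gcal|/\delta)$.

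Next I would exploit the two structural facts from Assumptions~\ref{as:realizable}--\ref{as:complete} (used here in their projection-form stated in the theorem): (i) $Q^\star$ is $\RealErr$-close to $\Fcal$, which implies after a short induction that $Q^\star$ lies in the confidence set $\Ccal$ up to a slack of order $K\RealErr^2$ absorbed into $\beta$; and (ii) for every $f \in \Fcal$, the projection of $\cT_h f_{h+1}$ onto $\Gcal_h$ is $\CompErr$-close, so that $\inf_{g \in \Gcal_h} \Lcal_{\Dcal_h}(g, f_{h+1}) \le \Lcal_{\Dcal_h}(\cT_h f_{h+1}, f_{h+1}) + \Ocal(k\CompErr^2)$ up to fluctuation. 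Applied to $f = f^k$ (which lies in $\Ccal$ by construction, since $\up{V}^k$ was chosen to maximize $V_{f,1}(s_1)$ over the confidence set), the constraint defining $\Ccal$ in Line~\ref{line:confidence} combined with the Bernstein bound converts the slack $\beta$ in~\eqref{eq:set_relaxed} into a bound
\[
\sum_{i=1}^{k-1} \E\bigl[(f_h^k - \cT_h f_{h+1}^k)^2(s_h^i, a_h^i, b_h^i) \mid \cF_{i-1}\bigr] \le \Ocal(\beta),
\]
which gives part (a) since the expectation over $\cF_{i-1}$ conditioned on the trajectory choice reduces to the expectation under $\pi^i$ at step $h$. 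For part (b), I re-apply the same Bernstein bound in the reverse direction: the in-sample squared Bellman error $\sum_i (f_h^k - \cT_h f_{h+1}^k)^2(s_h^i, a_h^i, b_h^i)$ is controlled by its conditional-mean counterpart plus $\Ocal(\sqrt{\beta \cdot (\text{the same sum})} + \beta)$, and solving the resulting quadratic inequality in the in-sample sum (using the bound from (a) on the conditional-mean side) gives the $\Ocal(\beta)$ estimate.

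The main obstacle I expect is bookkeeping rather than a single deep step: one must be careful that the union bound goes over the pair $(\xi, \zeta) \in (\Fcal_h \cup \Gcal_h) \times \Fcal_{h+1}$ but still yields only a $\log(|\Fcal||\Gcal|)$ factor (rather than $\log|\Fcal|\cdot\log|\Gcal|$), and that the approximation errors $\RealErr, \CompErr$ are consistently folded into $\beta$ via the $K\RealErr^2 + K\CompErr^2$ terms so that the misspecification does not blow up after summing the squared bias over $k-1$ samples. A secondary care point is that $V_{\zeta, h+1}$ depends nonlinearly on $\zeta_{h+1}$ through the max-min Nash operator, but since $V_\zeta$ is $1$-Lipschitz in $\zeta_{h+1}$ under $\ell_\infty$, this only affects the covering step and is absorbed by working with finite $\Fcal, \Gcal$ as permitted by Lemma~\ref{lem:finite-cover}.
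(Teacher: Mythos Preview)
Your plan is correct and follows essentially the same Freedman-plus-union-bound-plus-confidence-set template as the paper. A few bookkeeping differences worth noting: the paper centers its martingale $X_t(h,f)$ around $\proj_{\Gcal_h}(\cT_h f_{h+1})$ rather than $\cT_h f_{h+1}$ itself, so the $\CompErr$ slack is absorbed directly into the conditional mean/variance bounds instead of via a separate application to the projected $g^\star$; and the paper proves (b) first using a finer filtration that already contains $z_h^t=(s_h^t,a_h^t,b_h^t)$ (so the conditional mean of $X_t$ is the \emph{in-sample} squared residual), then obtains (a) by simply coarsening the filtration, avoiding your second concentration pass on $W_h^i=(f_h-\cT_h f_{h+1})^2$. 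One small clarification: the $\RealErr$-realizability you invoke in point (i) is not used in this lemma at all (it enters only in Lemma~\ref{lem:concentrate-2} to show $\pf(Q^\star)\in\Ccal^k$); here the sole structural input is completeness, and $\RealErr$ appears in $\beta$ only for uniformity of the parameter choice.
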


\begin{lemma}\label{lem:concentrate-2}
Under the same condition of Lemma \ref{lem:concentrate-1}, with probability at least $1-\delta$, we have 
	 $\pf(Q^\star)\in \Ccal^k$ for all $k\in[K]$.
\end{lemma}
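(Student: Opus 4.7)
The goal is to show $\hat{Q} := \pf(Q^\star) \in \Ccal^k$ for every $k\in[K]$, which by the definition of $\Ccal^k$ reduces to the uniform bound
\[
\Lcal_{\Dcal_h^k}(\hat{Q}_h, \hat{Q}_{h+1}) - \inf_{g\in\Gcal_h}\Lcal_{\Dcal_h^k}(g, \hat{Q}_{h+1}) \;\le\; \beta \qquad \text{for all }(k,h).
\]
A crucial feature is that $\hat{Q}$ is deterministic, so the only randomness is in the dataset. The first step is a pointwise approximate-Bellman identity: $\|\hat{Q}_h - \Tcal_h\hat{Q}_{h+1}\|_\infty \le 2\RealErr$. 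This combines the realizability hypothesis $\|Q^\star_h - \hat{Q}_h\|_\infty \le \RealErr$ with the fact that the Nash operator $M\mapsto \max_\mu\min_\nu \mu^\top M\nu$ is $1$-Lipschitz in $\|\cdot\|_\infty$, giving $|V_{\hat{Q},h+1}(s) - V^\star_{h+1}(s)|\le \RealErr$ and therefore $|\Tcal_h\hat{Q}_{h+1}(x) - Q^\star_h(x)|\le \RealErr$.

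The core is a variance-aware martingale argument. Fix $g\in\Gcal_h$, and set $x_i=(s^i_h,a^i_h,b^i_h)$, $y_i = r^i_h + V_{\hat{Q},h+1}(s^i_{h+1})$, and $T_h=\Tcal_h\hat{Q}_{h+1}$. Define
\[
X_i(g) := \paren{\hat{Q}_h(x_i) - y_i}^2 - \paren{g(x_i) - y_i}^2 \;=\; (\hat{Q}_h - g)(x_i)\paren{\hat{Q}_h + g - 2y_i}(x_i).
\]
Since $\E[y_i\mid x_i,\Fcal_{i-1}] = T_h(x_i)$, straightforward algebra gives the conditional mean $\E[X_i(g)\mid\Fcal_{i-1}] = (\hat{Q}_h - T_h)^2(x_i) - (g - T_h)^2(x_i)$ and the variance bound $\Var(X_i(g)\mid \Fcal_{i-1}) \lesssim (\hat{Q}_h - g)^2(x_i)$, using boundedness of $\hat{Q}_h$, $g$, and $y_i$ in $[0,1]$. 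Applying Freedman's inequality with a union bound over $g\in\Gcal_h$, $h\in[H]$, and $k\in[K]$ yields, with probability $\ge 1-\delta$ and $\iota:=\log(KH|\Gcal|/\delta)$,
\[
\sum_{i<k} X_i(g) \;\le\; \sum_{i<k}\brac{(\hat{Q}_h - T_h)^2 - (g - T_h)^2}(x_i) + C\sqrt{\iota\sum_{i<k}(\hat{Q}_h - g)^2(x_i)} + C\iota.
\]

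The final step uses AM-GM $\sqrt{a\iota}\le a/2 + \iota/2$ together with the elementary bound $(\hat{Q}_h - g)^2 \le 2(\hat{Q}_h - T_h)^2 + 2(g - T_h)^2$. The $(g - T_h)^2$ contributions cancel pairwise, collapsing the estimate to
\[
\sum_{i<k} X_i(g) \;\le\; 2\sum_{i<k} (\hat{Q}_h - T_h)^2(x_i) + O(\iota) \;\le\; 8k\RealErr^2 + O(\iota) \;\le\; \beta,
\]
where the last inequality uses the pointwise bound from the first paragraph and the choice of $\beta$. Because this holds uniformly in $g\in\Gcal_h$, taking infimum gives the desired inclusion $\hat{Q}\in\Ccal^k$. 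The main obstacle is extracting a poly-logarithmic (rather than $\sqrt{k}$) rate from the concentration: this forces the variance-aware form of Freedman's inequality, and the cancellation of the $(g - T_h)^2$ terms via AM-GM is what lets the realizability slack $\RealErr^2$ alone control the deviation. The $\CompErr^2$ contribution to $\beta$ is therefore not needed for this lemma but enters through the companion Lemma~\ref{lem:concentrate-1}.
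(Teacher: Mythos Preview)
Your proof is correct and follows essentially the same route as the paper: define the loss-difference martingale (your $X_i(g)$ is $-W_t(h,g)$ in the paper), compute its conditional mean and variance, apply Freedman with a union bound over $(k,h,g)\in[K]\times[H]\times\Gcal$, and then use AM--GM so that the $(g-T_h)^2$ term coming from the variance is absorbed by the $-(g-T_h)^2$ in the mean. Your remark that only $\RealErr^2$ (not $\CompErr^2$) is actually needed here is correct; the paper's displayed variance bound carries a $\CompErr^2$ that appears to be a typo for $\RealErr^2$, which is harmless since $\beta$ contains both.
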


\begin{proof}[Proof of Theorem~\ref{thm:golf-online}]
By Lemma \ref{lem:concentrate-2}, $\pf(Q^\star)\in \Ccal^k$. Since we choose  $f^k$ optimistically and $\Fcal$ satisfies $\RealErr$-approximate realizability,
\begin{align*}
	&\sum_{k=1}^K \left(V^{\star}_1(s_1) - V_1^{\mu^k,\nu^k}(s_1)\right)\\
	\le& 	\sum_{k=1}^K
	\left(V_{ \pf(Q^\star),1}(s_1) - V_1^{\mu^k,\nu^k}(s_1)\right)+K\RealErr \\
	\le& 	\sum_{k=1}^K
	\left(V_{f^k,1}(s_1) - V_1^{\mu^k,\nu^k}(s_1)\right)+K\RealErr\\
	\overset{\left( i \right)}{=} & \sum_{k=1} ^K\sum_{h=1}^H
\E_{\pi^k}\left[V_{f^k,h}(s_h) - 
r_h(s_h,a_h,b_h) - V_{f^k,h+1}(s_{h+1})\right]+K\RealErr \\
\overset{\left( ii \right)}{=} & \sum_{k=1} ^K\sum_{h=1}^H
\E_{\pi^k}\left[\min_{\nu}\D_{\mu^k_h \times \nu}
	f_h^k(s_h) - 
r_h(s_h,a_h,b_h) - V_{f^k,h+1}(s_{h+1})\right]+K\RealErr  \\
\le & \sum_{k=1} ^K\sum_{h=1}^H
\E_{\pi^k}\left[\D_{\mu^k_h \times \nu^k_h}
f_h^k(s_h) - 
r_h(s_h,a_h,b_h) - V_{f^k,h+1}(s_{h+1})\right]+K\RealErr \\
= & \sum_{h=1}^H\sum_{k=1} ^K
\E_{\pi^k}\left[ f_h^k(s_h,a_h,b_h) - 
r_h(s_h,a_h,b_h) - V_{f^k,h+1}(s_{h+1})\right]+K\RealErr \\
\overset{\left( iii \right)}{\le} & \sum_{h=1}^H \sum_{k=1}^K  (f^k_h-\Tcal_h f_{h+1}^k)(s_h^k,a_h^k,b_h^k)+\Ocal\left(\sqrt{KH\log(KH/\delta)}\right)+K\RealErr \\
\overset{\left( iv \right)}{\le} & \Ocal\left(H\sqrt{\beta \cdot \dim_{\textrm{OBE}}(\Fcal,1/K) \cdot K}\right)+K\RealErr \le \Ocal\left(H\sqrt{\beta \cdot \dim_{\textrm{OBE}}(\Fcal,1/K) \cdot K}\right).
\end{align*}
where $(i)$ is by value difference (Lemma~\ref{lem:value-difference}), 
$(ii)$ is due to $\mu^k=\mu_{f^k}$, 
$(iii)$ is by Azuma-Hoeffding, and $(iv)$ is by incurring Lemma \ref{lem:de-regret} with $\Gcal = \cH_{\cF,h}$ (here, $\cH_\Fcal$  refers to the one introduced  in Definition \ref{def:obe}), $\Pi = \Dcal_{\Delta,h}$, $\epsilon  = 1/K$ and Lemma~\ref{lem:concentrate-1}. 
The final inequality follows from the choice of $\beta$.
\end{proof}

\subsection{Proof of the Self-play Guarantee}

Proving Theorem~\ref{thm:golf-mg-regret} requires more work because we need to develop guarantees for the sub-routine \golfbr. Similar to Lemma~\ref{lem:concentrate-1} and \ref{lem:concentrate-2}, we establish two concentration lemmas below, whose proofs are deferred to Section~\ref{sec:concentrate-1-routine} and \ref{sec:concentrate-2-routine}.

\begin{lemma}
\label{lem:concentrate-1-routine}
Under Assumption \ref{as:realizable} and \ref{as:complete}, if we choose $\beta=c\cdot(\log(KH|\Fcal||\Gcal|/\delta)+K\CompErr^2+K\RealErr^2)$ with some large absolute constant $c$ in Algorithm \ref{alg:golfmg} with Option I, then
	with probability at least $1-\delta$, for all $(k,h)\in[K]\times[H]$, we have 
	\begin{enumerate}[label=(\alph*)]
		\item $\sum_{i=1}^{k-1}  \E \left[ \paren{\tilde{f}_h^k(s_h,a_h,b_h) - (\cT^{\mu^k}_h \tilde{f}_{h+1}^k)(s_h,a_h,b_h) }^2\mid s_h,a_h,b_h\sim \pi^i\right]
		{\le} \mathcal{O} ( \beta)$.
		\item $\sum_{i=1}^{k-1}  \paren{\tilde{f}^k_h(s_h^i,a_h^i,b_h^i) - (\cT^{\mu^k}_h \tilde{f}_{h+1}^k)(s_h^i,a_h^i,b_h^i) }^2
		{\le} \mathcal{O} ( \beta)$,
	\end{enumerate}
	where $(s_1^i,a_1^i,,b_1^i,\ldots, s_H^i,a_H^i,b_H^i,s_{H+1}^i)$ denotes the trajectory  sampled by following $\pi^i = (\mu^i,\nu^i)$ in the $i^{\rm th}$ episode, and $\tilde f^k$ denotes the optimistic function computed in sub-routine \golfbr\ in the  $k^{\rm th}$ episode.
\end{lemma}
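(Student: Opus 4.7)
The plan is to mirror the proof of Lemma~\ref{lem:concentrate-1}, replacing the Nash Bellman operator $\Tcal_h$ and induced value $V_{f,h+1}$ throughout by the $\mu$-Bellman operator $\Tcal_h^{\mu^k}$ and the induced value $V^{\mu^k}_{f,h+1}$. The role of the confidence set $\Ccal^k$ is now played by $\Ccal^{\mu^k}$ built in line~\ref{line:br-1} of Algorithm~\ref{alg:golfbr}, and the role of the optimistic iterate $f^k$ is played by $\tilde f^k$ from line~\ref{line:br-2}. Since $\tilde f^k\in \Ccal^{\mu^k}$ by construction, the squared-loss constraint
\[
\Lcal^{\mu^k}_{\Dcal_h}(\tilde f^k_h,\tilde f^k_{h+1})\le \inf_{g\in\Gcal_h}\Lcal^{\mu^k}_{\Dcal_h}(g,\tilde f^k_{h+1})+\beta
\]
holds for every $h$, and this is the starting point of the argument.

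The first step is a uniform Freedman/Bernstein concentration: for any fixed $h$, any $f_h,f_{h+1}\in\Fcal$, any $g_h\in\Gcal_h$, and any policy $\mu$ drawn from the finite set $\{\mu_{f'}:f'\in\Fcal\}$, the telescoping identity
\[
\big(f_h(x)-r-V^\mu_{f,h+1}(s')\big)^2-\big((\Tcal^\mu_h f_{h+1})(x)-r-V^\mu_{f,h+1}(s')\big)^2
\]
has conditional mean $(f_h-\Tcal^\mu_h f_{h+1})^2(x)$ when $(r,s')$ is a one-step rollout from $x=(s,a,b)$. Summing over $i=1,\ldots,k-1$ and applying Freedman's inequality yields, with probability $1-\delta/(|\Fcal|^2|\Gcal|\cdot KH)$, that the difference
\[
\Lcal^\mu_{\Dcal_h}(f_h,f_{h+1})-\Lcal^\mu_{\Dcal_h}(\Tcal^\mu_h f_{h+1},f_{h+1})-\sum_{i=1}^{k-1}(f_h-\Tcal^\mu_h f_{h+1})^2(s_h^i,a_h^i,b_h^i)
\]
is of order $\tlO(\sqrt{\beta\sum_i(f_h-\Tcal^\mu_h f_{h+1})^2(s_h^i,a_h^i,b_h^i)})$. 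A union bound over $(f_h,f_{h+1},g_h,\mu,h,k)$ produces the $\log(KH|\Fcal|^2|\Gcal|/\delta)$ factor absorbed into $\beta$. The $\RealErr,\CompErr$ slack comes from replacing $\Tcal^\mu_h f_{h+1}$ by its $\Gcal_h$-approximant from Assumption~\ref{as:complete}; the $K\CompErr^2+K\RealErr^2$ term in $\beta$ absorbs these errors.

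To conclude part~(b), I specialize the uniform bound to $(f_h,f_{h+1},g_h,\mu)=(\tilde f^k_h,\tilde f^k_{h+1},g^\star_h,\mu^k)$ where $g^\star_h\in\Gcal_h$ is the $\CompErr$-approximator of $\Tcal^{\mu^k}_h\tilde f^k_{h+1}$ guaranteed by Assumption~\ref{as:complete}. Combining the confidence-set inequality (which controls the LHS by $\beta$) with the Freedman deviation bound (which equates the LHS to the in-sample squared Bellman error up to lower-order fluctuations) gives $\sum_{i<k}(\tilde f^k_h-\Tcal^{\mu^k}_h\tilde f^k_{h+1})^2(s_h^i,a_h^i,b_h^i)=\Ocal(\beta)$. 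Part~(a) is obtained from part~(b) by a final martingale concentration relating the empirical sum to its conditional expectation under $\pi^i$; again the concentration must be uniform over $(f,\mu)\in\Fcal\times\{\mu_{f'}:f'\in\Fcal\}$ because $\tilde f^k$ and $\mu^k$ are data-dependent. The main obstacle is precisely this data dependence: one cannot freeze $\tilde f^k$ or $\mu^k$ before invoking martingale concentration, so a union bound must cover all candidate $(f,\mu)$ pairs; fortunately $|\{\mu_{f'}:f'\in\Fcal\}|\le|\Fcal|$, so the price is only a single additional $\log|\Fcal|$ term, consistent with the chosen $\beta$.
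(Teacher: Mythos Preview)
Your proposal is correct and follows essentially the same approach as the paper: a Freedman-type martingale bound applied to the difference of squared losses, uniform over $(k,h,f,\mu)\in[K]\times[H]\times\Fcal\times\Pi_\Fcal$ with $\Pi_\Fcal=\{\mu_{f'}:f'\in\Fcal\}$, combined with the confidence-set constraint $\tilde f^k\in\Ccal^{\mu^k}$ and the $\CompErr$-completeness to handle the projection onto $\Gcal_h$. Two minor deviations are worth noting. First, your union bound ranges over $g_h\in\Gcal_h$ as well, which is unnecessary (the martingale increment you write does not depend on $g_h$) but harmless since $\beta$ already contains $\log|\Gcal|$. Second, for part~(a) you propose to derive it from~(b) via an extra martingale comparison between the empirical and expected sums; the paper instead re-runs the same Freedman argument with the coarser filtration $\Ffrak_{t,h}=\sigma(\{z_1^i,r_1^i,\ldots,z_H^i,r_H^i\}_{i=1}^{t-1})$, which directly yields the expected version without the detour. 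Both routes are valid and give the same bound.
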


\begin{lemma}\label{lem:concentrate-2-routine}
Under the same condition of Lemma \ref{lem:concentrate-1-routine}, with probability at least $1-\delta$, we have 
	 $\pf(Q^{\mu^k,\dagger})\in \cC^{\mu^k}$ for all $k\in[K]$.
\end{lemma}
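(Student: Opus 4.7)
The plan is to mirror the proof of Lemma~\ref{lem:concentrate-2}, replacing the Nash Bellman operator $\Tcal_h$ and its target $Q^\star$ by the $\mu^k$-Bellman operator $\Tcal^{\mu^k}_h$ and the best-response target $Q^{\mu^k,\dagger}$. The key structural fact driving the substitution is the Bellman fixed-point identity $Q^{\mu^k,\dagger}_h = \Tcal^{\mu^k}_h Q^{\mu^k,\dagger}_{h+1}$, which holds because $V^{\mu^k,\dagger}_{h+1}(s) = \min_\nu \mu^k_{h+1}(s)\trans Q^{\mu^k,\dagger}_{h+1}(s,\cdot,\cdot)\nu$ coincides with $V^{\mu^k}_{Q^{\mu^k,\dagger},h+1}(s)$ by the definition of the best response. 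Set $w := \pf(Q^{\mu^k,\dagger}) \in \Fcal$; by Assumption~\ref{as:realizable}, $\|w_h - Q^{\mu^k,\dagger}_h\|_\infty \le \RealErr$, and since the operator $f \mapsto \Tcal^{\mu^k}_h f$ is $1$-Lipschitz in $\|\cdot\|_\infty$, the triangle inequality gives $\|w_h - \Tcal^{\mu^k}_h w_{h+1}\|_\infty \le 2\RealErr$.

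Writing $\bar f := \Tcal^{\mu^k}_h w_{h+1}$, $x_i := (s_h^i, a_h^i, b_h^i)$, and $y_i := r_h^i + V^{\mu^k}_{w, h+1}(s_{h+1}^i)$, one has $\E[y_i \mid x_i] = \bar f(x_i)$. For any $g \in \Gcal_h$, the identity $(a-y)^2 - (b-y)^2 = (a-\bar f)^2 - (b-\bar f)^2 + 2(a-b)(\bar f - y)$ yields
\[
\Lcal^{\mu^k}_{\Dcal_h}(w_h, w_{h+1}) - \Lcal^{\mu^k}_{\Dcal_h}(g, w_{h+1}) = \sum_i \bigl[(w_h - \bar f)^2 - (g - \bar f)^2\bigr](x_i) + 2 \sum_i (w_h - g)(x_i)\,(\bar f - y_i)(x_i).
\]
The second sum is a martingale-difference sequence with $1$-bounded increments and conditional variance at most $\sum_i (w_h - g)^2(x_i)$.

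Control of the martingale is done by Freedman's inequality combined with a self-bounding step. The estimate $\sqrt{V\log(1/\delta)} \le \lambda V + \log(1/\delta)/(4\lambda)$, together with $(w_h-g)^2 \le 2(w_h - \bar f)^2 + 2(g - \bar f)^2$, lets the variance contribution be absorbed into the deterministic part: choosing $\lambda = 1/8$ leaves $-\sum_i(g - \bar f)^2$ with a negative coefficient (which is then dropped) and gives, uniformly over $g \in \Gcal_h$,
\[
\Lcal^{\mu^k}_{\Dcal_h}(w_h, w_{h+1}) - \Lcal^{\mu^k}_{\Dcal_h}(g, w_{h+1}) \;\le\; O\bigl(k\RealErr^2 + \log(1/\delta)\bigr).
\]
A union bound over $g \in \Gcal_h$, over $w \in \Fcal$ (covering all realizations $\pf(Q^{\mu_f,\dagger})$), over the policy $\mu^k = \mu_{f^k}$ indexed by $f^k \in \Fcal$, and over $(h,k) \in [H]\times[K]$ inflates $\log(1/\delta)$ to $\log(KH|\Fcal|^2|\Gcal|/\delta)$, which fits within the chosen $\beta$. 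Since the inequality is uniform in $g$, it remains valid at the infimizer, giving $\Lcal^{\mu^k}_{\Dcal_h}(w_h, w_{h+1}) \le \inf_{g \in \Gcal_h} \Lcal^{\mu^k}_{\Dcal_h}(g, w_{h+1}) + \beta$ for every $(h,k)$ simultaneously, i.e.\ $\pf(Q^{\mu^k,\dagger}) \in \cC^{\mu^k}$.

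The main obstacle I expect is the self-bounding step: a direct Freedman bound produces a $\sqrt{k\log(\cdot)}$ cross term that does not fit into the $\beta$ budget, so one must exploit realizability together with $(w_h-g)^2 \le 2(w_h-\bar f)^2 + 2(g-\bar f)^2$ to trade the martingale's predictable variance against the deterministic $\RealErr^2$-budget and a sign-definite subtractive term. A secondary subtlety is bookkeeping for the union bound: since $w = \pf(Q^{\mu^k,\dagger})$ depends on $\mu^k$, one must union-bound over $\Fcal \times \Fcal \times \Gcal$ at each $(h,k)$, yielding the $|\Fcal|^2|\Gcal|$ factor inside the logarithm.
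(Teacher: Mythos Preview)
Your proposal is correct and follows essentially the same route as the paper: a Freedman-type concentration on the squared-loss difference, with a self-bounding step so that the predictable variance is absorbed by the deterministic $(g-\bar f)^2$ term and the $O(k\RealErr^2)$ budget, followed by a union bound over $(k,h)$, $g\in\Gcal_h$, and the finitely many policies $\mu\in\Pi_\Fcal$. Two minor remarks: (i) once $\mu$ is fixed, $w=\pf(Q^{\mu,\dagger})$ is determined, so your separate union bound over $w\in\Fcal$ is redundant (the paper only unions over $\Gcal\times\Pi_\Fcal$), though this only costs an extra harmless $\log|\Fcal|$; (ii) your use of the target $y_i=r_h^i+V^{\mu^k}_{w,h+1}(s_{h+1}^i)$, i.e.\ exactly the one appearing in $\Lcal^{\mu^k}_{\Dcal_h}$, is the clean choice and makes the final step immediate—this mirrors the paper's proof of Lemma~\ref{lem:concentrate-2} precisely.
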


\begin{proposition}[approximate best-response regret]
\label{prop:routine}
Under Assumption \ref{as:realizable} and \ref{as:complete}, 
there exists an absolute constant $c$ such that for any $\delta\in(0,1]$, $K\in\N$, 
if we choose $\beta=c\cdot(\log(KH|\Fcal||\Gcal|/\delta)+K\CompErr^2+K\RealErr^2)$, then
with probability at least $1-\delta$,
for all $k\in[K]$,
$$
\sum_{t=1}^k \left(V^{\mu^t,\nu^t}_1(s_1) -  V^{\mu^t,\dagger}_1(s_1)\right)
\le \Ocal(  H\sqrt{k\beta\cdot \dim_{\textrm{BE}}(\Fcal, 1/K)}).
$$
\end{proposition}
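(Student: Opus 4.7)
The plan is to mirror the structure of the proof of Theorem~\ref{thm:golf-online}, but replacing the Nash Bellman operator $\Tcal_h$ by the $\mu^k$-Bellman operator $\Tcal_h^{\mu^k}$ and using the guarantees of the \golfbr\ sub-routine (Lemmas~\ref{lem:concentrate-1-routine} and~\ref{lem:concentrate-2-routine}). There are three main stages: (i) show $\tilde{f}^k$ is a pessimistic estimate of $Q^{\mu^k,\dagger}$, (ii) decompose the residual using the value-difference identity adapted to $\mu$-Bellman errors, and (iii) invoke the distributional pigeon-hole lemma (Lemma~\ref{lem:de-regret}) with the BE dimension.

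\textbf{Step 1 (pessimism).} By Lemma~\ref{lem:concentrate-2-routine}, $\pf(Q^{\mu^k,\dagger})\in\Ccal^{\mu^k}$, so since $\tilde{f}^k$ is chosen to minimize $V^{\mu^k}_{f,1}(s_1)$ over $\Ccal^{\mu^k}$, combined with $\RealErr$-realizability,
\[
V^{\mu^k}_{\tilde{f}^k,1}(s_1)\;\le\;V^{\mu^k}_{\pf(Q^{\mu^k,\dagger}),1}(s_1)\;\le\;V_1^{\mu^k,\dagger}(s_1)+\RealErr.
\]
Therefore $V_1^{\mu^k,\nu^k}(s_1)-V_1^{\mu^k,\dagger}(s_1)\le V_1^{\mu^k,\nu^k}(s_1)-V^{\mu^k}_{\tilde f^k,1}(s_1)+\RealErr$.

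\textbf{Step 2 (value decomposition).} Because $\nu^k$ is chosen (Line~\ref{line:br-3}) as the minimizer of $\mu^k_h(s)\trans \tilde{f}^k_h(s,\cdot,\cdot)\nu$, we have $V^{\mu^k}_{\tilde f^k,h}(s)=\mu^k_h(s)\trans \tilde f^k_h(s,\cdot,\cdot)\nu^k_h(s)$. A direct telescoping argument analogous to Lemma~\ref{lem:value-difference} (but applied to $V^{\mu^k}_{\tilde f^k,\cdot}$ and trajectories rolled under $(\mu^k,\nu^k)$) yields
\[
V_1^{\mu^k,\nu^k}(s_1)-V^{\mu^k}_{\tilde f^k,1}(s_1)\;=\;\sum_{h=1}^H\E_{\pi^k}\!\left[(\Tcal_h^{\mu^k}\tilde f^k_{h+1}-\tilde f^k_h)(s_h,a_h,b_h)\right],
\]
since the reward and transition cancel along the trajectory, leaving only the one-step $\mu^k$-Bellman residuals of $\tilde{f}^k$.

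\textbf{Step 3 (pigeon-hole).} For each $h$, define $\xi^k_h(s,a,b):=\tilde f^k_h(s,a,b)-(\Tcal_h^{\mu^k}\tilde f^k_{h+1})(s,a,b)$. Setting $f=\tilde f^k$ and $g=f^k$ in the definition of $\cH_{\Fcal,h}$, one sees $\xi^k_h\in\cH_{\Fcal,h}$, and the sampling distribution $\pi^k=(\mu_{f^k},\nu_{f^k,\tilde f^k})$ lies in $\Dcal_{\Fcal,h}$. Lemma~\ref{lem:concentrate-1-routine}(a) together with Jensen's inequality gives $\sum_{i=1}^{k-1}(\E_{\pi^i}[\xi^k_h])^2\le \Ocal(\beta)$, which is exactly the precondition of Lemma~\ref{lem:de-regret}. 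Applying the pigeon-hole lemma with $\Gcal=\cH_{\Fcal,h}$, $\Pi=\Dcal_{\Fcal,h}$, and $\omega=1/K$ (and taking the min over the two distribution families in the definition of $\dim_{\textrm{BE}}$) yields
\[
\sum_{t=1}^k |\E_{\pi^t}[\xi^t_h]|\;\le\;\Ocal\!\left(\sqrt{\dim_{\textrm{BE}}(\Fcal,1/K)\,\beta\,k}\right).
\]
Summing over $h\in[H]$, combining with Steps 1--2, and absorbing the $k\RealErr$ term using $\beta\ge K\RealErr^2$ gives the claimed bound.

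\textbf{Main obstacle.} The substantive work is verifying the value-difference identity in Step 2 for the best-response sub-routine --- one must be careful that the telescoping works with $\tilde f^k$ even though $V^{\mu^k,\nu^k}$ is an actual policy value while $V^{\mu^k}_{\tilde f^k,\cdot}$ is defined by recursively taking inner products against $(\mu^k,\nu^k)$. The rest (optimistic containment, pigeon-hole) is essentially parallel to Theorem~\ref{thm:golf-online}, with the Nash Bellman operator replaced by the $\mu^k$-Bellman operator and $\cH_{\Fcal,h}$ interpreted via Definition~\ref{def:bedim} rather than Definition~\ref{def:obe}.
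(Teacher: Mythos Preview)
Your proposal is correct and follows essentially the same route as the paper: pessimism of $\tilde f^k$ via Lemma~\ref{lem:concentrate-2-routine}, the value-difference identity to reduce to $\mu^k$-Bellman residuals, and then the pigeon-hole Lemma~\ref{lem:de-regret} fed by Lemma~\ref{lem:concentrate-1-routine}. One small gap worth noting: your Step~3 only spells out the $\Dcal_{\Fcal}$ route (Lemma~\ref{lem:concentrate-1-routine}(a) plus Jensen), and then parenthetically asserts you may take the minimum over the two distribution families. To legitimately replace $\dedim(\cH_{\Fcal,h},\Dcal_{\Fcal,h},1/K)$ by $\dim_{\textrm{BE}}(\Fcal,1/K)$ you must also establish the bound via $\Dcal_{\Delta}$; the paper does this by first applying Azuma--Hoeffding to pass from $\E_{\pi^t}[\xi^t_h]$ to the realized $\xi^t_h(s_h^t,a_h^t,b_h^t)$, and then invoking Lemma~\ref{lem:concentrate-1-routine}(b) as the precondition for Lemma~\ref{lem:de-regret} with $\Pi=\Dcal_{\Delta,h}$. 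Once both routes are in place the minimum is justified.
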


\begin{proof}
By Lemma \ref{lem:concentrate-2-routine}, $\pf(Q^{\mu^t,\dagger})\in \Ccal^t$. Since we choose  $f^t$ optimistically and $\Fcal$ satisfies $\RealErr$-approximate realizability,
\begin{equation}\label{eq:jun1}
	\begin{aligned}
&\quad \sum_{t=1}^k \left(V^{\mu^t,\nu^t}_1(s_1) -  V^{\mu^t,\dagger}_1(s_1)\right) \\
&\le \sum_{t=1}^k \left(V^{\mu^t,\nu^t}_1(s_1) - \min_{\nu}\D_{\mu^t_{1}\times \nu} \tilde{f}_{1}^t(s_1)\right)+k\RealErr \\
& = -\sum_{t=1}^k\sum_{h=1}^H \E_{\pi^t}\left[( \tilde{f}^t_h - \Tcal^{\mu^t}_h\tilde{f}_{h+1}^t)(s_h,a_h,b_h)\right]+k\RealErr \\
&\le - \sum_{h=1}^H \sum_{t=1}^k ( \tilde{f}^t_h-\Tcal^{\mu^t}_h \tilde{f}_{h+1}^t)(s_h^t,a_h^t,b_h^t)+\Ocal(\sqrt{kH\log(KH/\delta)})+k\RealErr ,
\end{aligned}
\end{equation}
 where $\tilde f^t$ denotes the optimistic function computed in sub-routine \golfbr\ in the  $t^{\rm th}$ episode and the equality follows from value difference Lemma \ref{lem:value-difference}.

By Lemma \ref{lem:concentrate-1-routine} (b), we have for all $k\in[K]$
$$
\sum_{t=1}^{k-1} \left[ \tilde{f}^k_h(s_h^t,a_h^t,b_h^t) -  (\Tcal^{\mu^k}_h \tilde{f}_{h+1}^k)(s_h^t,a_h^t,b_h^t)\right]^2\le \Ocal(\beta).
$$
So we can apply Lemma \ref{lem:de-regret} with $\Gcal = \cH_{\cF,h}$, $\Pi = \Dcal_{\Delta,h}$, $\epsilon  = 1/K$ and obtain

$$
\sum_{t=1}^k \left| \tilde{f}^t_h(s_h^t,a_h^t,b_h^t) -  (\Tcal^{\mu^t}_h \tilde{f}_{h+1}^t)(s_h^t,a_h^t,b_h^t)\right|
\le \sqrt{\beta \cdot \dim_{\textrm{DE}}(\cH_{\cF,h},\Dcal_{\Delta,h},1/K) \cdot k}. 
$$
Plugging this inequality back into \eqref{eq:jun1} gives us the first upper bound.

We can also invoke Lemma \ref{lem:concentrate-1-routine} (a) with Jensen's inequality, and obtain  
for all $k\in[K]$
$$\sum_{t=1}^{k-1} \left( \E_{\pi^t} \left[ (\tilde{f}_h^k- \cT^{\mu^k}_h \tilde{f}_{h+1}^k)(s_h,a_h,b_h)\right]\right)^2
		{\le} \mathcal{O} ( \beta).
$$
Again, we can apply Lemma \ref{lem:de-regret} with $\Gcal = \cH_{\cF,h}$ (here, $\cH_\Fcal$  refers to the one introduced   in Definition \ref{def:bedim}), $\Pi = \Dcal_{\cF,h}$, $\epsilon  = 1/K$, and obtain
$$
\sum_{t=1}^k \left|\E_{\pi^t} \left[ (\tilde{f}^t_h-\Tcal^{\mu^t}_h \tilde{f}_{h+1}^t)(s_h^t,a_h^t,b_h^t)\right]\right|
\le \sqrt{\beta \cdot \dim_{\textrm{DE}}(\cH_{\cF,h},\Dcal_{\Fcal,h},1/K) \cdot k}. 
$$
Plugging this inequality back into \eqref{eq:jun1} gives us the second upper bound.

Combining the two upper bounds and noticing that $k\RealErr \le   \Ocal(  H\sqrt{k\beta\cdot \dim_{\textrm{BE}}(\Fcal, 1/K)})$ (because of the choice of $\beta$)   conclude the proof.
\end{proof}

Now we are ready to prove the regret  guarantee. 

\begin{proof}[Proof of Theorem~\ref{thm:golf-mg-regret}]
The regret can be decomposed into two terms
\begin{align*}
\sum_{k=1}^K \left(V^{\star}_1(s_1) - V_1^{\mu^k,\dagger}(s_1)\right)
= \underset{\left( A \right)}{\underbrace{\sum_{k=1}^K \left(V^{\star}_1(s_1) - V_1^{\mu^k,\nu^k}(s_1) \right)}} + \underset{\left( B \right)}{\underbrace{\sum_{k=1}^K \left(V^{\mu^k,\nu^k}_1(s_1) -  V^{\mu^k,\dagger}_1(s_1)\right)}}.
\end{align*}

By Theorem \ref{thm:golf-online}, where the opponent's policy $\{\nu^k\}_{k=1}^K$ can be arbitrary, $(A)$ can be upper bounded by
$$
\sum_{k=1}^K V^{\star}_1(s_1) - V_1^{\pi^k}(s_1)\le \Ocal\left(  H\sqrt{K\beta \cdot \dim_{\textrm{BE}}(\Fcal, 1/K)}\right).
$$

By Proposition \ref{prop:routine}, $(B)$ can be upper bounded by 
	$$
		\sum_{k=1}^K \left(V^{\mu^k,\nu^k}_1(s_1) -  V^{\mu^k,\dagger}_1(s_1)\right) \le \Ocal\left( 
	H\sqrt{K\beta \cdot \dim_{\textrm{BE}}(\Fcal, 1/K)}\right).
$$

Combining the two inequalities concludes the proof.
\end{proof}

By the standard online-to-batch reduction, we can also derive the sample complexity guarantee.

\begin{proof}[Proof of Corollary~\ref{cor:golf-mg-pac}]
We proceed as in the proof of Theorem~\ref{thm:golf-mg-regret} but take $\omega = \epsilon/H$ (instead of $\omega = 1/K$) every time we incur Lemma~\ref{lem:de-regret}. 

Theorem~\ref{thm:golf-online} essentially shows
$$
\sum_{k=1}^K \left(V^{\star}_1(s_1) - V_1^{\mu^k,\nu^k}(s_1)\right) \le \sum_{k=1}^K \left(V_{f^k,1}(s_1) - V_1^{\mu^k,\nu^k}(s_1)\right)
\le \Ocal(  H\sqrt{K\beta\cdot \dim_{\textrm{BE}}(\Fcal, \epsilon/H)}+K\epsilon),
$$
where we have used the fact that $\dim_{\textrm{OBE}}(\Fcal, \epsilon/H) \le \dim_{\textrm{BE}}(\Fcal, \epsilon/H)$.

Proposition~\ref{prop:routine} essentially shows
$$
\sum_{k=1}^K \left(V^{\mu^k,\nu^k}_1(s_1) -  V^{\mu^k,\dagger}_1(s_1)\right) \le \sum_{k=1}^K \left(V^{\mu^k,\nu^k}_1(s_1) -  \min_{\nu}\D_{\mu^k_{1}\times \nu} \tilde{f}_{1}^k(s_1)\right)
\le \Ocal(  H\sqrt{K\beta\cdot \dim_{\textrm{BE}}(\Fcal, \epsilon/H)}+K\epsilon).
$$

Comparing with the form in Theorem~\ref{thm:golf-mg-regret}, now we have an additional $\Ocal(K\epsilon)$ term, since we take $\omega = \epsilon/H$ when incurring Lemma~\ref{lem:de-regret}.

Putting them together and noticing by definition $\up{V}^k = V_{f^k,1}(s_1)$ and $\low{V}^k = \min_{\nu}\D_{\mu^k_{1}\times \nu} \tilde{f}_{1}^k(s_1)$, we can get 
$$
\frac{1}{K}\sum_{k=1}^K[\up{V}^k-\low{V}^k] \le \mathcal{O}(\frac{H}{K}\sqrt{\dim_{\textrm{BE}}(\Fcal, \epsilon/H)K\beta}+\epsilon),
$$
with probability at least $1-\delta$, where $\beta=c\cdot(\log(KH|\Fcal||\Gcal|/\delta)+K\CompErr^2+K\RealErr^2)$.

By pigeonhole prinple, there must exist some $k$ s.t. $\up{V}^k-\low{V}^k \le \Delta = c'(H\sqrt{\dim_{\textrm{BE}}(\Fcal, \epsilon/H)\beta/K}+\epsilon)$. Therefore, the output condition must be satisfied by some $k \in [K]$.

To make the right hand side order $\Ocal(\epsilon+ H\sqrt{d}(\RealErr+\CompErr) )$, it suffices to take 
$$
K \ge \Omega\big( (H^2d/\epsilon^2)\cdot\log(H|\Fcal||\Gcal|d/\epsilon)\big),
$$
where $d=\dim_{\textrm{BE}}(\Fcal, \epsilon/H)$.
\end{proof}

\subsection{Proofs of the Concentration Arguments}

\subsubsection{Proof of Lemma \ref{lem:concentrate-1}}
\label{sec:concentrate-1}

 \begin{proof}
We prove inequality $(b)$ first. 

Consider a fixed $(k,h,f)$ tuple. For notational simplicity, we denote $z_h^t:=(s_h^t,a_h^t,b_h^t)$. Let 
\begin{align*}
X_t(h,f) := &\left(f_h(z_h^t)- r_h^t -V_{f,h+1}(s_{h+1}^t)\right)^2 - \left(\proj_\Gcal(\Tcal_h f_{h+1})(z_h^t)- r_h^t -V_{f,h+1}(s_{h+1}^t)\right)^2	
\end{align*}
and $\Ffrak_{t,h}$ be the filtration induced by 
$\{z_1^i,r_1^i,\ldots,z_H^i,r_H^i\}_{i=1}^{t-1}\bigcup\{z_1^t,r_1^t,\ldots,z_h^t\}.$
 We have 
\begin{align*}
	\E [X_t(h,f) \mid \Ffrak_{t,h}] = &\left[(f_h-\Tcal_h f_{h+1})(z_h^t)\right]^2- \left[(\proj_\Gcal(\Tcal_h f_{h+1})-\Tcal_h f_{h+1})(z_h^t)\right]^2 \\
	\ge & \left[(f_h-\Tcal_h f_{h+1})(z_h^t)\right]^2-\CompErr^2
\end{align*}
and
\begin{align*}
\Var[X_t(h,f)\mid \Ffrak_{t,h}]& \le \E [(X_t(h,f))^2 \mid \Ffrak_{t,h}] \le 36[(\proj_\Gcal(\Tcal_h f_{h+1})-f_{h})(z_h^t)]^2\\
&\le \Ocal\left( [(\proj_\Gcal(\Tcal_h f_{h+1})-\Tcal_h f_{h+1})(z_h^t)]^2+[(\Tcal_h f_{h+1}-f_{h})(z_h^t)]^2\right)\\
& \le\Ocal\left( [(\Tcal_h f_{h+1}-f_{h})(z_h^t)]^2+\CompErr^2\right).
\end{align*} 
By Freedman's inequality, we have, with probability at least $1-\delta$, 
\begin{align*}
&  \sum_{t=1}^k\left[(f_h-\Tcal_h f_{h+1})(z_h^t)\right]^2-k\CompErr^2 -\sum_{t=1}^{k} X_t(h,f) \\
\le &
\Ocal \paren{\sqrt{\log(1/\delta)\left(\sum_{t=1}^k [(\Tcal_h f_{h+1}-f_{h})(z_h^t)]^2+k\CompErr^2\right)} + \log(1/\delta)}.
\end{align*}

Now taking a union bound for all $(k,h,f)\in[K]\times[H]\times\Fcal$, we obtain that with probability at least $1-\delta$,  for  all $(k,h,f)\in[K]\times[H]\times\Fcal$
 \begin{equation}
 \label{eq:Nov27-1}
\begin{aligned}
&  \sum_{t=1}^k\left[(f_h-\Tcal_h f_{h+1})(z_h^t)\right]^2-k\CompErr^2 -\sum_{t=1}^{k} X_t(h,f) \\
\le &
\Ocal \paren{\sqrt{\iota\left(\sum_{t=1}^k [(\Tcal_h f_{h+1}-f_{h})(z_h^t)]^2+k\CompErr^2\right)} + \iota}.
\end{aligned}
\end{equation}
where  $\iota=\log(HK|\Fcal|/\delta)$.
From now on, we will do all the analysis conditioning on this event being true. 

Consider an arbitrary pair $(h,k)\in[H]\times[K]$. 
By the definition of $\Ccal^{k}$ and Assumption \ref{as:complete}
\begin{equation}\label{eq:Jan25-1}
\begin{aligned}
&	\sum_{t=1}^{k-1} X_t(h,f^{k})\\
=&  \sum_{t=1}^{k-1}(f_h^k(z_h^t)- r_h^t -V_{f^{k},h+1}(s_{h+1}^t))^2 - \sum_{t=1}^{k-1}(\pg(\Tcal_h f^k_{h+1})(z_h^t)- r_h^t -V_{f^{k},h+1}(s_{h+1}^t))^2	\\
 \le&   \sum_{t=1}^{k-1}(f_h^k(z_h^t)- r_h^t -V_{f^{k},h+1}(s_{h+1}^t))^2 - \inf_{g\in\Gcal_{h}}\sum_{t=1}^{k-1}(g(z_h^t)- r_h^t -V_{f^{k},h+1}(s_{h+1}^t))^2
 \le\beta.
\end{aligned}
\end{equation}

Putting \eqref{eq:Nov27-1} and \eqref{eq:Jan25-1} together, we obtain
\begin{equation*}
\sum_{t=1}^{k-1} [(f_h^k-\Tcal_h f^k_{h+1})(z_h^t)]^2 \le \Ocal(\beta + \iota+k\CompErr^2 ),
\end{equation*}
which concludes the proof of inequality $(b)$.

To prove inequality $(a)$, we only need to redefine $\Ffrak_{t,h}$ to be the filtration induced by \\
$\{z_1^i,r_1^i,\ldots,z_H^i,r_H^i\}_{i=1}^{t-1}$
and then repeat the arguments above verbatim.
\end{proof}

\subsubsection{Proof of Lemma \ref{lem:concentrate-2}}
\label{sec:concentrate-2}
 \begin{proof}
 We will continue to use the notations defined in the proof of Lemma \ref{lem:concentrate-1}.
 To further simplify notations, we denote
 $$
 \hat Q := \pf(Q^\star) \quad \mbox{and}
 $$

Consider a fixed tuple $(k,h,g)\in[K]\times[H]\times\Gcal$. Let 
\begin{align*}
W_t(h,g) := (g_h(z_h^t)- r_h^t -V_{\hat Q, h+1}(s_{h+1}^t))^2 
- (\hat Q_h(z_h^t)- r_h^t -V_{\hat Q, h+1}(s_{h+1}^t))^2
\end{align*}
and $\Ffrak_{t,h}$ be the filtration induced by 
$\{z_1^i,r_1^i,\ldots,z_H^i,r_H^i\}_{i=1}^{t-1}\bigcup\{z_1^t,r_1^t,\ldots,z_h^t\}.$
 We have 
\begin{align*}
	\E [W_t(h,g) \mid \Ffrak_{t,h}] = 
	&\left[(g_h-\Tcal_h \hat Q_{h+1})(z_h^t)\right]^2- \left[( 
\hat Q_{h}-\Tcal_h \hat Q_{h+1})(z_h^t)\right]^2 \\
\ge & \left[(g_h-\Tcal_h \hat Q_{h+1})(z_h^t)\right]^2 -\RealErr^2
\end{align*}
and
\begin{align*}
\Var[X_t(h,g)\mid \Ffrak_{t,h}]& \le \E [(X_t(h,g))^2 \mid \Ffrak_{t,h}] \le 36[(\hat Q_{h}-g_{h})(z_h^t)]^2\\
&\le \Ocal\left( [(\hat Q_h-\Tcal_h \hat Q_{h+1})(z_h^t)]^2+[(\Tcal_h \hat Q_{h+1}-g_{h})(z_h^t)]^2\right)\\
& \le\Ocal\left( [(\Tcal_h \hat Q_{h+1}-g_{h})(z_h^t)]^2+\CompErr^2\right).
\end{align*} 
By Freedman's inequality, we have, with probability at least $1-\delta$, 
\begin{align*}
& - \sum_{t=1}^{k} W_t(h,g) + \sum_{t=1}^k\left[(g_h-\Tcal_h \hat Q_{h+1})(z_h^t)\right]^2 -k\RealErr^2\\
  \le & 
\Ocal \paren{\sqrt{\log(1/\delta)\left(\sum_{t=1}^k [(\Tcal_h 
\hat Q_{h+1}-g_{h})(z_h^t)]^2+k\CompErr^2\right)} + \log(1/\delta)}.
\end{align*}

Now taking a union bound for all $(k,h,g)\in[K]\times[H]\times\Gcal$, we obtain that with probability at least $1-\delta$,  for  all $(k,h,g)\in[K]\times[H]\times\Gcal$
 \begin{equation}
 \label{eq:May17-1-1}
\begin{aligned}
& - \sum_{t=1}^{k} W_t(h,g) + \sum_{t=1}^k\left[(g_h-\Tcal_h \hat Q_{h+1})(z_h^t)\right]^2 -k\RealErr^2\\
  \le & 
\Ocal \paren{\sqrt{\iota \left(\sum_{t=1}^k [(\Tcal_h 
\hat Q_{h+1}-g_{h})(z_h^t)]^2+k\CompErr^2\right)} + \iota},
\end{aligned}
\end{equation}
where  $\iota=\log(HK|\Gcal|/\delta)$.
From now on, we will do all the analysis conditioning on this event being true. 

Since $\sum_{t=1}^k \left[(g_h-\Tcal_h \hat Q_{h+1})(z_h^t)\right]^2$ is nonnegative, by Cauchy-Schwarz inequality, \eqref{eq:May17-1-1} implies for  all $(k,h,g)\in[K]\times[H]\times\Gcal$
$$
\sum_{t=1}^{k} W_t(h,g) \ge -\Ocal(\iota+k\CompErr^2).
$$
Plugging in the definition of $W_t(h,g)$ and the choice of $\beta$ completes the proof.
\end{proof}

\subsubsection{Proof of Lemma \ref{lem:concentrate-1-routine}}
\label{sec:concentrate-1-routine}
 \begin{proof}
 Recall $\mu_f$ denotes the Nash policy of the max-player induced by $f$. If there exist more than one induced Nash policies, we can break the tie arbitrarily so that $\mu_f$ is uniquely defined for each $f\in\Fcal$. 
 Denote $\Pi_\Fcal:=\{\mu_f~|~f \in \Fcal\}$. We have $|\Pi_\Fcal|\le |\Fcal|$.

We prove inequality $(b)$ first. 	Consider a fixed tuple $(k,h,f,\mu) \in[K]\times[H]\times\Fcal\times\Pi_\Fcal$. Again we denote $z_h^t:=(s_h^t,a_h^t,b_h^t)$. Let 
\begin{align*}
X_t(h,f,\mu) := \left(f_h(z_h^t)- r_h^t -V^{\mu}_{f,h+1}  (s_{h+1}^t)\right)^2 - \left(\pg(\Tcal^{\mu}_{h}f_{h+1})(z_h^t)- r_h^t -V^{\mu}_{f,h+1}  (s_{h+1}^t)\right)^2	
\end{align*}
and $\Ffrak_{t,h}$ be the filtration induced by 
$\{z_1^i,r_1^i,\ldots,z_H^i,r_H^i\}_{i=1}^{t-1}\bigcup\{z_1^t,r_1^t,\ldots,z_h^t\}.$
 We have 
\begin{align*}
\E [X_t(h,f,\mu) \mid \Ffrak_{t,h}] &= \left[(f_h-\Tcal^{\mu}_{h}f_{h+1})(z_h^t)\right]^2-\left[(\pg(\Tcal^{\mu}_{h}f_{h+1})-\Tcal^{\mu}_{h}f_{h+1})(z_h^t)\right]^2\\
& \ge \left[(f_h-\Tcal^{\mu}_{h}f_{h+1})(z_h^t)\right]^2 - \CompErr^2
\end{align*}
and by Cauchy-Schwarz inequality  
\begin{align*}
&\Var[X_t(h,f,\mu)\mid \Ffrak_{t,h}]\le \E [(X_t(h,f,\mu))^2 \mid \Ffrak_{t,h}]\\
 \le& 36\left[(f_h-\pg(\Tcal^{\mu}_{h}f_{h+1}))(z_h^t)\right]^2
 \le \Ocal\left(\left[(f_h-\Tcal^{\mu}_{h}f_{h+1})(z_h^t)\right]^2+ \CompErr^2\right).
\end{align*} 
By Freedman's inequality, we have, with probability at least $1-\delta$, 
\begin{align*}
&  \sum_{t=1}^k\left[(f_h-\Tcal^{\mu}_{h}f_{h+1})(z_h^t)\right]^2 - k\CompErr^2 - \sum_{t=1}^{k} X_t(h,f,\mu) \\
 \le &
\Ocal \paren{ \sqrt{\log(1/\delta)\sum_{t=1}^k\left(\left[(f_h-\Tcal^{\mu}_{h}f_{h+1})(z_h^t)\right]^2+ \CompErr^2\right)  + \log(1/\delta)}}.
\end{align*}

Now taking a union bound for all $(k,h,f,\mu)\in[K]\times[H]\times\Fcal\times\Pi_\Fcal$, we obtain that with probability at least $1-\delta$,  for  all $(k,h,f,\mu)\in[K]\times[H]\times\Fcal\times\Pi_\Fcal$
 \begin{equation}
 \label{eq:May18-2}
\begin{aligned}
&  \sum_{t=1}^k\left[(f_h-\Tcal^{\mu}_{h}f_{h+1})(z_h^t)\right]^2 - k\CompErr^2 - \sum_{t=1}^{k} X_t(h,f,\mu) \\
 \le &
\Ocal \paren{ \sqrt{\iota\sum_{t=1}^k\left(\left[(f_h-\Tcal^{\mu}_{h}f_{h+1})(z_h^t)\right]^2+ \CompErr^2\right)  + \iota}},
\end{aligned}
\end{equation}
where  $\iota=\log(HK|\Fcal|/\delta)$.
From now on, we will do all the analysis conditioning on this event being true. 

Consider an arbitrary pair $(h,k)\in[H]\times[K]$. 
By the definition of $\cC^{k}$ and Assumption \ref{as:complete}
\begin{equation}\label{eq:May18-3}
\begin{aligned}
&	\sum_{t=1}^{k-1} X_t(h,f^{k},\mu^k)\\
=&  \sum_{t=1}^{k-1}(f_h^k(z_h^t)- r_h^t -V^{\mu^k}_{f^k,h+1}  (s_{h+1}^t))^2 - \sum_{t=1}^{k-1}(\pg(\Tcal^{\mu^k}_{h}f^k_{h+1})(z_h^t)- r_h^t -V^{\mu^k}_{f^k,h+1}  (s_{h+1}^t))^2	\\
 \le &  \sum_{t=1}^{k-1}(f_h^k(z_h^t)- r_h^t -V^{\mu^k}_{f^k,h+1}  (s_{h+1}^t))^2 - \inf_{g\in\Gcal_{h}}\sum_{t=1}^{k-1}(g(z_h^t)- r_h^t -V^{\mu^k}_{f^k,h+1}  (s_{h+1}^t))^2\le\beta.
\end{aligned}
\end{equation}

Putting \eqref{eq:May18-2} and \eqref{eq:May18-3} together, we obtain
\begin{equation*}
\sum_{t=1}^{k-1} [(f_h^k-\Tcal^{\mu^k}_{h}f^k_{h+1})(z_h^t)]^2 \le \Ocal(\beta + \iota+k\CompErr^2 ),
\end{equation*}
which concludes the proof of inequality $(b)$.

To prove inequality $(a)$, we only need to redefine $\Ffrak_{t,h}$ to be the filtration induced by \\
$\{z_1^i,r_1^i,\ldots,z_H^i,r_H^i\}_{i=1}^{t-1}$
and then repeat the arguments above verbatim.
\end{proof}

\subsubsection{Proof of Lemma \ref{lem:concentrate-2-routine}}
\label{sec:concentrate-2-routine}

\begin{proof}
Recall $\mu_f$ denotes the Nash policy of the max-player induced by $f$. If there exist more than one induced Nash policies, we can break the tie arbitrarily so that $\mu_f$ is uniquely defined for each $f\in\Fcal$. 
 Denote $\Pi_\Fcal:=\{\mu_f~|~f \in \Fcal\}$. We have $|\Pi_\Fcal|\le |\Fcal|$.

Consider a fixed tuple $(k,h,g,\mu)\in[K]\times[H]\times\Gcal\times\Pi_\Fcal$. Let 
\begin{align*}
	W_t(h,g,\mu) := \left(g_h(z_h^t)- r_h^t -Q^{\mu,\dagger}_{h+1}(s_{h+1}^t)\right)^2 
	- \left(\pf(Q^{\mu,\dagger}_h)(z_h^t)- r_h^t -Q^{\mu,\dagger}_{h+1}(s_{h+1}^t)\right)^2
	\end{align*}
and $\Ffrak_{t,h}$ be the filtration induced by 
$\{z_1^i,r_1^i,\ldots,z_H^i,r_H^i\}_{i=1}^{t-1}\bigcup\{z_1^t,r_1^t,\ldots,z_h^t\}.$
 We have 
\begin{align*}
 \E [W_t(h,g,\mu) \mid \Ffrak_{t,h}] =& \left[\left(g_h-Q^{\mu,\dagger}_{h}\right)(z_h^t)\right]^2
- \left[\left(\pf(Q^{\mu,\dagger}_{h}) -Q^{\mu,\dagger}_{h}\right)(z_h^t)\right]^2 \\
\ge &\left[\left(g_h-Q^{\mu,\dagger}_{h}\right)(z_h^t)\right]^2- \RealErr^2
\end{align*}
and by Cauchy-Schwarz inequality
\begin{align*}
&\Var [ W_t(h,g,\mu)\mid \Ffrak_{t,h}]\le \E [(W_t(h,g,\mu))^2 \mid \Ffrak_{t,h}] \\
\le & 36\left[\left(g_h-\pf(Q^{\mu,\dagger}_{h})\right)(z_h^t)\right]^2
\le 
\Ocal\left(\left[\left(g_h-Q^{\mu,\dagger}_{h}\right)(z_h^t)\right]^2+ \RealErr^2 \right). 
\end{align*} 
By Freedman's inequality, we have, with probability at least $1-\delta$, 
\begin{align*}
& -  \sum_{t=1}^{k} W_t(h,g,\mu) +\sum_{t=1}^{k}\left[\left(g_h-Q^{\mu,\dagger}_{h}\right)(z_h^t)\right]^2-k\RealErr^2 \\
 \le &
\Ocal \paren{\sqrt{\log(1/\delta)\sum_{t=1}^k\left(\left[\left(g_h-Q^{\mu,\dagger}_{h}\right)(z_h^t)\right]^2+ \RealErr^2 \right)} + \log(1/\delta)}.
\end{align*}

Now taking a union bound for all $(k,h,g,\mu)\in[K]\times[H]\times\Gcal\times\Pi_\Fcal$, we obtain that with probability at least $1-\delta$,  for  all $(k,h,g,\mu)\in[K]\times[H]\times\Gcal\times\Pi_\Fcal$
 \begin{equation}
 \label{eq:May17-1}
\begin{aligned}
& -  \sum_{t=1}^{k} W_t(h,g,\mu) +\sum_{t=1}^{k}\left[\left(g_h-Q^{\mu,\dagger}_{h}\right)(z_h^t)\right]^2-k\RealErr^2 \\
 \le & \Ocal \paren{\sqrt{\iota \sum_{t=1}^k\left(\left[\left(g_h-Q^{\mu,\dagger}_{h}\right)(z_h^t)\right]^2+ \RealErr^2 \right)} +\iota },
\end{aligned}
\end{equation}
where  $\iota=\log(HK|\Gcal||\Fcal|/\delta)$.
From now on, we will do all the analysis conditioning on this event being true. 

Since $\sum_{t=1}^k [(g_h-Q_{h}^{\mu,\dagger})(z_h^t)]^2$ is nonnegative, \eqref{eq:May17-1} implies for  all $(k,h,g,\mu)\in[K]\times[H]\times\Gcal\times\Pi_\Fcal$
$$
\sum_{t=1}^{k} W_t(h,g,\mu) \ge -\Ocal(\iota+k\RealErr^2 ).
$$
By choosing $\mu=\mu^k$, we have for  all $(k,h)\in[K]\times[H]$
$$
\sum_{t=1}^{k-1} \left(\pf(Q^{\mu^k,\dagger}_h)(z_h^t)- r_h^t -Q^{\mu^k,\dagger}_{h+1}(s_{h+1}^t)\right)^2
\le 
\inf_{g\in\Gcal_h} \sum_{t=1}^{k-1} \left(g(z_h^t)- r_h^t -Q^{\mu^k,\dagger}_{h+1}(s_{h+1}^t)\right)^2 + \Ocal(\iota+k\RealErr^2 ).
$$
We conclude the proof by recalling $\beta = \Theta\left(\log(HK|\Gcal||\Fcal|/\delta)+k\RealErr^2++k\CompErr^2\right)$.
\end{proof}

\section{Proofs for Section \ref{sec:example}}
\label{app:example}

In this section, we will first generalize  Bellman rank \citep{jiang2017contextual} to the setting of Markov Game. Then we show any problems of low Bellman rank also have low BE dimension. Finally, we prove all the examples in Section \ref{sec:example} have low Bellman rank, and thus low BE dimension. 

Denote $\Pi_\Fcal:=\{(\fmu_f,\fres_{f,g})~\mid~f,g\in\Fcal\}$, which is exactly the policy class that induces $\Dcal_\Fcal$.
\begin{definition}[Q-type $\epsilon$-effective Bellman rank]\label{def:effective-bellman-rank-typeI}
The $\epsilon$-effective Bellman rank is the minimum integer $d$ so that 
\begin{itemize}
\item 	There exists $\phi_h: \Pi_\Fcal\rightarrow\cH$ and $\psi_h:\Fcal\rightarrow\cH$ for each $h\in[H]$ where $\cH$ is a separable Hilbert space, such that for any $\pi\in\Pi$, $f,g\in\Fcal$, the average Bellman error
\begin{equation*}
\E_{\pi} [ (f_h-\Tcal_h^{\mu_g} f_{h+1})(s_h,a_h,b_h)]	=\langle \phi_h(\pi),\psi_h(f,g)\rangle_\cH~,
\end{equation*}
where $\|\psi_h(f,g)\|_\cH \le 1$.
\item $d = \max_{h\in[H]}d_{\rm eff}(\Xcal_h(\phi,\Fcal),\epsilon)$ where $\Xcal_h(\phi,\Fcal) = \{ \phi_h(\pi):\ \pi\in\Pi_\Fcal\}$. 
\end{itemize}
\end{definition}

\begin{proposition}[low Q-type effective Bellman rank $\subset$ low Q-type BE dimension]
\label{prop:effective-bellman-bedim}
If an MG with function class $\cF$ has Q-type  $\epsilon$-effective Bellman rank $d$, then
\begin{equation*}
\BEdim(\Fcal,\epsilon)\le d.
\end{equation*}
\end{proposition}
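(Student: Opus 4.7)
}
The plan is to reduce the Bellman–Eluder dimension to an elliptic-potential argument on the feature vectors $\phi_h(\pi)$, exactly as in the single-agent low-rank proofs, but with the witnesses $\psi_h(f,g)$ indexed by a \emph{pair} of functions. First I would peel off the outer structure: by definition, $\BEdim(\Fcal,\epsilon)\le \max_{h\in[H]}\dim_{\textrm{DE}}(\cH_{\Fcal,h},\Dcal_{\Fcal,h},\epsilon)$ because taking the minimum over $\Dcal\in\{\Dcal_\Delta,\Dcal_\Fcal\}$ only helps; so it suffices to fix $h$ and bound $\dim_{\textrm{DE}}(\cH_{\Fcal,h},\Dcal_{\Fcal,h},\epsilon)$ by $d_{\rm eff}(\Xcal_h(\phi,\Fcal),\epsilon)$.

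Next I would unpack the DE-dimension witnesses. Let $n=\dim_{\textrm{DE}}(\cH_{\Fcal,h},\Dcal_{\Fcal,h},\epsilon)$. Then there exist $\epsilon'\ge\epsilon$, a sequence of policies $\pi_1,\dots,\pi_n\in\Pi_\Fcal$ (inducing $\rho_i\in\Dcal_{\Fcal,h}$), and pairs $(f^i,g^i)\in\Fcal\times\Fcal$ giving Bellman residuals $h_i:=f^i_h-\Tcal^{\mu_{g^i}}_h f^i_{h+1}\in\cH_{\Fcal,h}$ such that $|\E_{\rho_i}[h_i]|>\epsilon'$ while $\sum_{j<i}(\E_{\rho_j}[h_i])^2\le \epsilon'^2$. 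Writing $x_i:=\phi_h(\pi_i)\in\Xcal_h$ and $\psi_i:=\psi_h(f^i,g^i)$ with $\|\psi_i\|_\Hcal\le1$, the bilinear form from Definition~\ref{def:effective-bellman-rank-typeI} gives $\E_{\rho_j}[h_i]=\langle x_j,\psi_i\rangle_\Hcal$, so the DE inequalities become $\langle x_i,\psi_i\rangle^2>\epsilon'^2$ and $\sum_{j<i}\langle x_j,\psi_i\rangle^2\le\epsilon'^2$.

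Now I would run the standard elliptic-potential chain. Set $V_i:=\epsilon^2 I+\sum_{j<i}x_jx_j^\top$. Since $\|\psi_i\|\le1$ and $\epsilon\le\epsilon'$,
\[
\psi_i^\top V_i\psi_i=\epsilon^2\|\psi_i\|^2+\sum_{j<i}\langle x_j,\psi_i\rangle^2\le\epsilon^2+\epsilon'^2\le 2\epsilon'^2,
\]
so Cauchy–Schwarz yields $x_i^\top V_i^{-1}x_i\ge \langle x_i,\psi_i\rangle^2/(\psi_i^\top V_i\psi_i)>\epsilon'^2/(2\epsilon'^2)=1/2$. The matrix determinant lemma then gives $\det(V_{i+1})=\det(V_i)(1+x_i^\top V_i^{-1}x_i)>(3/2)\det(V_i)$, and telescoping over $i=1,\dots,n$ (noting $V_1=\epsilon^2 I$ and that the dimension-dependent terms cancel when we rescale by $\epsilon^{-2}$) produces
\[
\tfrac{1}{n}\log\det\bigl(I+\tfrac{1}{\epsilon^2}\textstyle\sum_{i=1}^n x_ix_i^\top\bigr)>\log(3/2)>e^{-1}.
\]
By the very definition of effective dimension, this forces $n<d_{\rm eff}(\Xcal_h,\epsilon)$, and maximizing over $h$ completes the proof.

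The main obstacle is not really conceptual but bookkeeping: ensuring that the bilinear identity is invoked with the \emph{right} pair $(f^i,g^i)$ that witnesses each individual residual $h_i$ (the function class $\cH_{\Fcal,h}$ mixes an $f$-index with a $\mu_g$-index, so the parametrization must be consistent), and handling the possibly infinite-dimensional $\Hcal$ cleanly — done by observing that $\sum x_ix_i^\top$ has rank at most $n$, so all log-determinants can be interpreted via the finite-dimensional span of $\{x_i,\psi_i\}$ or as Fredholm determinants, and the inequality $\log(3/2)>1/e$ is what makes the constants work out.
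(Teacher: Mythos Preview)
Your proposal is correct and follows essentially the same route as the paper's own proof: both fix $h$, use the $\Dcal_\Fcal$ family, translate the DE-independence witnesses into the inner products $\langle x_i,\psi_i\rangle$ via the bilinear factorization, and then run the elliptic-potential argument with $V_i=\epsilon^2 I+\sum_{j<i}x_jx_j^\top$ to force $\|x_i\|_{V_i^{-1}}^2>1/2$ and contradict the defining inequality of $d_{\rm eff}$. Your version is actually slightly more careful than the paper's in tracking the $\epsilon'\ge\epsilon$ from Definition~\ref{def:DE} and in flagging the finite-rank reduction for infinite-dimensional $\cH$; the numerical endgame ($\log(3/2)>e^{-1}$ versus the paper's $0.5>e^{e^{-1}}-1$) is the same inequality in two equivalent forms.
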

\begin{proof}
Assume there exists  $\pi_1,\ldots,\pi_n\in\Pi_{\Fcal}$ and  $(f^1,g^1)\ldots,(f^n,g^n)$ such that for all $t\in[n]$, $\sqrt{\sum_{i=1}^{t-1}(\E_{\pi_i}[f^t_h-\Tcal_h^{\mu_{g^t}} f_{h+1}^t])^2}\le \epsilon$ and $|\E_{\pi_t}[f^t_h-\Tcal_h^{\mu_{g^t}} f_{h+1}^t] |>\epsilon$.
	By the definition of effective Bellman rank, this is equivalent to: $\sqrt{\sum_{i=1}^{t-1}(\langle \phi_h(\pi^t),\psi_h(f^i,g^i)\rangle)^2}\le \epsilon$ and $| \langle \phi_h(\pi^t),\psi_h(f^t,g^t)\rangle|>\epsilon$ for all $t\in[n]$.
	
	For notational simplicity, define $x_i = \phi_h(\pi^i)$ and $\theta_i = \psi_h(f^i,g^i)$. Then
	\begin{equation}
		\left\{
		\begin{aligned}
		&\sum_{i=1}^{t-1} (x_i^\top 	\theta_t)^2 \le \epsilon^2, \quad t\in[n] \\
		& |x_t^\top 	\theta_t| \ge \epsilon, \quad t\in [n].
		\end{aligned}
\right.
	\end{equation}
Define $\Sigma_t  = \sum_{i=1}^{t-1} x_i x_i^\top + {\epsilon^2} \cdot \mathrm{I}$. We have 
\begin{equation}
		\begin{aligned}
		\|\theta_t\|_{\Sigma_t} \le \sqrt{2}\epsilon \quad \Longrightarrow \quad 
		 \epsilon\le |x_t^\top 	\theta_t| \le \|\theta_t\|_{\Sigma_t}\cdot   \|x_t\|_{\Sigma_t^{-1}}\le \sqrt{2}\epsilon \|x_t\|_{\Sigma_t^{-1}}, \quad t\in [n].
		\end{aligned}
	\end{equation}
As a result, we should have 	$\|x_t\|_{\Sigma_t^{-1}}^2\ge 1/2$ for all $t\in [n]$.
Now we can apply the standard log-determinant argument,
\begin{align*}
	\sum_{t=1}^{n} \log(1+ \|x_t\|_{\Sigma_t^{-1}}^2) = 
	\log\left(\frac{\det(\Sigma_{n+1})}{\det(\Sigma_1)}\right)
	=  \log\det\left(\mathrm I + \frac{1}{\epsilon^2} \sum_{i=1}^n x_i x_i^\top \right),
\end{align*}
which implies 
\begin{align}
	0.5 \le \min_{t\in[n]} \|x_t\|_{\Sigma_t^{-1}}^2 
	\le \exp\left(\frac{1}{n} \log\det\left(\mathrm I + \frac{1}{\epsilon^2} \sum_{i=1}^n x_i x_i^\top \right)\right)-1.
\end{align}
Choose $n=d_{\rm eff}(\Xcal_h(\phi,\Fcal),\epsilon)$ that is the minimum positive integer satisfying 
\begin{equation}
	\sup_{x_1,\ldots,x_n \in \Xcal_h(\phi,\Fcal)}\frac{1}{n} \log\det\left(\mathrm I + \frac{1}{\epsilon^2} \sum_{i=1}^n x_i x_i^\top \right) \le e^{-1}.
\end{equation}
This leads to a contradiction because $0.5> e^{e^{-1}}-1$. So we must have $n\le d_{\rm eff}(\Xcal_h(\psi,\Fcal),\epsilon).$
\end{proof}

Similarly, we can define V-type Bellman rank, and prove it is upper bounded by V-type BE dimension.

\begin{definition}[V-type $\epsilon$-effective Bellman rank]\label{def:effective-bellman-rank-typeII}
The $\epsilon$-effective Bellman rank is the minimum integer $d$ so that 
\begin{itemize}
\item 	There exists $\phi_h: \Pi_\Fcal\rightarrow\cH$ and $\psi_h:\Fcal\rightarrow\cH$ for each $h\in[H]$ where $\cH$ is a separable Hilbert space, such that for any $\pi\in\Pi$, $f,g,w\in\Fcal$, the average Bellman error
\begin{equation*}
 \E[(f_h-\Tcal_{h}^{\mu_g} f_{h+1})(s_h,a_h,b_h)~\mid~s_h\sim\pi, \ (a_h,b_h)\sim \mu_{g,h} \times\nu_{g,w,h}]	=\langle \phi_h(\pi),\psi_h(f,g,w)\rangle_\cH~,
\end{equation*}
where $\|\psi_h(f,g,w)\|_\cH \le 1$.
\item $d = \max_{h\in[H]}d_{\rm eff}(\Xcal_h(\phi,\Fcal),\epsilon)$ where $\Xcal_h(\phi,\Fcal) = \{ \phi_h(\pi):\ \pi\in\Pi_\Fcal\}$. 
\end{itemize}
\end{definition}

\begin{proposition}[low V-type effective Bellman rank $\subset$ low V-type BE dimension]
\label{prop:effective-bellman-bedim-V}
If an MG with function class $\cF$ has V-type $\epsilon$-effective Bellman rank $d$, then
\begin{equation*}
\dim_{\textrm{ VBE}}(\Fcal,\epsilon)\le d.
\end{equation*}
\end{proposition}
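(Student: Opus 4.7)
The plan is to mirror the proof of Proposition \ref{prop:effective-bellman-bedim} essentially verbatim, with the only substantive change being that the Bellman residuals and distributions are now indexed by state rather than by state-action pair, and that the factorization is witnessed by a feature map $\psi_h$ that depends on three functions $(f,g,w)$ instead of two $(f,g)$. The definition of V-type BE dimension (Definition \ref{def:vbedim}) reduces to bounding the DE dimension of the function class $\{\Ecal_h(f,g,w) : f,g,w\in\Fcal\}$ with respect to the distribution family $\Dcal_\Fcal$ over $\Scal$, and by construction of $\Ecal_h$ the quantity $\E_{\mu}[\Ecal_h(f,g,w)]$ for any $\mu$ induced by a policy $\pi\in\Pi_\Fcal$ is exactly the conditional Bellman error appearing in Definition \ref{def:effective-bellman-rank-typeII}.

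First I would suppose, for the sake of contradiction, that there is an $\epsilon$-independent sequence of length $n > d_{\mathrm{eff}}(\Xcal_h(\phi,\Fcal),\epsilon)$. That is, there exist $\pi_1,\ldots,\pi_n \in \Pi_\Fcal$ and triples $(f^1,g^1,w^1),\ldots,(f^n,g^n,w^n)\in\Fcal^3$ such that for every $t\in[n]$,
\begin{equation*}
\sqrt{\sum_{i=1}^{t-1}\bigl(\E_{\pi_i}[\Ecal_h(f^t,g^t,w^t)]\bigr)^2}\le \epsilon
\quad\text{and}\quad
\bigl|\E_{\pi_t}[\Ecal_h(f^t,g^t,w^t)]\bigr| > \epsilon.
\end{equation*}
Using the factorization from Definition \ref{def:effective-bellman-rank-typeII}, set $x_i = \phi_h(\pi_i)$ and $\theta_t = \psi_h(f^t,g^t,w^t)$ with $\|\theta_t\|_\cH\le 1$. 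The independence conditions become exactly the same linear-algebraic inequalities as in the proof of Proposition \ref{prop:effective-bellman-bedim}: $\sum_{i<t}(x_i^\top\theta_t)^2\le\epsilon^2$ and $|x_t^\top\theta_t|>\epsilon$.

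Next I would repeat the potential-function / log-determinant computation verbatim. Defining $\Sigma_t=\sum_{i<t}x_ix_i^\top+\epsilon^2\mathrm I$, the bound $\|\theta_t\|_{\Sigma_t}\le\sqrt{2}\epsilon$ combined with Cauchy--Schwarz forces $\|x_t\|_{\Sigma_t^{-1}}^2\ge 1/2$ for each $t$, and the telescoping identity $\sum_t\log(1+\|x_t\|_{\Sigma_t^{-1}}^2)=\log\det(\mathrm I+\epsilon^{-2}\sum_i x_ix_i^\top)$ yields
\begin{equation*}
\tfrac{1}{2}\le \exp\!\Bigl(\tfrac{1}{n}\log\det\bigl(\mathrm I+\tfrac{1}{\epsilon^2}\sum_{i=1}^n x_ix_i^\top\bigr)\Bigr)-1.
\end{equation*}
By definition of $d_{\mathrm{eff}}$, for $n = d_{\mathrm{eff}}(\Xcal_h(\phi,\Fcal),\epsilon)$ the right-hand side is at most $e^{e^{-1}}-1 < 1/2$, contradicting the lower bound above. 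Taking the maximum over $h\in[H]$ and minimizing over $\Dcal\in\{\Dcal_\Delta,\Dcal_\Fcal\}$ (the latter only helps) gives the claimed bound $\dim_{\mathrm{VBE}}(\Fcal,\epsilon)\le d$.

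I do not anticipate a genuine obstacle: the only conceptual point is to observe that the V-type average Bellman error is still bilinear in an appropriate sense (linear in the state-distribution feature $\phi_h(\pi)$ and having a bounded functional representative $\psi_h(f,g,w)$), so the whole argument of Proposition \ref{prop:effective-bellman-bedim} transports without modification. The mild bookkeeping detail worth flagging is that in the V-type case the underlying space of the DE dimension is $\Scal$ rather than $\Scal\times\Acal\times\Bcal$, and the distributions in $\Dcal_\Fcal$ are obtained by executing some $(\mu_f,\nu_{f,g})$ and then reading off $s_h$; this matches the way $\phi_h(\pi)$ aggregates the state-visitation information in Definition \ref{def:effective-bellman-rank-typeII}, so the reduction to the linear-algebra lemma is clean.
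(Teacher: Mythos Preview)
Your proposal is correct and is exactly the approach the paper takes: the paper's own ``proof'' simply reads ``We omit the proof here because it is basically the same as Proposition \ref{prop:effective-bellman-bedim},'' and what you have written is precisely that transport of the argument, with the correct bookkeeping change from pairs $(f,g)$ to triples $(f,g,w)$ and from state-action distributions to state distributions.
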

We omit the proof here because it is basically the same as Proposition \ref{prop:effective-bellman-bedim}.

\subsection{Proofs for Examples}

Below, we prove the problems introduced in Section \ref{sec:example} have either low Q-type or low V-type Bellman rank. Therefore, by Proposition \ref{prop:effective-bellman-bedim} and \ref{prop:effective-bellman-bedim-V}, they also have low Q-type or low V-type BE dimension. 

\begin{proof}[Proof of Proposition \ref{ex:tabular}]
We have 
\begin{equation}\label{eq:Jun2-1}
	\begin{aligned}
&		\E[ (f_h-\Tcal^{\mu_g}_h f_{h+1})(s_h,a_h,b_h)~\mid~(s_h,a_h,b_h)\sim\pi]\\
	= & \big\langle \Pr^\pi[(s_h,a_h,b_h)=\cdot]~,~(f_h-\Tcal^{\mu_g}_h f_{h+1})(\cdot) \big\rangle,
	\end{aligned}
\end{equation}
where the LHS only depends on $\pi$, and the RHS only depends on $f,g$.
\end{proof}

\begin{proof}[Proof of Proposition \ref{ex:linear}]
Consider two arbitrary $\theta_h,\theta_{h+1}\in B_d(R)$ and $g\in\Fcal$. By self-completeness, there exists $\theta_h^g\in B_d(R)$ so that 
$\Tcal_h^{\mu_g} (\phi_{h+1}\trans\theta_{h+1})=\phi_{h}\trans\theta_{h}^g$.  Therefore, we have
		\begin{equation}
		\begin{aligned}
			&		\E\big[ [\phi_{h}\trans\theta_{h}-\Tcal_h^{\mu_g} (\phi_{h+1}\trans\theta_{h+1})](s_h,a_h,b_h)~\mid~(s_h,a_h,b_h)\sim\pi]\\
	= & 		\E\big[ \phi_{h}(s_h,a_h,b_h)\trans(\theta_h-\theta_{h}^g)~\mid~(s_h,a_h,b_h)\sim\pi]\\
	= & \big\langle \E_\pi\big[ \phi_{h}(s_h,a_h,b_h)]~,~\theta_h-\theta_{h}^g \big\rangle,
		\end{aligned}
	\end{equation}
	where the LHS only depends on $\pi$, and the RHS only depends on $\theta_h,\theta_{h+1},g$.
\end{proof}

\begin{proof}[Proof of Proposition \ref{ex:kernel}]
Consider two arbitrary $\theta_h,\theta_{h+1}\in B_\cH(R)$ and $g\in\Fcal$. By self-completeness, there exists $\theta_h^g\in B_\cH(R)$ so that 
$\Tcal_h^{\mu_g} (\phi_{h+1}\trans\theta_{h+1})=\phi_{h}\trans\theta_{h}^g$.  Therefore, we have
		\begin{equation}
		\begin{aligned}
			&		\E\big[ [\phi_{h}\trans\theta_{h}-\Tcal_h^{\mu_g} (\phi_{h+1}\trans\theta_{h+1})](s_h,a_h,b_h)~\mid~(s_h,a_h,b_h)\sim\pi]\\
	= & 		\E\big[ \phi_{h}(s_h,a_h,b_h)\trans(\theta_h-\theta_{h}^g)~\mid~(s_h,a_h,b_h)\sim\pi]\\
	= & \big\langle \E_\pi\big[ \phi_{h}(s_h,a_h,b_h)]~,~\theta_h-\theta_{h}^g \big\rangle,
		\end{aligned}
	\end{equation}
	where the LHS only depends on $\pi$, and the RHS only depends on $\theta_h,\theta_{h+1},g$.
\end{proof}

\begin{proof}[Proof of Proposition \ref{ex:block}]
Let $d(i)$ denote the distribution of state $s$ given $q(s)=i$. 
For any policy $\pi$ and $f,g,w\in\Fcal$
	\begin{align*}
 & \E \left[ (f_h-\Tcal^{\mu_{g}} f_{h+1})(s_h,a_h,b_h)  ~\mid ~s_h \sim \pi,~ (a_h,b_h)\sim \mu_g\times\nu_{g,w}\right]	\\
 = & \big\langle \Pr^\pi_h[q(s_h)=\cdot]~,~ 
 \E \left[ (f_h-\Tcal^{\mu_{g}} f_{h+1})(s_h,a_h,b_h)  \mid s_h \sim d(\cdot),~ (a_h,b_h)\sim \mu_g\times\nu_{g,w}\right] \big\rangle,
\end{align*}
where the LHS only depends on $\pi$ while the RHS only depends on $f,g,w$. Both of them are $m$-dimensional.
\end{proof}

\begin{proof}[Proof of Proposition \ref{ex:selection}]
The case $h=1$ is trivial. We only need to consider $h\ge2$. 
For any policy $\pi$ and $f,g,w\in\Fcal$
	\begin{align*}
 & \E \left[ (f_h-\Tcal^{\mu_{g}} f_{h+1})(s_h,a_h,b_h)  ~\mid ~s_h \sim \pi,~ (a_h,b_h)\sim \mu_g\times\nu_{g,w}\right]	\\
= &\sum_{z\in\Scal\times\Acal\times\Bcal} \sum_{s} \Pr^\pi[(s_{h-1},a_{h-1},b_{h-1})=z] \times 
\Pr[s_{h}=s\mid (s_{h-1},a_{h-1},b_{h-1})=z]   \\
&\quad \quad \times\E \left[ (f_h-\Tcal^{\mu_{g}} f_{h+1})(s_h,a_h,b_h)  \mid s_h =s,~ (a_h,b_h)\sim \mu_g\times\nu_{g,w}\right]	\\
= &\sum_{z\in\Scal\times\Acal\times\Bcal} \sum_{s} \Pr^\pi[(s_{h-1},a_{h-1},b_{h-1})=z] \times 
 \langle \phi_{h-1}(z),\psi_{h-1}(s)\rangle  \\
&\quad \quad \times\E \left[ (f_h-\Tcal^{\mu_{g}} f_{h+1})(s_h,a_h,b_h)  \mid s_h =s,~ (a_h,b_h)\sim \mu_g\times\nu_{g,w}\right]	\\
= & \Big\langle \sum_{z\in\Scal\times\Acal\times\Bcal}  \Pr^\pi[(s_{h-1},a_{h-1},b_{h-1})=z] \times  \phi_{h-1}(z),\\
 & \quad\quad 
\sum_{s\in\Scal} \E \left[ (f_h-\Tcal^{\mu_{g}} f_{h+1})(s_h,a_h,b_h)  \mid s_h =s,~ (a_h,b_h)\sim \mu_g\times\nu_{g,w}\right]	\times \psi_{h-1}(s) \Big\rangle\\
= & \Big\langle   \E_\pi[\phi_{h-1}(s_{h-1},a_{h-1},b_{h-1})],\\
 & \quad\quad 
\sum_{s\in\Scal} \E \left[ (f_h-\Tcal^{\mu_{g}} f_{h+1})(s_h,a_h,b_h)  \mid s_h =s,~ (a_h,b_h)\sim \mu_g\times\nu_{g,w}\right]	\times \psi_{h-1}(s) \Big\rangle,
\end{align*}
where the LHS only depends on $\pi$ while the RHS only depends on $f,g,w$. 
By the regularization condition of Kernel MGs, the norm of the RHS is bounded by $2R+1$.
\end{proof}

\section{Proofs for Appendix~\ref{sec:type2}}
\label{sec:proof-type2}

In this section, we present the proof of Theorem~\ref{thm:golf-mg-regret-V} and Corollary \ref{cor:golf-mg-pac-V}. The techniques are basically the same as  those in Appendix~\ref{app:typeI-golf}.
Please notice that whenever coming across $\Dcal_{\Fcal}$ and $\Dcal_{\Delta}$ in this section, we use their definitions introduced in Appendix~\ref{sec:type2}.

To begin with, we have the following concentration lemma (akin to Lemma \ref{lem:concentrate-1-routine} and \ref{lem:concentrate-2-routine}) for the sub-routine \golfbr\ under the samples collected with Option II.

\begin{lemma}\label{lem:concentrate-routine-V}
 Under Assumption \ref{as:realizable} and \ref{as:complete}, if we choose $\beta=c\cdot(\log(KH|\Fcal||\Gcal|/\delta)+K\CompErr^2+K\RealErr^2)$ with some large absolute constant $c$ in Algorithm \ref{alg:golfmg} with Option II, then
		with probability at least $1-\delta$, for all $(k,h)\in[K]\times[H]$, we have 
		\begin{enumerate}[label=(\alph*)]
			\item $\sum_{i=1}^{k-1}  \E \left[ \paren{(\tilde{f}_h^k - \cT^{\mu^k}_h \tilde{f}_{h+1}^k)(s_h,a,b) }^2\mid s_h \sim \pi^i,(a,b) \sim \unif(\Acal \times \Bcal)\right]
			{\le} \mathcal{O} ( \beta)$,
			\item $\sum_{i=1}^{k-1} \sum_{(a,b)\in \Acal \times \Bcal} \paren{(\tilde{f}^k_h - \cT^{\mu^k}_h \tilde{f}_{h+1}^k)(s_h^i,a,b) }^2
			{\le} \mathcal{O} ( |\Acal||\Bcal|\beta)$,
			\item $\pf(Q^{\mu^k,\dagger})\in \cC^{\mu^k}$,
		\end{enumerate}
		where (i) $\pi^i=(\mu^i,\nu^i)$; (ii) $s_h^i$ denotes the state at step h, collected by following $\pi^i$ until step $h$, in the $i^{\rm th}$ outer iteration; (iii)  $\tilde f^k$ denotes the optimistic function computed in sub-routine \golfbr\ in the  $k^{\rm th}$ outer iteration.
	\end{lemma}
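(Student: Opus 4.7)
The plan is to adapt the proofs of Lemmas~\ref{lem:concentrate-1-routine} and \ref{lem:concentrate-2-routine} to the Option~II sampling scheme, where the per-step sample $(s_h^i, a_h^i, b_h^i, r_h^i, s_{h+1}^i)$ is obtained by executing $\pi^i=(\mu^i,\nu^i)$ for the first $h-1$ steps and then drawing $(a_h^i, b_h^i) \sim \unif(\Acal \times \Bcal)$. All three claims will follow from Freedman's inequality applied to appropriately chosen martingale differences, combined with a union bound over the relevant finite classes $\Fcal$, $\Gcal$, and the induced max-player policy class $\Pi_\Fcal := \{\mu_f : f \in \Fcal\}$ (whose cardinality is bounded by $|\Fcal|$), together with a union bound over $(k,h) \in [K]\times[H]$.

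For (a) and (b), I would define, for each fixed $(h, f, \mu) \in [H] \times \Fcal \times \Pi_\Fcal$,
\[
X_t(h,f,\mu) := \big(f_h(z_h^t) - r_h^t - V^\mu_{f,h+1}(s_{h+1}^t)\big)^2 - \big(\pg(\cT^\mu_h f_{h+1})(z_h^t) - r_h^t - V^\mu_{f,h+1}(s_{h+1}^t)\big)^2,
\]
where $z_h^t = (s_h^t, a_h^t, b_h^t)$. The key choice is the filtration. To obtain (b), I would take $\Ffrak_{t,h}$ to be the filtration generated by all data from episodes $1,\ldots,t-1$ together with $s_h^t$ (but not $(a_h^t, b_h^t)$); completeness plus uniformity of $(a_h^t, b_h^t)$ then gives $\E[X_t\mid \Ffrak_{t,h}] \ge \tfrac{1}{|\Acal||\Bcal|} \sum_{(a,b)} [(f_h - \cT^\mu_h f_{h+1})(s_h^t, a, b)]^2 - \CompErr^2$, with its conditional variance bounded by $\mathcal{O}$ of the same quantity (since all functions lie in $[0,1]$). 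Applying Freedman, union-bounding over $\Fcal \times \Gcal \times \Pi_\Fcal \times [H] \times [K]$, and combining with the defining inequality of $\cC^{\mu^k}$ (which yields $\sum_{t=1}^{k-1} X_t(h, \tilde f^k, \mu^k) \le \beta$ because $\tilde f^k \in \cC^{\mu^k}$) produces (b) after multiplying through by $|\Acal||\Bcal|$. For (a), I would instead use the smaller filtration generated by episodes $1,\ldots,t-1$ only, so that $\E[X_t \mid \Ffrak_{t,h}]$ equals $\E_{s_h \sim \pi^t,\,(a,b)\sim\unif}[(f_h - \cT^\mu_h f_{h+1})^2(s_h,a,b)]$ up to $\CompErr^2$, and the same Freedman-plus-union-bound argument delivers (a).

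For (c), I would mirror Lemma~\ref{lem:concentrate-2-routine} by defining
\[
W_t(h,g,\mu) := \big(g_h(z_h^t) - r_h^t - Q^{\mu,\dagger}_{h+1}(s_{h+1}^t)\big)^2 - \big(\pf(Q^{\mu,\dagger}_h)(z_h^t) - r_h^t - Q^{\mu,\dagger}_{h+1}(s_{h+1}^t)\big)^2
\]
and invoking Freedman's inequality with a union bound over $\Gcal \times \Pi_\Fcal \times [H] \times [K]$ to show $\sum_t W_t(h,g,\mu) \ge -\mathcal{O}(\iota + k\RealErr^2)$ uniformly, where $\iota = \log(HK|\Fcal||\Gcal|/\delta)$. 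Specialising to $\mu = \mu^k$ and $g_h = \pf(Q^{\mu^k,\dagger}_h)$ and using $\RealErr$-realizability then shows that the projected best-response value function satisfies the soft constraint defining $\cC^{\mu^k}$, provided the absolute constant $c$ in the choice of $\beta$ is taken large enough to absorb $\iota + k(\RealErr^2 + \CompErr^2)$.

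The main obstacle, compared to Option~I, is recovering the correct $|\Acal||\Bcal|$ factor in (b) without introducing additional importance-weighting losses. The crucial observation is that under Option~II the conditional law of $(a_h^t, b_h^t)$ given $s_h^t$ and the past is \emph{exactly} uniform, so the conditional expectation of the pointwise squared Bellman error at $z_h^t$ equals $1/(|\Acal||\Bcal|)$ times the sum over all action pairs at $s_h^t$, and the pointwise boundedness of $f_h, g_h \in [0,1]$ controls the Freedman variance term. Beyond this, the argument is a mechanical repetition of the analysis in Appendix~\ref{app:typeI-golf}.
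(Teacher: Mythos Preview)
Your proposal is correct and follows essentially the same approach as the paper. The paper's proof simply says to redefine the filtrations from Appendix~\ref{sec:concentrate-1-routine}: for (a) use the data from episodes $1,\ldots,t-1$ only, and for (b) additionally include $s_h^t$ (but not the action pair), then repeat the Freedman argument verbatim; for (c) it defers entirely to Lemma~\ref{lem:concentrate-2-routine}. Your write-up is in fact more explicit than the paper about the key step for (b)---that under Option~II the conditional law of $(a_h^t,b_h^t)$ given $s_h^t$ is uniform, which converts the conditional mean of the squared residual into $\tfrac{1}{|\Acal||\Bcal|}\sum_{(a,b)}[\cdot]^2$ and produces the $|\Acal||\Bcal|$ factor after rearranging. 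One cosmetic point: the union bound for (a)--(b) only needs to range over $\Fcal\times\Pi_\Fcal\times[H]\times[K]$ (not $\Gcal$), since $\pg(\cT^\mu_h f_{h+1})$ is already determined by $(f,\mu)$; including $\Gcal$ is harmless but unnecessary.
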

\begin{proof}
	To prove (a), we only need to redefine the filtration  $\Ffrak_{t,h}$ in Appendix \ref{sec:concentrate-1-routine} to be the filtration induced by  
$\{z_1^i,r_1^i,\ldots,z_H^i,r_H^i\}_{i=1}^{t-1}$ where $z_h^i=(s_h^i,a_h^i,b_h^i)$, and repeat the arguments therein verbatim. Similarly, for (b), we only need to redefine $\Ffrak_{t,h}$ in Appendix \ref{sec:concentrate-1-routine} to be the filtration induced by  
$\{z_1^i,r_1^i,\ldots,z_H^i,r_H^i\}_{i=1}^{t-1}\bigcup\{z_1^t,r_1^t,\ldots,z_{h-1}^t,r_{h-1}^t,s_{h}^t\}$. And the proof of (c) is the same as that of Lemma \ref{lem:concentrate-2-routine} in Appendix \ref{sec:concentrate-2-routine}.
\end{proof}

	\begin{proposition}[approximate best-response regret]
		\label{prop:routine-type2}
Under Assumption \ref{as:realizable} and \ref{as:complete}, 
there exists an absolute constant $c$ such that for any $\delta\in(0,1]$, $K\in\N$, 
if we choose $\beta=c\cdot(\log(KH|\Fcal||\Gcal|/\delta)+K\CompErr^2+K\RealErr^2)$, then
with probability at least $1-\delta$
$$
\sum_{k=1}^K \left(V^{\mu^k,\nu^k}_1(s_1) -  V^{\mu^k,\dagger}_1(s_1)\right)
\le \Ocal\left(  H\sqrt{|\Acal||\Bcal|k\beta\cdot \dim_{\textrm{VBE}}(\Fcal, 1/K)}\right).
$$
\end{proposition}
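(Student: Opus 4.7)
The plan is to mirror Proposition~\ref{prop:routine}, substituting the V-type concentration Lemma~\ref{lem:concentrate-routine-V} for Lemmas~\ref{lem:concentrate-1-routine}--\ref{lem:concentrate-2-routine}, and routing the pigeonhole argument through the V-type residual class so that $\dim_{\textrm{VBE}}$ appears in place of $\dim_{\textrm{BE}}$. First, I would carry out the same decomposition as in \eqref{eq:jun1}: Lemma~\ref{lem:concentrate-routine-V}(c) guarantees $\pf(Q^{\mu^k,\dagger}) \in \cC^{\mu^k}$, so the minimization in Line~\ref{line:br-2} of Algorithm~\ref{alg:golfbr} combined with $\RealErr$-realizability yields $\min_{\nu}\D_{\mu^t_1\times\nu}\tilde{f}^t_1(s_1) \le V^{\mu^t,\dagger}_1(s_1)+\RealErr$. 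The value-difference lemma (Lemma~\ref{lem:value-difference}) then rewrites the gap as a sum of $\mu$-Bellman errors $\E_{\pi^t}[(\tilde f^t_h - \Tcal^{\mu^t}_h \tilde f^t_{h+1})(s_h, a_h, b_h)]$, and, conditioning on $s_h$ first, this equals $\E_{s_h \sim \pi^t}[\Ecal_h(\tilde f^t, f^t, \tilde f^t)(s_h)]$ because $\mu^t = \mu_{f^t}$ and $\nu^t = \nu_{f^t, \tilde f^t}$ by construction in Algorithm~\ref{alg:golfbr}.

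Next, I would convert the squared-loss concentration bounds of Lemma~\ref{lem:concentrate-routine-V}(a)--(b) into control of the V-type residuals. Since $\mu^k_h(a\mid s)\nu^k_h(b\mid s)\le 1$, Cauchy--Schwarz gives
\begin{equation*}
(\Ecal_h(\tilde f^k, f^k, \tilde f^k)(s))^2 \le \sum_{(a,b)\in\Acal\times\Bcal}\bigl((\tilde f^k_h - \Tcal^{\mu^k}_h \tilde f^k_{h+1})(s,a,b)\bigr)^2
\end{equation*}
pointwise, so setting $s=s_h^i$ and summing $i<k$, part~(b) yields $\sum_{i<k}(\Ecal_h(\tilde f^k, f^k, \tilde f^k)(s_h^i))^2 \le \Ocal(|\Acal||\Bcal|\beta)$. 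For the $\Dcal_\Fcal$ family, I would start from part~(a): a pointwise change of measure from $\unif(\Acal\times\Bcal)$ to $\mu^k_h \times \nu^k_h$ (whose Radon--Nikodym derivative is bounded by $|\Acal||\Bcal|$) followed by Jensen's inequality produces $\sum_{i<k}(\E_{s_h\sim\pi^i}[\Ecal_h(\tilde f^k, f^k, \tilde f^k)(s_h)])^2 \le \Ocal(|\Acal||\Bcal|\beta)$.

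Finally, I would apply the pigeonhole Lemma~\ref{lem:de-regret} to the sequence $\{\Ecal_h(\tilde f^t, f^t, \tilde f^t)\}_{t\le k}$ in $\cH_{\cF,h}$ twice---once with $\Pi=\Dcal_{\Delta,h}$ using the first bound and once with $\Pi=\Dcal_{\cF,h}$ using the second---and take the minimum, which by Definition~\ref{def:vbedim} gives a bound scaling with $\dim_{\textrm{VBE}}(\Fcal,1/K)$. Summing over $h\in[H]$, absorbing the $k\RealErr$ and Azuma--Hoeffding terms through the choice of $\beta$, and chaining with the decomposition from the first step delivers the claimed $\Ocal(H\sqrt{|\Acal||\Bcal|k\beta\cdot\dim_{\textrm{VBE}}(\Fcal, 1/K)})$ bound. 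The main obstacle---and the sole reason for the extra $|\Acal||\Bcal|$ factor relative to Proposition~\ref{prop:routine}---is the change of measure in the second step: Option~II only provides concentration against uniform action sampling, whereas the V-type residual averages actions along the on-policy product $\mu^k\times\nu^k$, and no cheaper coupling is available without additional structure on the function class.
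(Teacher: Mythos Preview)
Your proposal is correct and follows essentially the same approach as the paper's proof: both use Lemma~\ref{lem:concentrate-routine-V}(c) for optimism, the value-difference decomposition, the pointwise bound $(\Ecal_h(\cdot)(s))^2\le\sum_{a,b}(\tilde f^k_h-\Tcal^{\mu^k}_h\tilde f^k_{h+1})^2(s,a,b)$ (the paper phrases this as Jensen rather than Cauchy--Schwarz), and then Lemma~\ref{lem:de-regret} with both $\Dcal_{\Delta,h}$ and $\Dcal_{\cF,h}$. Your explicit change-of-measure argument for part~(a) is exactly what the paper's ``similarly'' is hiding.
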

		
\begin{proof}
	By Lemma \ref{lem:concentrate-routine-V}, $\pf(Q^{\mu^k,\dagger})\in \Ccal^{\mu^k}$. Since \golfbr\ chooses $\tilde{f}^k$ optimistically, value difference lemma (Lemma~\ref{lem:value-difference}) gives  
		\begin{align*}
		& \sum_{k=1}^K \left(V^{\mu^k,\nu^k}_1(s_1) -  V^{\mu^k,\dagger}_1(s_1)\right)\\
		\le& \sum_{k=1}^K \left(V^{\mu^k,\nu^k}_1(s_1) - \min_{\nu}\D_{\mu^k_1 \times \nu} \tilde{f}_{1}^k(s_1)\right)+k\RealErr \\
		 =& -\sum_{k=1}^K\sum_{h=1}^H \E_{\pi^k}
		 \left[ (\tilde{f}^k_h -  \Tcal^{\mu^k}_h\tilde{f}_{h+1}^k)(s_h,a,b)~|~(a,b) \sim \mu_h^k \times \nu_{f^k,\tilde{f}^k,h}\right]+k\RealErr \\
		\le &- \sum_{h=1}^H \sum_{k=1}^K \E\left[ (\tilde{f}^k_h -  \Tcal^{\mu^k}_h\tilde{f}_{h+1}^k)(s_h^k,a,b)~|~(a,b) \sim \mu_h^k \times \nu_{f^k,\tilde{f}^k,h}\right]+\sqrt{KH\log(KH/\delta)}+k\RealErr ,
		\end{align*}
where the expectation in the second last line is over $s_h\sim\pi^k$ and $(a,b)\sim \mu_h^k \times \nu_{f^k,\tilde{f}^k,h}$, 
the expectation in the last line is only over $(a,b)\sim \mu_h^k \times \nu_{f^k,\tilde{f}^k,h}$ conditioning on $s_h$ being fixed as $s_h^k$, and the definition of $\nu_{f,g}$ is introduced in Section \ref{sec:main-results}.		
		
		By Jensen's inequality and Lemma \ref{lem:concentrate-routine-V} (b), we have for all $k\in[K]$
		\begin{align*}
			&\sum_{t=1}^k \left(\E\left[ (\tilde{f}^k_h -  \Tcal^{\mu^k}_h\tilde{f}_{h+1}^k)(s_h^t,a,b)~|~(a,b) \sim \mu_h^k \times \nu_{f^k,\tilde{f}^k,h}\right]\right)^2\\
			\le &\sum_{t=1}^k \E\left[ \left(\tilde{f}^k_h -  \Tcal^{\mu^k}_h\tilde{f}_{h+1}^k\right)^2(s_h^t,a,b)~|~(a,b) \sim \mu_h^k \times \nu_{f^k,\tilde{f}^k,h}\right]\\
			\le & \sum_{t=1}^k \sum_{(a,b)\in \Acal \times \Bcal} \left(\tilde{f}^k_h -  \Tcal^{\mu^k}_h\tilde{f}_{h+1}^k\right)^2(s_h^t,a,b)\le  \mathcal{O} \left( |\Acal||\Bcal|\beta\right).
		\end{align*}
		So we can apply Lemma \ref{lem:de-regret} with $\Gcal = \cH_{\cF,h}$ (here, $\cH_\Fcal$  refers to the one introduced  in Definition \ref{def:vbedim}), $\Pi = \Dcal_{\Delta,h}$, $\epsilon  = 1/K$ and obtain
		\begin{align*}
			&\sum_{k=1}^K \left|\E\left[ (\tilde{f}^k_h -  \Tcal^{\mu^k}_h\tilde{f}_{h+1}^k)(s_h^k,a,b)~|~ (a,b)\sim\mu_h^k \times \nu_{f^k,\tilde{f}^k,h}\right]\right|\\
			\le& \Ocal\left(\sqrt{|\Acal||\Bcal|\beta \cdot \dim_{\textrm{DE}}(\cH_{\cF,h},\Dcal_{\Delta,h},1/K) \cdot K}\right).
		\end{align*}
		Similarly by using Lemma \ref{lem:concentrate-routine-V} (a), we can show 
		\begin{align*}
			&\sum_{k=1}^K \E_{\pi^k}\left[ (\tilde{f}^k_h -  \Tcal^{\mu^k}_h\tilde{f}_{h+1}^k)(s_h,a,b)~|~(a,b) \sim \mu_h^k \times \nu_{f^k,\tilde{f}^k,h}\right]\\
			\le&\Ocal\left( \sqrt{|\Acal||\Bcal|\beta \cdot \dim_{\textrm{DE}}(\cH_{\cF,h},\Dcal_{\Fcal,h},1/K) \cdot K}\right).
		\end{align*}
		Putting all relations 		together as in Proposition \ref{prop:routine} and noticing that
		 $$k\RealErr \le  \Ocal\left(  H\sqrt{|\Acal||\Bcal|k\beta\cdot \dim_{\textrm{VBE}}(\Fcal, 1/K)}\right)$$
		completes the proof.
		\end{proof}


Equipped with the regret guarantee for  \golfbr, we are ready to bound the pseudo-regret for the main algorithm \golfmg. To begin with, we have the following concentration lemma (akin to Lemma \ref{lem:concentrate-1} and \ref{lem:concentrate-2}) under the samples collected with Option II.
\begin{lemma}\label{lem:concentrate-V}
Under Assumption \ref{as:realizable} and \ref{as:complete}, if we choose $\beta=c\cdot(\log(KH|\Fcal||\Gcal|/\delta)+K\CompErr^2+K\RealErr^2)$ with some large absolute constant $c$ in Algorithm \ref{alg:golfmg}, then
		with probability at least $1-\delta$, for all $(k,h)\in[K]\times[H]$, we have 
		\begin{enumerate}[label=(\alph*)]
			\item $\sum_{i=1}^{k-1}  \E [ \paren{(f_h^k- \cT_h f_{h+1}^k)(s_h,a,b) }^2\mid s_h \sim \pi^i,(a,b) \sim \unif(\Acal \times \Bcal)]
			{\le} \mathcal{O} ( \beta)$,
			\item $\sum_{i=1}^{k-1}\sum_{(a,b)\in \Acal \times \Bcal}  \paren{(f^k_h - \cT_h f_{h+1}^k)(s_h^i,a,b) }^2
			{\le} \mathcal{O} (|\Acal||\Bcal| \beta)$,
			\item $\pf(Q^\star)\in \Ccal^k$,
		\end{enumerate}
		where $s_h^i$ denotes the state at step h collected following $\pi^i$ until step $h$ in the $i^{\rm th}$ outer iteration. 
	\end{lemma}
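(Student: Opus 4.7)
The plan is to mirror the structure of Lemmas \ref{lem:concentrate-1}, \ref{lem:concentrate-2}, and \ref{lem:concentrate-routine-V}: establish Freedman-based concentration of the empirical squared loss against the population Bellman error, then combine with the soft constraint defining $\Ccal^k$ (for parts (a) and (b)) or with realizability (for part (c)). The only substantive differences from the Option~I counterparts (Lemmas \ref{lem:concentrate-1}--\ref{lem:concentrate-2}) are (i) that the sampling distribution of the transition tuple in $\Dcal_h$ puts $s_h\sim \pi^i$ but $(a_h,b_h)\sim\unif(\Acal\times\Bcal)$ independently, and (ii) that we are using the Nash Bellman operator $\cT_h$ (not $\cT_h^{\mu^k}$), so the relevant population quantity is the squared Bellman error under these factored sampling distributions.

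For parts (a) and (b), I would fix a tuple $(k,h,f)\in[K]\times[H]\times\Fcal$ and define the martingale differences
\[
X_t(h,f):=\bigl(f_h(z_h^t)-r_h^t-V_{f,h+1}(s_{h+1}^t)\bigr)^2-\bigl(\pg(\cT_h f_{h+1})(z_h^t)-r_h^t-V_{f,h+1}(s_{h+1}^t)\bigr)^2,
\]
exactly as in Appendix \ref{sec:concentrate-1}, but with two different choices of filtration depending on the desired conclusion. For part (a), let $\Ffrak_{t,h}$ be induced by the previous $t-1$ complete trajectories only; then $\E[X_t(h,f)\mid \Ffrak_{t,h}]$ equals the expectation of $(f_h-\cT_h f_{h+1})^2(s_h,a,b)$ taken under $s_h\sim\pi^t$ and $(a,b)\sim\unif(\Acal\times\Bcal)$, minus a $\CompErr^2$ miss from $\pg(\cT_h f_{h+1})$. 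For part (b), let $\Ffrak_{t,h}$ further include $s_h^t$; then $\E[X_t(h,f)\mid \Ffrak_{t,h}]=\frac{1}{|\Acal||\Bcal|}\sum_{(a,b)}(f_h-\cT_h f_{h+1})^2(s_h^t,a,b)$ up to $\CompErr^2$, which is exactly what produces the $|\Acal||\Bcal|$ factor in part (b). In each case the conditional variance is bounded by $\Ocal$ of the corresponding conditional second moment plus $\CompErr^2$, so Freedman's inequality combined with a union bound over $(k,h,f)$ and the soft constraint $\sum_t X_t(h,f^k)\le \beta$ (which follows from the definition of $\Ccal^k$ applied to $f=f^k$ and the completeness assumption, as in equation \eqref{eq:Jan25-1}) yields the desired bounds upon our choice of $\beta$.

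For part (c), I would repeat the proof of Lemma \ref{lem:concentrate-2-routine} (which in turn is the template of Lemma \ref{lem:concentrate-2}), but replacing $Q^{\mu^k,\dagger}$ with $Q^\star$ and $\cT_h^{\mu_f}$ with the Nash Bellman operator $\cT_h$. Set $\hat Q=\pf(Q^\star)$ and, for $g\in\Gcal$, let $W_t(h,g):=(g_h(z_h^t)-r_h^t-V_{\hat Q,h+1}(s_{h+1}^t))^2-(\hat Q_h(z_h^t)-r_h^t-V_{\hat Q,h+1}(s_{h+1}^t))^2$ with the filtration $\Ffrak_{t,h}$ from part (b). Realizability gives $\|Q_h^\star-\hat Q_h\|_\infty\le \RealErr$, so $\E[W_t(h,g)\mid \Ffrak_{t,h}]\ge \E[(g_h-\cT_h\hat Q_{h+1})^2(s_h^t,a,b)\mid\unif]-\RealErr^2$, and the variance is controlled as in Lemma \ref{lem:concentrate-2}. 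Applying Freedman plus a union bound over $\Gcal$ and nonnegativity of the squared Bellman error yields $\sum_t W_t(h,g)\ge-\Ocal(\iota+k\RealErr^2)$; specialising to $g=\hat Q$'s competitor verifies $\Lcal_{\Dcal_h}(\hat Q_h,\hat Q_{h+1})\le \inf_{g\in\Gcal_h}\Lcal_{\Dcal_h}(g,\hat Q_{h+1})+\beta$ for our choice of $\beta$, i.e., $\pf(Q^\star)\in\Ccal^k$.

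The only genuinely new element compared with the earlier concentration lemmas is the careful bookkeeping of what information is in the filtration for part (b): by putting $s_h^t$ into $\Ffrak_{t,h}$ and exploiting the fact that $(a_h^t,b_h^t)$ is uniform given this, one converts the expectation over actions into an exact average over $\Acal\times\Bcal$, which is what delivers the $|\Acal||\Bcal|$ inflation relative to Option~I. I expect this bookkeeping, together with verifying that the same soft constraint from the definition of $\Ccal^k$ remains applicable after the change in sampling distribution (which it does, since $\Ccal^k$ is defined by the empirical loss $\Lcal_{\Dcal_h}$ and is agnostic to how $\Dcal_h$ was collected), to be the main conceptual step; everything else is mechanical reuse of the Freedman argument from Appendix \ref{app:typeI-golf}.
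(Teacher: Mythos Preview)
Your proposal is correct and follows essentially the same approach as the paper. The paper's own proof is terse: it simply states that for (a) one redefines $\Ffrak_{t,h}$ to be generated by the previous $t-1$ episodes' data, for (b) one additionally includes $s_h^t$ in the filtration, and then repeats the Freedman argument of Appendix~\ref{sec:concentrate-1} verbatim; for (c) it points to the proof of Lemma~\ref{lem:concentrate-2}. This is exactly your plan, and you have correctly identified the one new wrinkle---that conditioning on $s_h^t$ while leaving $(a_h^t,b_h^t)\sim\unif(\Acal\times\Bcal)$ converts the conditional mean into $\tfrac{1}{|\Acal||\Bcal|}\sum_{(a,b)}(f_h-\cT_h f_{h+1})^2(s_h^t,a,b)$, which is what generates the $|\Acal||\Bcal|$ factor in part (b).
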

\begin{proof}
	To prove (a), we only need to redefine the filtration  $\Ffrak_{t,h}$ in Appendix \ref{sec:concentrate-1} to be the filtration induced by  
$\{z_1^i,r_1^i,\ldots,z_H^i,r_H^i\}_{i=1}^{t-1}$ where $z_h^i=(s_h^i,a_h^i,b_h^i)$, and repeat the arguments therein verbatim. Similarly, for (b), we only need to redefine $\Ffrak_{t,h}$ in Appendix \ref{sec:concentrate-1} to be the filtration induced by  
$\{z_1^i,r_1^i,\ldots,z_H^i,r_H^i\}_{i=1}^{t-1}\bigcup\{z_1^t,r_1^t,\ldots,z_{h-1}^t,r_{h-1}^t,s_{h}^t\}$. And the proof of (c) is the same as that of Lemma \ref{lem:concentrate-2-routine} in Appendix \ref{sec:concentrate-2}.
\end{proof}

\begin{proof}[Proof of Theorem~\ref{thm:golf-mg-regret-V}]

The regret can be decomposed into two terms
\begin{align*}
\sum_{k=1}^K \left(V^{\star}_1(s_1) - V_1^{\mu^k,\dagger}(s_1)\right)
= \underset{\left( A \right)}{\underbrace{\sum_{k=1}^K \left(V^{\star}_1(s_1) - V_1^{\mu^k,\nu^k}(s_1) \right)}} + \underset{\left( B \right)}{\underbrace{\sum_{k=1}^K \left(V^{\mu^k,\nu^k}_1(s_1) -  V^{\mu^k,\dagger}_1(s_1)\right)}}.
\end{align*}

By Proposition \ref{prop:routine-type2}, $(B)$ can be upper bounded by with high probability,
	\begin{equation*}
		\sum_{k=1}^K \left(V^{\mu^k,\nu^k}_1(s_1) -  V^{\mu^k,\dagger}_1(s_1)\right) \le \Ocal \left(H\sqrt{|\Acal||\Bcal|K\beta \cdot \dim_{\textrm{VBE}}(\Fcal, 1/K)}\right).
	\end{equation*}

So it remains to control $(A)$. By minicking the proof of Theorem~\ref{thm:golf-online}, we have 
\begin{align*}
	&\sum_{k=1}^K [V^{\star}_1(s_1) - V_1^{\mu^k,\nu^k}(s_1)]\\
\le &	\sum_{k=1} ^K [V_{f^k,1}(s_1) - V_1^{\mu^k,\nu^k}(s_1)] +K\RealErr \\
= & \sum_{k=1} ^K\sum_{h=1}^H
\E_{\pi^k}\left[V_{f^k,h}(s_h) - 
r_h(s_h,a_h,b_h) - V_{f^k,h+1}(s_{h+1})\right]+K\RealErr\\
= & \sum_{k=1} ^K\sum_{h=1}^H
\E_{\pi^k}\left[\min_{\nu}\D_{\mu^k_h \times \nu} f_h^k(s_h) - 
\Tcal_h f_{h+1}^k(s_h,a_h,b_h) \right]+K\RealErr\\
\le & \sum_{k=1} ^K\sum_{h=1}^H
\E_{\pi^k}\left[f_h^k(s_h,a_h,b_h) - 
\Tcal_h f_{h+1}^k(s_h,a_h,b_h) \right]+K\RealErr\\
= & \sum_{k=1} ^K\sum_{h=1}^H
\E_{\pi^k}\left[(f_h^k-\Tcal_h f_{h+1}^k)(s_h,a,b) |(a,b) \sim \mu_h^k \times \nu_{f^k,\tilde f^k,h} \right]+K\RealErr\\
\le & \sum_{h=1}^H \sum_{k=1}^K \E\left[(f_h^k-\Tcal_h f_{h+1}^k)(s^k_h,a,b) |(a,b) \sim \mu_h^k \times \nu_{f^k,\tilde f^k,h} \right]+\Ocal\left(\sqrt{KH\log(KH/\delta)}\right)+K\RealErr,
\end{align*}
where the expectation in the second last line is over $s_h\sim\pi^k$ and $(a,b)\sim \mu_h^k \times \nu_{f^k,\tilde{f}^k,h}$, 
the expectation in the last line is only over $(a,b)\sim \mu_h^k \times \nu_{f^k,\tilde{f}^k,h}$ conditioning on $s_h$ being fixed as $s_h^k$, and the definition of $\nu_{f,g}$ is introduced in Section \ref{sec:main-results}.		

By Jensen's inequality and Lemma \ref{lem:concentrate-V} (b), we have for all $k\in[K]$
		\begin{align*}
			&\sum_{t=1}^k \left(\E\left[(f_h^k-\Tcal_h f_{h+1}^k)(s_h^t,a,b) ~|~(a,b) \sim \mu_h^k \times \nu_{f^k,\tilde f^k,h} \right]\right)^2\\
			\le &\sum_{t=1}^k \E\left[\left(f_h^k-\Tcal_h f_{h+1}^k\right)^2(s_h^t,a,b) ~|~(a,b) \sim \mu_h^k \times\nu_{f^k,\tilde f^k,h} \right]\\
			\le & \sum_{t=1}^k \sum_{(a,b)\in \Acal \times \Bcal} (f_h^k-\Tcal_h f_{h+1}^k)^2(s_h^t,a,b)
			\le  \mathcal{O} \left( |\Acal||\Bcal|\beta\right).
		\end{align*}

		So we can apply Lemma \ref{lem:de-regret} with $\Gcal = \cH_{\cF,h}$ (here, $\cH_\Fcal$ refers to the one introduced in Definition \ref{def:vbedim}), $\Pi = \Dcal_{\Delta,h}$, $\epsilon  = 1/K$ and obtain
		\begin{align*}
			&\sum_{k=1}^K \left|\E\left[(f_h^k-\Tcal_h f_{h+1}^k)(s^k_h,a,b) ~|~(a,b) \sim \mu_h^k \times \nu_{f^k,\tilde f^k,h} \right]\right|\\
			\le &\sqrt{|\Acal||\Bcal|\beta \cdot \dim_{\textrm{DE}}(\cH_{\cF,h},\Dcal_{\Delta,h},1/K) \cdot K}.
		\end{align*}

		Similarly by using Lemma \ref{lem:concentrate-V} (a), we can show 
		\begin{align*}
	&		\sum_{k=1}^K \left| \E_{\pi^k}\left[(f_h^k-\Tcal_h f_{h+1}^k)(s_h,a,b) ~|~(a,b) \sim \mu_h^k \times \nu_{f^k,\tilde f^k,h} \right]\right|\\
			\le &\sqrt{|\Acal||\Bcal|\beta \cdot \dim_{\textrm{DE}}(\cH_{\cF,h},\Dcal_{\Fcal,h},1/K) \cdot K}.
		\end{align*}
		
		Putting all relations together completes the proof.
\end{proof}

By an standard online-to-batch reduction, we can also prove the sample complexity guarantee.

\begin{proof}[Proof of Corollary~\ref{cor:golf-mg-pac-V}]
	We proceed as in the proof of Theorem~\ref{thm:golf-mg-regret-V} but take $\omega = \epsilon/H$ (instead of $\omega = 1/K$) every time we incur Lemma~\ref{lem:de-regret}.
	
	Theorem~\ref{thm:golf-mg-regret-V} essentially shows
$$
\sum_{k=1}^K \left(V^{\star}_1(s_1) - V_1^{\mu^k,\nu^k}(s_1)\right) \le \sum_{k=1}^K \left(V_{f^k,1}(s_1) - V_1^{\mu^k,\nu^k}(s_1)\right)
\le \Ocal(  H\sqrt{K\beta\cdot |\Acal||\Bcal|\dim_{\textrm{VBE}}(\Fcal, \epsilon/H)}+K\epsilon).
$$

Proposition~\ref{prop:routine-type2} essentially shows
\begin{align*}
	&\sum_{k=1}^K \left(V^{\mu^k,\nu^k}_1(s_1) -  V^{\mu^k,\dagger}_1(s_1)\right) \\
\le &\sum_{k=1}^K \left(V^{\mu^k,\nu^k}_1(s_1) -  \min_{\nu}\D_{\mu^k_{1}\times \nu} \tilde{f}_{1}^k(s_1)\right)\\
\le & \Ocal(  H\sqrt{K\beta\cdot |\Acal||\Bcal|\dim_{\textrm{VBE}}(\Fcal, \epsilon/H)}+K\epsilon).
\end{align*}
	
Comparing with the form in Theorem~\ref{thm:golf-mg-regret}, now we have an additional $\Ocal(K\epsilon)$ term, since we take $\omega = \epsilon/H$ when incurring Lemma~\ref{lem:de-regret}.
	
Putting them together and noticing by definition $\up{V}^k = V_{f^k,1}(s_1)$ and $\low{V}^k = \min_{\nu}\D_{\mu^k_{1}\times \nu} \tilde{f}_{1}^k(s_1)$, we can get 
$$
\frac{1}{K}\sum_{k=1}^K[\up{V}^k-\low{V}^k] \le \mathcal{O}(\frac{H}{K}\sqrt{|\Acal||\Bcal|\dim_{\textrm{VBE}}(\Fcal, \epsilon/H)K\beta}+\epsilon),
$$
with probability at least $1-\delta$, where $\beta=c\cdot(\log(KH|\Fcal||\Gcal|/\delta)+K\CompErr^2+K\RealErr^2)$.

By pigeonhole prinple, there must exist some $k$ s.t. 
$$\up{V}^k-\low{V}^k \le \Delta = c'(H\sqrt{|\Acal||\Bcal|\dim_{\textrm{VBE}}(\Fcal, \epsilon/H)\beta/K}+\epsilon).$$ Therefore, the output condition must be satisfied by some $k \in [K]$.

To make the right hand side order $\Ocal(\epsilon+ H\sqrt{d}(\RealErr+\CompErr) )$, it suffices to take 
$$
K \ge \Omega\big( (H^2d/\epsilon^2)\cdot\log(H|\Fcal||\Gcal|d/\epsilon)\big),
$$
where $d=\dim_{\textrm{VBE}}(\Fcal, \epsilon/H)$.
\end{proof}

\end{document}